\documentclass{article}

\usepackage{iclr2026_workshop,times}

\usepackage{graphicx} %
\usepackage{subcaption}
\usepackage{xcolor}
\usepackage[colorlinks=true, citecolor=darkblue, linkcolor=darkblue, urlcolor=blue]{hyperref}

\usepackage{algorithm}
\usepackage{algpseudocode}
\usepackage{thmtools}
\usepackage{thm-restate}

\usepackage{url}
\usepackage{paralist}
\usepackage{amsthm}
\usepackage{amsmath,amsfonts,bm}

\usepackage{cleveref}
\usepackage{comment}
\usepackage{todonotes}
\usepackage{placeins}

\definecolor{darkblue}{RGB}{0,0,139} %

\def\rvrho{{\bm{\rho}}}
\def\rvalpha{{\bm{\alpha}}}
\Crefname{equation}{Eq.}{Eqs.}
\Crefname{figure}{Fig.}{Figs.}
\Crefname{appendix}{App.}{App.}
\Crefname{section}{Sec.}{Sec.}
\usepackage{xcolor}
\newcommand{\rebuttal}[1]{\textcolor{black}{#1}}

\newtheorem{theorem}{Theorem}
\newtheorem*{theorem*}{Theorem}
\newtheorem{lemma}{Lemma}

\def\eqref#1{equation~\ref{#1}}

\def\1{\bm{1}}

\def\rvt{{\mathbf{t}}}

\def\rvy{{\mathbf{y}}}

\def\rmS{{\mathbf{S}}}

\def\rmV{{\mathbf{V}}}

\def\rmX{{\mathbf{X}}}

\DeclareMathAlphabet{\mathsfit}{\encodingdefault}{\sfdefault}{m}{sl}
\SetMathAlphabet{\mathsfit}{bold}{\encodingdefault}{\sfdefault}{bx}{n}

\DeclareMathOperator*{\argmin}{arg\,min}

\title{Exact Certification of Neural Networks and Partition Aggregation Ensembles Against \\ Label Poisoning}
\author{
\textbf{Ajinkya Mohgaonkar$^{1}$}, 
\quad \textbf{Lukas Gosch$^{1,2,3}$}, 
\quad \textbf{Mahalakshmi Sabanayagam$^{1,4}$}, 
\\
\ \textbf{Debarghya Ghoshdastidar$^{1}$,}
\quad \textbf{Stephan G\"unnemann$^{1,2,3}$} \\
\ $^1$ School of Computation, Information and Technology,
$^2$ Munich Data Science Institute \\ \ Technical University of Munich, Germany, 
$^3$ Munich Center for Machine Learning, Germany\\
$^4$ Australian Institute for Machine Learning, Adelaide University, Australia
}

\iclrfinalcopy %
\begin{document}

\maketitle

\begin{abstract}

Label-flipping attacks, which corrupt training labels to induce misclassifications at inference, remain a major threat to supervised learning models. This drives the need for robustness certificates that provide formal guarantees about a model's robustness under adversarially corrupted labels. Existing certification frameworks rely on ensemble techniques such as smoothing or partition-aggregation, but treat the corresponding base classifiers as black boxes—yielding overly conservative guarantees. We introduce \textbf{EnsembleCert}, the first certification framework for partition-aggregation ensembles that utilizes white-box knowledge of the base classifiers. Concretely, EnsembleCert yields tighter guarantees than black-box approaches by aggregating per-partition white-box certificates to compute ensemble-level guarantees in \emph{polynomial} time.  To extract white-box knowledge from the base classifiers efficiently, we develop \textbf{ScaLabelCert}, a method that leverages the equivalence between sufficiently wide neural networks and kernel methods using the neural tangent kernel. ScaLabelCert yields the \emph{first} \textbf{exact}, \textbf{polynomial-time} calculable certificate for neural networks against label-flipping attacks. EnsembleCert is either on par, or significantly outperforms the existing partition-based black box certificates. Exemplary, on CIFAR-10, our method can certify upto $\mathbf{+26.5\%}$ more label flips in median over the test set compared to the existing black-box approach while requiring $\mathbf{100 \times}$  fewer partitions, thus, challenging the prevailing notion that heavy partitioning is a necessity for strong certified robustness.

\end{abstract}
\section{Introduction}
\label{ss:intro}
Machine learning models, especially those trained in supervised settings, are critically dependent on the integrity of labeled data. This reliance exposes them to \emph{label-flipping attacks}, where the training labels are corrupted to degrade model performance, or induce targeted misclassifications \citep{biggio2011svm, xiao2015adversarial}. 
In response, a range of empirical defenses have been proposed, including data sanitization techniques that aim to identify and remove poisoned samples prior to training \citep{paudice2018label}, and adversarial training methods that improve robustness by learning on perturbed examples \citep{bal2025adversarialtrainingdefenselabel}. However, these approaches often rely on heuristics and have been shown to fail under adaptive attacks \citep{carlini2017towards, athalye2018obfuscated,koh2021stronger}. This limitation has led to growing interest in \emph{robustness certificates}, that provide formal guarantees about the robustness of a model’s predictions under a given adversarial threat model. 
\begin{figure}[htbp]
    \centering
    \begin{subfigure}[t]{0.55\textwidth}
        \centering
        \includegraphics[width=0.85\linewidth]{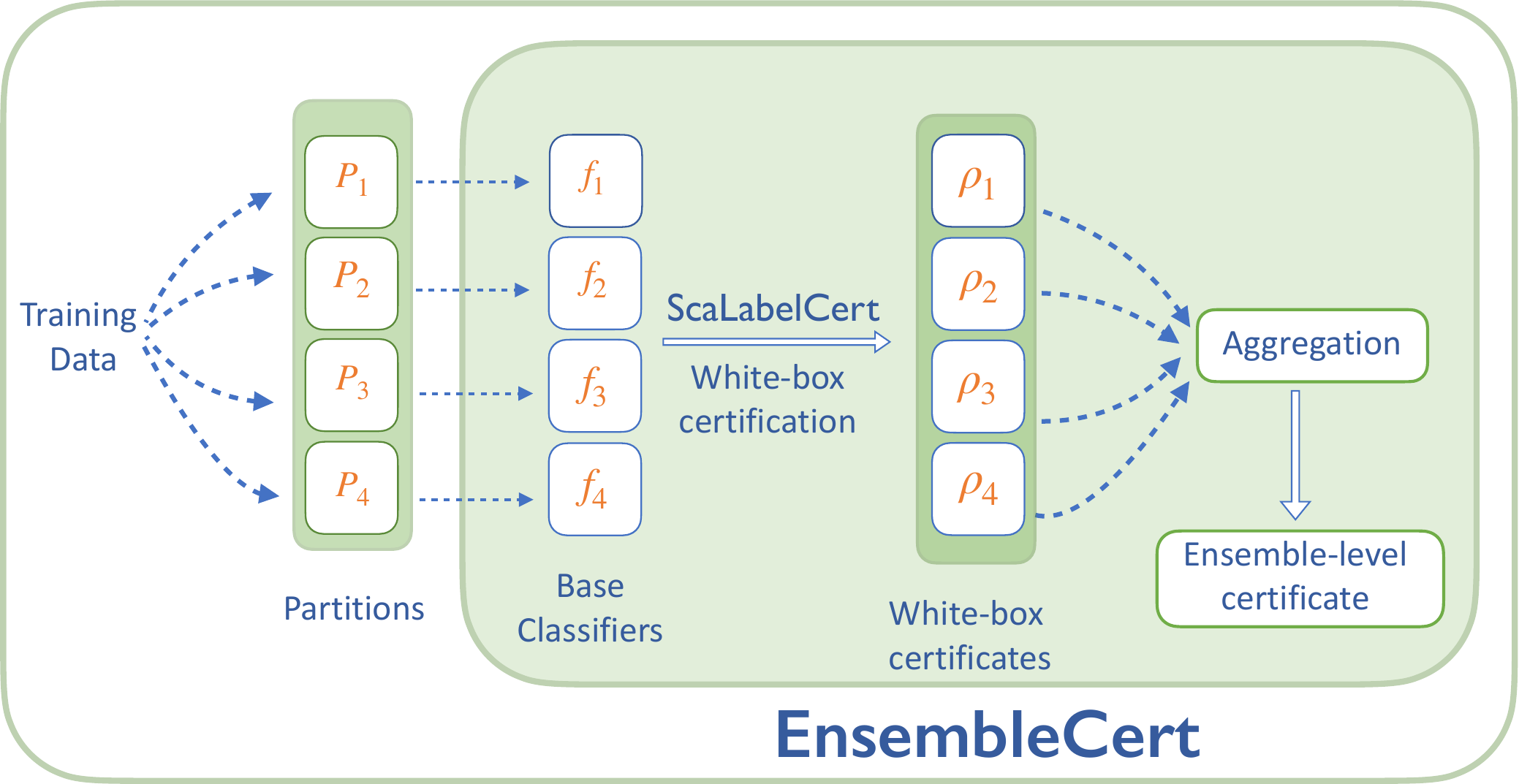}

        \caption{Two-step approach of \textbf{EnsembleCert}.}
        \label{fig:ensembleCert}
    \end{subfigure}%
    \hfill
    \begin{subfigure}[t]{0.36\textwidth}
        \centering
        \includegraphics[width=0.9\linewidth]{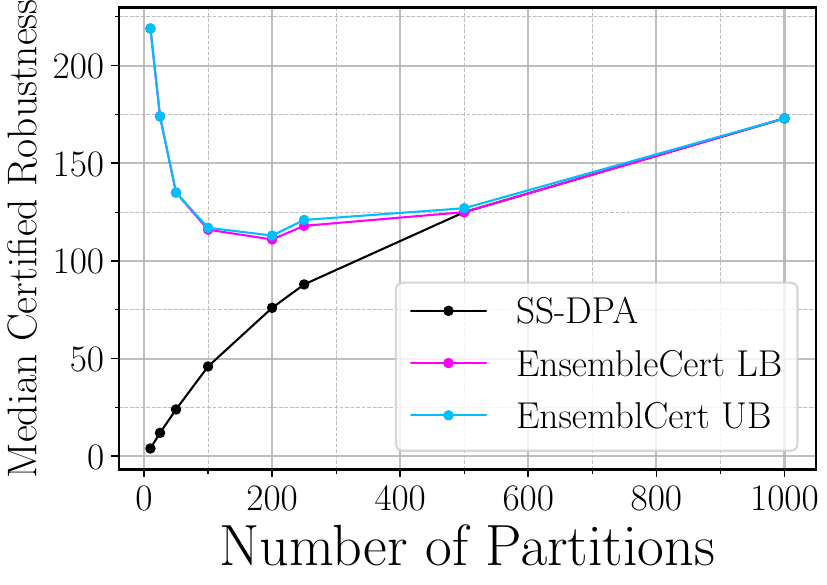}
        \caption{EnsembleCert on CIFAR-10.}
        \label{fig:cifar10_results}
    \end{subfigure}
    \vspace{-0.15cm}
    \caption{(a) Two-step approach of \textbf{EnsembleCert} to derive white-box guarantees for partition aggregation ensembles.
    (b) Evaluation %
    on CIFAR-10 using wide neural networks %
    trained on a regression loss as base classifiers. 
    Using as few as $\mathbf{10}$ partitions with white-box knowledge enables the ensemble to withstand up to $\mathbf{26.5\%}$ more label flips in median compared to using $\mathbf{1000}$ partitions. For a definition of the metric \emph{median certified robustness} we refer the reader to \Cref{ss:enr}.} 
    \label{fig:placeholder}
    \vspace{-0.7cm}
\end{figure}

Existing certificates against label-flipping poisoning attacks are predominantly derived using \emph{ensemble} methods. %
Techniques include randomized smoothing \citep{rosenfeld2020certifiedArXiv}, where base classifiers are trained on datasets with randomly perturbed labels, %
and partition aggregation \citep{levine2020deep}, which trains base classifiers %
on \emph{disjoint} partitions of the training data. Since these certificates rely solely on the base classifier outputs, they are inherently black-box \citep{ashtiani2020black}. 
Black-box treatment of the base classifiers often leads to overly conservative guarantees and provides limited knowledge  
about the full extent of the ensemble’s robustness. 
One way to understand the true robustness of the certified model is to utilize \textit{white-box} information of the base classifiers, i.e., white-box certificates, that leverage internal model information to yield tighter and more informative guarantees.
This raises the question: 
\emph{How can we leverage white-box knowledge of the base classifiers to derive a stronger  certificate for the ensemble?}

In this work, we answer this question by proposing \textbf{EnsembleCert}, a white-box certification framework for \emph{partition-based aggregation} ensembling techniques \citep{levine2020deep}. 
We focus specifically on the partition-based approach since they are the current state-of-the-art certifiable defense against general data poisoning attacks \citep{levine2020deep,rezaei2023run,wang2022finite}, including label-flipping. Additionally, neural networks can be used as base classifiers in this approach, as opposed to only linear classifiers in the randomized smoothing method \citep{rosenfeld2020certifiedArXiv}.
EnsembleCert yields tighter white-box guarantees by leveraging the model information of the base classifiers following a simple two-step approach:
$(i)$ Extract white-box certificates from the base classifiers for each partition; $(ii)$ Aggregate the white-box certificates to derive an ensemble-wide certificate (see \Cref{fig:ensembleCert}). The problem of aggregating the partition-wise guarantees to obtain the certificate for the ensemble is formulated as an Integer Program (IP), which we show can be solved efficiently in \emph{polynomial} time.

\rebuttal{Thus, given a base model and a certification method for extracting white-box knowledge from the chosen base model, EnsembleCert aggregates the white-box knowledge of base classifiers to achieve ensemble-wide guarantees. In this work, we focus on deriving ensemble-level guarantees when neural networks are chosen as the base model.  For certifying neural networks as base models in EnsembleCert}, existing white-box approaches face significant  challenges as they either rely on computationally intense Mixed Integer Linear Program (MILP) formulation  \citep{sabanayagam2025exactcertificationgraphneural} or loose gradient-based parameter bounding approaches \citep{sosnin2024certified}. To elaborate, solving the MILP is NP-hard in the worst case, hence LabelCert \citep{sabanayagam2025exactcertificationgraphneural} is practical only for datasets with a few hundred training points and does not even scale to moderately sized datasets like MNIST and CIFAR-10. On the other hand, the parameter-bounding technique \citep{sosnin2024certified} provides overly loose guarantees, leading to vacuous bounds in just few training iterations,  especially for multi-class classification tasks.
Furthermore, the latter method has so far been evaluated only on small multi-layer perceptrons. These limitations make the existing methods unsuitable for white-box injection into EnsembleCert, naturally raising a broader question: \emph{ Can we derive effective and scalable white-box certificates \rebuttal{for neural networks} against label-flipping attacks?}

We answer this question by developing \textbf{ScaLabelCert}, a framework that builds on the \emph{exact} white-box method of {LabelCert} \citep{sabanayagam2025exactcertificationgraphneural}. \rebuttal{LabelCert derives the first exact certificate for neural networks against data poisoning} by leveraging the equivalence between infinitely wide Neural Networks (NNs) trained with a soft-margin loss and Support Vector Machines (SVM) using the Neural Tangent Kernel (NTK) of the network as their kernel \citep{chen2022equivalenceneuralnetworksupport,sabanayagam2023analysis}. 
ScaLabelCert shows that under certain conditions, the computation complexity of LabelCert can be reduced from NP-hard to polynomial time, thus, significantly improving the scalability.
Beyond the SVM formulation, ScaLabelCert further extends LabelCert by leveraging the equivalence between infinitely-wide NNs trained with a regression loss and kernel regression under the NTK \citep{jacot2018neural,arora2019cntk}.  %
With its ability to efficiently compute tight certificates, we adopt ScaLabelCert as our primary mechanism for injecting white-box knowledge into EnsembleCert. \rebuttal{The tightness of the resulting partition-wise guarantees reveals the full potential of EnsembleCert and enables a reliable analysis of how partitioning contributes to robustness. Since ScaLabelCert is best suited for certifying infinite-width neural networks (see \Cref{ss:discussion}), we use this instantiation for our primary evaluation.
To demonstrate the applicability of EnsembleCert with finite-width networks as base classifiers, we employ the gradient-based parameter bounding approach of \citep{sosnin2024certified} for base-classifier certification. We detail the process of integrating the gradient-based certificate into EnsembleCert and present the evaluation in \Cref{ss:agt_ec}.
Finally, to highlight that EnsembleCert is not restricted to neural networks, we also instantiate it with a smoothed linear classifier as the base model and apply randomized smoothing certificates \citep{rosenfeld2020certifiedArXiv} to each base classifier.}
\textbf{Our contributions} are summarized as follows:

    $\mathbf{1.}$ We present \textbf{EnsembleCert} in \Cref{ss:ensemblecert}, the \textit{first} white-box certification framework for partition aggregation ensembles that leverages the knowledge about base-classifiers to provide white-box informed certificates for the ensemble in \textbf{polynomial} time. \rebuttal{In our experimental set-up, we evaluate EnsembleCert with the following choices of base classifiers and corresponding certification methods: $(i)$ Infinite-width neural networks with ScaLabelCert, $(ii)$ Finite-width neural networks with gradient-based parameter bounding certificate by \citet{sosnin2024certified} and $(iii)$ Smoothed linear classifier with randomized smoothing based certificate by \citet{rosenfeld2020certifiedArXiv}}.
    
    $\mathbf{2.}$ With \textbf{ScaLabelCert} in \Cref{ss:scalabelcert}, we derive the \textit{\rebuttal{first polynomial-time solvable exact certificate}} for infinite-width neural networks against label-flipping attacks  and thus, it is the first exact certificate for neural networks against a poisoning threat model that scales to common image benchmarks.
    
    $\mathbf{3.}$ We show in \Cref{ss:enr} that for partition aggregation ensembles with a small number of partitions, the infusion of white-box knowledge results in significant improvement in certified robustness. On analyzing the dependence of certified robustness on the number of partitions, we demonstrate that in certain cases, using as low as $10$ partitions with white-box knowledge results in stronger robustness guarantees in comparison to as high as $1000$ partitions (see \Cref{fig:cifar10_results}). These findings call into question the emphasis on using very large numbers of partitions to achieve good certified robustness \citep{levine2020deep}, suggesting that excessively deep partitioning, which requires training a prohibitively large number of neural networks, is not a necessity to yield strong guarantees.

\textbf{Paper organization.} We begin with preliminaries in \Cref{ss:prelims}. In \Cref{ss:ensemblecert} and \Cref{ss:scalabelcert}, we introduce our main methods, EnsembleCert and ScaLabelCert, and present their technical details. Experimental results and insights derived are provided in \Cref{ss:enr}, followed by a discussion in \Cref{ss:discussion} and conclusion in \Cref{ss:conclusion}. Related work and an extended discussion are deferred to the appendix \Cref{ss:ext_dis}.
\vspace{-0.2cm}
\section{Preliminaries}
\vspace{-0.15cm}
\label{ss:prelims}
\textbf{Notation.}
Matrices are denoted by bold uppercase letters, $\mathbf{M}$, and vectors by bold lowercase letters, $\mathbf{v}$. The $(i,j)$-th entry of a matrix $\mathbf{M}$ is denoted $m_i^j$. For a positive integer $C$, we write $[C] = \{1, \ldots, C\}$. The $\ell_0$ norm is denoted by $\|\cdot\|_0$, and $\mathbf{1}_{\text{condition}}$ represents the indicator function of a given condition. We use $\mathbf{1}^n$ for a vector of all $1$s of size $n$.  The floor operator is denoted by $\lfloor \cdot \rfloor$.

\textbf{Label-flipping and Certification.} %
In a supervised classification task, the training data $\mathcal{S} = (\rmX, \rvy)$ consists of feature vectors aggregated in $\rmX \in \mathbb{R}^{n \times d}$ and labels $\rvy \in [K]^n$, where $K$ is the number of classes. A learning algorithm $\mathcal{L}_{\text{alg}}$ takes the training set $\mathcal{S}$ and a test sample $\rvt \in \mathcal{T}$, where $\mathcal{T}$ is the test set, as input to predict the label for $\rvt$, i.e., $\mathcal{L}_{\text{alg}}(\mathcal{S}, \rvt) \in [K]$.
In a label-flipping attack, we assume that the adversary is allowed to change at most $r \leq n$  training labels. %
Formally, an adversary can alter the clean labels $\rvy$   to $\tilde{\rvy} \in \mathcal{B}_r(\rvy) := \left\{ \tilde{\rvy} \in [K]^n \;\big|\; \| \tilde{\rvy} - \rvy \|_0 \leq r \right\}$ %
and get a perturbed training set $\mathcal{\tilde{S}} = (\rmX,\tilde{\rvy})$. 
As the certification objective, for every $\rvt \in \mathcal{T}$, we aim to find the maximum number of label flips $\tilde{r}$ in the clean training data up to which the prediction of $\mathcal{L}_{\text{alg}}$ for $\rvt$ does not change, i.e. %

\vspace{-0.2cm}
\[
\tilde{r}(\rvt) = \max_{\mathcal{\tilde{S}}}\ r \quad s.t. \quad \; \mathcal{L}_{\text{alg}}(\mathcal{S},\rvt)= \mathcal{L}_{\text{alg}}(\mathcal{\tilde{S}},\rvt) \; \; \forall
\mathcal{\tilde{S}} \in \{\mathcal{S'}\ |\ \rvy' \in \mathcal{B}_r(\rvy))\}.\]
We will refer to $\tilde{r}(\rvt)$ as the certified radius for  $\rvt$. A point-wise certificate then would be a lower bound on the certified radius for a particular sample. The  certificate is \textit{exact} if it gives the true certified radius $\tilde{r}(\rvt)$ rather than just a lower bound.

\textbf{Semi-Supervised Deep Partition Aggregation (SS-DPA)}.  \citet{levine2020deep} introduce SS-DPA, a framework that builds a certified defense against label-flipping poisoning attacks. The framework certifies a partition aggregation ensemble $g_{\mathcal{S}}$, i.e , an ensemble consisting of $N_p$ base classifiers $f_{\{1,...,N_p\}}$ trained on disjoint partitions $P_{\{1,...,N_p\}}$ of the training data $\mathcal{S}$.  The motivation behind training on disjoint partitions is simple: Poisoning one label in the training data affects the prediction of only one of the base-classifiers. The training data $\mathcal{S}$ is first sorted without using the labels and then partitioned based on the sorted order. This ensures that the partitioning is invariant to any label poisoning attack. As the unlabeled data is trustworthy, we can make use of a self-supervised learning algorithm to extract features from the entire unlabeled training data and train each $f_i$ using the extracted features and labels corresponding to $P_i$. At inference time, each base classifier $f_i$, trained on its corresponding partition $P_i$ of $\mathcal{S}$ predicts the class for a given test sample $\rvt \in \mathcal{T}$ as $f_i(\rvt) \in [K]$. The prediction of the ensemble $g_{\mathcal{S}}(\rvt)$ is then determined by a majority vote: $g_{\mathcal{S}}(\rvt) = \arg \max_{c \in [K]} n_c(\rvt)$,  where $n_c(\rvt) := |\{i \in [N_p] \mid f_i(\rvt) = c\}|$  is the number of votes received by class $c$. Ties are resolved deterministically by choosing the smaller index. If we denote $g_{\mathcal{S}}(\rvt)$ as $c^*$,  the certificate  $\tilde{\rho}(\rvt)$ for sample $\rvt$
is given as: 

\vspace{-0.3cm}
\[\tilde{\rho}(\rvt) :=\big \lfloor \frac{n_{c^*}(\rvt) - \max_{c' \neq c^*}(n_{c'}(\rvt) + \mathbf{1}_{c' < c^*})}{2} \big \rfloor. \]
\vspace{-0.3cm}

The above guarantee says that for a poisoned dataset $\mathcal{\tilde{S}}$ obtained by changing the labels of at most $\tilde{\rho}$ samples in $\mathcal{S}$,    $ \; g_\mathcal{\tilde{S}}(\rvt) =  c^* $. As each base classifier is treated as a black-box, the certificate derivation follows from a key worst-case assumption: \textbf{The prediction of a base classifier can be changed by a \emph{single} label flip}. The formal description of the worst case scenario is presented in \Cref{ss:worst_case_scenario}. With \textit{white-box} knowledge about the base classifiers, one can \textit{improve} upon this worst-case assumption, %
leading to a tighter certificate for the ensemble.

\vspace{-0.1cm}

\vspace{-0.05cm}
\section{Methodology: EnsembleCert and ScaLabelCert}

\vspace{-0.1cm}

\subsection{EnsembleCert} \label{ss:ensemblecert}

The underlying worst-case assumption in existing partition aggregation-based certificates, which says that the prediction of a base classifiers can be changed with a single label flip, can be overcome given that we have the following white-box information: for all base classifiers $f_{\{1,...,N_p\}}$ and $\forall c \in [K]$, we have access to $\rho_i^c$, which is the minimum number of label flips in $P_i$ required to change the prediction of the base classifier $f_i$ (trained on $P_i$) to class $c$.
Access to the white-box knowledge through $\rho_i^c$ enables verification of the worst-case assumption and provides the necessary information to derive a tighter ensemble-level certificate.
Note that the ensemble-level certificate $\tilde{\rho}(\rvt)$ that represents the \textbf{maximum number of flips upto which the ensemble prediction for a sample $\rvt$ remains unchanged}, is simply one less than the minimum number of flips required to change the ensemble prediction. To determine the ensemble certificate $\tilde{\rho}(\rvt)$, we first compute, for each class $c$, the least number of flips needed to make $c$ the majority class, and then take the minimum over all classes.

\textbf{Integer Program Formulation for Ensemble-wide Certification}. We denote the problem of finding the minimum number of label flips in the training set required to change the prediction of the ensemble to a particular class $c'$ as $P_1(c')$. 
The white-box information $\rho_i^c $ is collected in  $\rvrho$ $\in \mathbb{R}^{N_p \times K}$. 
Given $\rvrho$, finding the optimal attack for the adversary, which is equivalent to solving $P_1(c')$, poses as a combinatorial optimization problem leading to an Integer Program (IP) formulation of $P_1(c')$. We denote the $i$th base classifier as $f_i$ if trained on the clean data and $\tilde{f_i}$ if trained on the perturbed data. The predictions from $f_{\{1,...,N_p\}}$ and $\tilde{f}_{\{1,...,N_p\}}$ on the sample $\rvt$ are collected in the vote configurations $\rmV$ and
$\tilde{\rmV} \in \mathbb{R}^{N_p \times K} $ respectively:
$\;\ \forall i \in N_p \; ,c \in [K]: \;\; v_i^c = \mathbf{1}\{f_i(\rvt)=c\} \;\ , \;\ \tilde{v}_i^c = \mathbf{1}\{\tilde{f_i}(\rvt)=c\}. \;\ $ Note that $\sum_{c=1}^K v_i^c = 1$ and $\sum_{c=1}^K \tilde{v}_i^c = 1$ for all $i \in [N_p]$.
Concretely, to model $P_1(c')$, the number of label flips required to reach the vote configuration $\tilde{\rmV}$ from $\rmV$ is $\sum_{i=1}^{N_p} \sum_{c=1}^{K} \rho_i^c \, \tilde{v}_i^c$. The constraint that $c'$ should be the majority class after adversarial manipulation of labels can be represented as $\sum_{i=1}^{N_p} \left( \tilde{v}_i^{c'} - \tilde{v}_i^c \right) \geq \mathbf{1}_{c < c'},$ for all $c \ne c'$.
Recollect $\sum_{c=1}^{K} \tilde{v}_i^c = 1, \;\ \forall i \in [N_p]$ should also be satisfied. Thus, this gives the IP formulation of $P_1(c')$:

\vspace{-0.35cm}
\[\begin{aligned} P_1(c'): \quad \min_{\tilde{\rmV}} \sum_{i=1}^{N_p} \sum_{c=1}^{K} \rho_i^c \, \tilde{v}_i^c \quad \text{s.t.} \quad %
&\forall c \ne c': \,\, \sum_{i=1}^{N_p} \left( \tilde{v}_i^{c'} - \tilde{v}_i^c \right) \geq \mathbf{1}_{c < c'}, \\ %
& \forall i \in [N_p], \; \forall c \in [K]: \,\, \sum_{c=1}^{K} \tilde{v}_i^c = 1 , \quad  \tilde{v}_i^c \in \{0, 1\}. %
\end{aligned}
\]

The ensemble-level certificate  $\tilde{\rho}(\rvt)$ for a test sample $\rvt$ can then be derived, as mentioned in \Cref{ss:ensemblecert}, by simply subtracting one from the minimum over $P_1(c)$, that is, $\;\ \tilde{\rho}(\rvt) = \min_{c \in [K]\backslash c^*} P_1(c) \; \ - 1$.

\textbf{Reduction to Polynomial-time}. 
Solving %
$P_1(c)$ in its current form 
is computationally prohibitive, scaling as $\mathcal{O}(2^{N_p \times K})$ in the worst case. Thus, deriving $\tilde{\rvrho}$ is even more expensive, with complexity $\mathcal{O}(K \times 2^{N_p \times K})$. The problem becomes intractable even for small values of $N_p$ and $K$, motivating the need for a %
more tractable alternative. We denote as $P_2(c')$, a relaxation of $P_1(c')$ that finds the minimum number of label flips needed to make  $c'$ surpass \textit{only} $c^*$ (the original majority class) in the number of votes, rather than making $c'$ the overall majority class. The formulation of $P_2(c')$ can be obtained from $P_1(c')$ by relaxing the constraint $\sum_{i=1}^{N_p} ( \tilde{v}_i^{c'} - \tilde{v}_i^c ) \geq \mathbf{1}_{c < c'} \;\ \forall c \ne c'$ to the constraint $( \tilde{v}_i^{c'} - \tilde{v}_i^{c^*} ) \geq \mathbf{1}_{c^* < c'}$. Despite this relaxation, we have the following result proved in \Cref{ss:proof_p2poly}.

\begin{theorem}
[Equivalence between problems $P_1$ and $P_2$]
\begin{equation} %
\boxed {\min_{c \in [K]\backslash c^*} P_1(c) =\min_{c \in [K]\backslash c^*} P_2(c)} \nonumber
\end{equation}
\label{thm:P2poly}
\vspace{-0.2cm}
\end{theorem}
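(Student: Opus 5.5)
Since $P_2(c)$ is obtained from $P_1(c)$ by keeping only the constraint against $c^*$ (read with the sum over $i$, i.e.\ $\sum_{i}(\tilde v_i^{c}-\tilde v_i^{c^*})\ge \mathbf{1}_{c^*<c}$), every feasible point of $P_1(c)$ is feasible for $P_2(c)$. Hence $P_2(c)\le P_1(c)$ for each $c\neq c^*$, and so $\min_c P_2(c)\le \min_c P_1(c)$. The whole work is in the reverse inequality. For it, I take an optimal solution $\tilde{\rmV}$ of $P_2(c')$ for some $c'\neq c^*$ and exhibit a class $\hat c\neq c^*$ together with a configuration feasible for $P_1(\hat c)$ whose cost is no larger. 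The natural choice is to keep $\tilde{\rmV}$ itself and let $\hat c$ be the majority class of $\tilde{\rmV}$, meaning the argmax of the vote counts $\tilde n_c=\sum_i \tilde v_i^c$ with ties broken toward the smaller index, exactly as in the ensemble rule.

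The steps are as follows. (i) Argue that $\hat c\neq c^*$. The $P_2$ constraint says $\tilde n_{c'}\ge \tilde n_{c^*}+\mathbf{1}_{c^*<c'}$. So either $c'$ strictly beats $c^*$, or the two tie and $c'<c^*$, in which case $c'$ wins the tie-break. Either way $c^*$ is not the tie-broken winner. (ii) Check that $\tilde{\rmV}$ is feasible for $P_1(\hat c)$. Being the tie-broken argmax means that for every $c\neq\hat c$ we have $\tilde n_{\hat c}>\tilde n_c$, or $\tilde n_{\hat c}=\tilde n_c$ with $\hat c<c$. This is exactly $\tilde n_{\hat c}-\tilde n_c\ge \mathbf{1}_{c<\hat c}$, and the one-hot row constraints are unchanged. (iii) The objective $\sum_{i,c}\rho_i^c\tilde v_i^c$ does not depend on the target class, so $P_1(\hat c)\le \text{cost}(\tilde{\rmV})=P_2(c')$. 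Taking the minimum over $c'$ gives $\min_c P_1(c)\le \min_c P_2(c)$.

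The main obstacle is step (i): showing that the tie-breaking indicator $\mathbf{1}_{c^*<c'}$ in $P_2$ is exactly what prevents $c^*$ from remaining the tie-broken winner. I would handle it by a two-case split on whether $c'<c^*$. A minor point also needs care: the objective uses $\rho_i^{c}$ with $\rho_i^{f_i(\rvt)}=0$, and the cost is a function of the configuration only. This is what makes the cost transfer across target classes legitimate. I would also state explicitly that feasibility of $P_2$ (e.g.\ for $K\ge 2$, all partitions voting $c'$ is feasible) keeps the minima well defined.
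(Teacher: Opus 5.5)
Your proof is correct and uses the same idea as the paper. Containment of feasible regions gives $P_2(c)\le P_1(c)$ (the paper's Lemma 1.1), and the tie-broken majority class of an optimal $P_2$ configuration yields a feasible point for $P_1$ of that class at the same cost (the core of the paper's Lemma 1.3). Your direct two-inequality sandwich makes the paper's Lemma 1.2 unnecessary. Your step (i) also gives the tie-breaking justification that this majority class differs from $c^*$, which the paper uses in Lemmas 1.2 and 1.3 but leaves implicit.
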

The intuition for the above result is as follows: while trying to make $c'$ surpass $c^*$, if another class $c''$ becomes the majority class, then changing the ensemble prediction to $c''$ should be easier compared to $c'$.  %
This result is particularly important, as we show that $P_2(c)$ can be reduced to an instance of the Multiple Choice Knapsack Problem (MCKP). Since MCKP is solvable in pseudopolynomial-time \citep{dudzinski1987exact}, our approach achieves a complexity of $\mathcal{O}(N_p^2)$ for solving $P_2$ per class (\Cref{ss:proof_p2poly}). Consequently, the  ensemble-wide certificate $\tilde{\rho}(\rvt)$  can be computed in
\emph{polynomial} time by solving $P_2(c)$ for every class and finding the minimum, that is, $\tilde{\rho}(\rvt) = \min_{c \in [K]\backslash c^*} P_2(c) \; \ - 1$.
This represents a substantial improvement over the naive ILP formulation with complexity $\mathcal{O}(K \times 2^{N_p \times K})$. We refer to \Cref{ss:reduction_mckp} for details on the reduction of $P_2(c')$ to MCKP. %

\vspace{-0.15cm}
\subsection{Extraction of White-box Knowledge Through ScaLabelCert} \label{ss:scalabelcert}
The approach of our white-box certificate \emph{ScaLabelCert} 
builds on the framework introduced by 
LabelCert \citep{sabanayagam2025exactcertificationgraphneural}. LabelCert provides an exact certificate that determines whether the model prediction remains unchanged when at most 
$r$ training labels are flipped. This definition of the certificate does not immediately align with the white-box knowledge that EnsembleCert utilizes, which is the minimum number of label flips required to change the prediction of the classifier to a \textit{particular} class. Even more problematic, the computation of the certificate by LabelCert is NP-hard and only scales to a few hundred labeled datapoints. ScaLabelCert makes modifications to the LabelCert approach to address these shortcomings, which result in the computation of an \textit{exact} certificate against label-flipping attacks in \emph{polynomial} time. Next, we provide a brief overview of the approach by LabelCert, and then introduce the developments leading to ScaLabelCert.

\textbf{Infinite-Width Neural Networks and The Equivalence to Kernel Methods.}
The Neural Tangent Kernel (NTK) of a neural network $f_\theta$ between two inputs $i$ and $j$ with features $x_i$ and $x_j$ is defined as $Q_i^j = \mathbb{E}_{\theta}\!\left[ \left\langle \nabla_{\theta} f_{\theta}(x_i),\; \nabla_{\theta} f_{\theta}(x_j) \right\rangle \right]
$, where the expectation is taken over the parameter initialization. When $f_\theta$ is an infinitely wide neural network, the dynamics of training $f_\theta$ for a classification task using a soft-margin loss are the same as those of an SVM with $f_\theta$’s NTK as kernel \citep{chen2022equivalenceneuralnetworksupport}. Similarly, if a regression loss (regularized mean-square) is used, the training dynamics are equivalent to those of kernel regression using $f_\theta$’s NTK as kernel \citep{jacot2018neural}.

\textbf{LabelCert.}  For a test sample $\rvt$, LabelCert computes a point-wise certificate for sufficiently wide neural networks, by deriving a certificate for a kernel SVM with $f_\theta$’s NTK as kernel, which---due to the above equivalence---extends to a certificate for
$f_\theta$. Recall that in the dual formulation of an SVM, the parameters are the dual variables $\rvalpha \in \mathbb{R}^n$ %
derived by solving the following problem:

\[P_{\textbf{svm}}(\rvy)  = 
\min_{\rvalpha} -\sum_{i=1}^{n} \alpha_i + \frac{1}{2} \sum_{i=1}^{n} \sum_{j=1}^{n} \alpha_i \alpha_j y_i y_j Q_i^j \quad \text{s.t.} \quad 0 \le \alpha_i \le C, \; \forall i = [n]
\]

where $n$ is the number of training data, 
$C$ is the regularization parameter that controls the trade-off between maximizing the margin and minimizing classification error, and $Q_i^j$ is the chosen kernel between inputs $i$ and $j$. Let the set of $\rvalpha$ vectors solving $P_{\textbf{svm}}(\rvy)$ be $\mathcal{S}(\rvy)$.
The prediction for a test sample $\rvt$ is given by $p_t = \text{sign(}\sum_{i=1}^{n} \alpha_i \tilde{y}_i Q_t^i)$. 
Let $\hat{p}_t$ be the prediction of the SVM trained using clean labels. The certificate is computed by converting the following problem $P_{\textbf{cert}}(\rvy)$ to a MILP: 

\[
P_{\textbf{cert}}(\rvy) := \min_{\tilde{\rvy}, \rvalpha} \; \text{sign}(\hat{p}_t) \sum_{i=1}^{n} \alpha_i \tilde{y}_i Q_t^i \quad 
\text{s.t.} \quad \tilde{y} \in \mathcal{A}_r(\rvy), \;\; \alpha \in S(\tilde{\rvy})
\]

Whether the model prediction for $\rvt$ is robust up to $r$ label flips or not is determined by the sign of the solution to $P_{\textbf{cert}}(\rvy)$, with a positive sign indicating robustness.

\textbf{SVM Formulation for Sufficiently Small $C$}. 
The complexity of solving $P_{\textbf{cert}}(\rvy)$ comes largely from replacing the inner optimization problem $ \rvalpha \in S(\tilde{\rvy})$ with the KKT (Karush-Kuhn-Tucker) conditions of $P_{\textbf{svm}}(\rvy)$, which can be done as $P_{\textbf{svm}}(\rvy)$ is convex \citep{dempe2012bilevel,sabanayagam2025exactcertificationgraphneural}. We show that on using a sufficiently small $C$, we can entirely forego the inner optimization problem and convert $P_{\textbf{cert}}(\rvy)$ to a simpler, single-level problem based on \Cref{thm:smallC}.

\begin{theorem}
Given a soft margin SVM with regularization $C$, kernel entry between training samples $i$ , $j$ as $Q_i^j$, and $\rvalpha$ being the solution to $P_{\textbf{svm}}(\rvy)$, then if
\[
\max_{i \in [n] } \sum_{j \in [n]} |Q_i^j| \leq \dfrac{1}{C}, \quad
\text{ it follows that } \quad
\forall \rvy \in \{-1,1\}^n:  \;\  \rvalpha = C \cdot \mathbf{1}^n 
\]
\label{thm:smallC}
\end{theorem}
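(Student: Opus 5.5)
The plan is to verify that $\rvalpha = C\cdot\mathbf{1}^n$ satisfies the KKT conditions of the box-constrained convex quadratic program $P_{\textbf{svm}}(\rvy)$. For any fixed $\rvy\in\{-1,1\}^n$, write the objective as $F(\rvalpha) = -\mathbf{1}^\top\rvalpha + \tfrac12 \rvalpha^\top \mathbf{H}\rvalpha$ with $H_i^j = y_i y_j Q_i^j$. Since $\mathbf{H} = \mathrm{diag}(\rvy)\,\mathbf{Q}\,\mathrm{diag}(\rvy)$ is positive semidefinite (the NTK is a valid kernel), $F$ is convex and the feasible set is a box. Hence the KKT conditions are necessary and sufficient for optimality. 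For a box $[0,C]^n$, a point with every coordinate at the upper bound $C$ is optimal if and only if every partial derivative is nonpositive there, i.e. $\partial F/\partial\alpha_i \le 0$ for all $i$. This follows because the multiplier of $\alpha_i\le C$ must equal $-\partial_i F \ge 0$, and the multiplier for $\alpha_i \ge 0$ vanishes by complementary slackness.

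The key computation is the gradient at $\rvalpha = C\mathbf{1}^n$:
\[
\frac{\partial F}{\partial \alpha_i}\Big|_{\rvalpha = C\mathbf{1}^n} = -1 + C\, y_i \sum_{j=1}^n y_j Q_i^j \le -1 + C\sum_{j=1}^n |Q_i^j| \le -1 + C\cdot \tfrac1C = 0,
\]
where we used $|y_iy_j|=1$ and the hypothesis $\max_i\sum_j|Q_i^j|\le 1/C$. This bound holds uniformly over all label vectors $\rvy$, which gives the ``$\forall\rvy$'' part of the statement. It follows that $C\mathbf{1}^n$ is a minimizer of $P_{\textbf{svm}}(\rvy)$ for every $\rvy$.

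The main obstacle is uniqueness, i.e. that the solution set $\mathcal{S}(\rvy)$ consists \emph{only} of $C\mathbf{1}^n$, since the statement reads ``$\rvalpha$ being the solution''. If $\mathbf{Q}$ is positive definite, $F$ is strictly convex and uniqueness is immediate. Otherwise, I would argue via the convex-QP property that all minimizers share the same value of $\mathbf{H}\rvalpha$, hence the same gradient $\nabla F = -\mathbf{1}+\mathbf{H}\rvalpha$. Then take any other minimizer $\rvalpha'$. Convexity gives $\nabla F(C\mathbf{1})^\top(\rvalpha' - C\mathbf{1}) \ge 0$, a sum of terms (nonpositive)$\times$(nonpositive), so every term is forced to satisfy equality consistently. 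If the inequality above is strict (e.g. when $\sum_j |Q_i^j| < 1/C$), the gradient is strictly negative and $\alpha'_i = C$ follows immediately. In the boundary-equality case, I would either note that any minimizer yields the same decision function $\sum_i\alpha_i y_iQ_t^i$ on the relevant span, which suffices for certification, or state the result for strict inequality. I expect the paper's intended argument to be just the KKT verification, so I would present that and remark on uniqueness in this way.
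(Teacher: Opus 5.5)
Your argument is correct, and it rests on the same estimate as the paper's, $\partial_i F \le C\sum_j |Q_i^j| - 1 \le 0$, but uses it differently. The paper bounds the partial derivatives uniformly over the whole box, using only $0\le \alpha_j\le C$, and concludes by coordinatewise monotonicity that $C\mathbf{1}$ attains the minimum. That needs no convexity, so it would hold even for an indefinite $\mathbf{Q}$. You evaluate the gradient only at $C\mathbf{1}$ and obtain sufficiency of the KKT conditions from positive semidefiniteness of the kernel, which is harmless for the NTK. That extra assumption is exactly what uniqueness requires, and the paper's proof never addresses uniqueness: its monotonicity is only non-strict. Without it uniqueness genuinely fails. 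Take the indefinite matrix with $Q_1^1=Q_2^2=0$ and $Q_1^2=Q_2^1=q>0$, with $\rvy=(1,1)$ and $C=1/q$. Then every point $(\alpha_1,C)$ with $\alpha_1\in[0,C]$ is a minimizer. So you were right to flag uniqueness, and your route is the one that can settle it.

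You need neither fallback (strict inequality, or ``same decision function'') in the boundary case. Your own observation that minimizers differ by a null vector of $\mathbf{H}$ finishes it. Let $\rvalpha'$ be any minimizer and set $\mathbf{d}=\rvalpha'-C\mathbf{1}\le 0$. Expanding the quadratic,
\[
0 = F(\rvalpha')-F(C\mathbf{1}) = \nabla F(C\mathbf{1})^\top\mathbf{d} + \tfrac12\,\mathbf{d}^\top\mathbf{H}\mathbf{d}.
\]
Both terms are nonnegative: the first is a nonpositive gradient times a nonpositive $\mathbf{d}$, and the second is nonnegative because $\mathbf{H}$ is PSD. So both vanish. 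Since $\mathbf{H}$ is PSD, $\mathbf{d}^\top\mathbf{H}\mathbf{d}=0$ gives $\mathbf{H}\mathbf{d}=0$. Therefore
\[
0=\nabla F(C\mathbf{1})^\top\mathbf{d} = -\mathbf{1}^\top\mathbf{d} + C\,\mathbf{1}^\top\mathbf{H}\mathbf{d} = \sum_{i=1}^n (C-\alpha'_i).
\]
This is a sum of nonnegative terms, so $\rvalpha'=C\mathbf{1}$. Hence the statement holds exactly as written, uniqueness included, for every PSD kernel and every $\rvy$, including the boundary case $\max_i\sum_j|Q_i^j|=1/C$.
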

\vspace{-0.4cm}
The proof is presented in \Cref{ss:simplficiation_small_c}. When $C$ satisfies the condition stated above, the alpha values are equal to $C$ \textit{regardless} of the labels. Thus, choosing $C$ appropriately gives us the liberty to eliminate the inner optimization problem $ \alpha \in S(\tilde{\rvy})$ as $\alpha$ is invariant to different labelings of the data. The SVM prediction in this case simplifies to $p_t = \text{sign(}\sum_{i=1}^{n} C \tilde{y}_i Q_t^i)$. As $C$ is a positive constant, this further simplifies to $p_t = \text{sign(}\sum_{i=1}^{n} \tilde{y}_i Q_t^i)$. Integrating this insight into ScaLabelCert, we develop an efficient computation scheme for exact white-box certificates for infinite-width networks below 
that calculates the minimum number of label flips needed to change the prediction of the model. 

\textbf{ScaLabelCert For The Binary Setting.} 
Our objective is to find the minimum number of label flips required to change the SVM prediction, i.e., to make $\text{sign}(\hat{p}_t)\sum_{i=1}^{n} \alpha_i \tilde{y}_i Q_t^i$ negative. Under sufficiently small $C$, the above objective can be formulated as:

\[\begin{aligned} O_1(\rvy): \quad \min_{\tilde{\rvy} \in \{-1,1\}^n} \frac{1}{2}\sum_{i=1}^n (1 - y_i \tilde{y}_i) \quad
\text{s.t.} \quad 
\text{sign}(\hat{p}_t)\sum_{i =1}^n \tilde {y}_iQ_t^i < 0, %
\,\, \forall i \in [n]: \tilde{y}_i \in \{-1, 1\}. %
  \end{aligned}\]
  
$O_1(y)$ can be solved \emph{in polynomial time} (\Cref{ss:scalabelcert_binary}). The intuition is that the labels corresponding to the largest positive contributions in $\text{sign}(\hat{p}_t)(\sum_{i=1}^{n} y_i Q_t^i)$ are the most influential in determining the prediction, so flipping these labels greedily till the prediction changes is the optimal attack from the adversary's point of view. Thus, solving $O(\rvy)$ leads to a polynomial time computable exact certificate for sufficiently-wide neural networks, if $f_\theta$'s NTK is chosen as the SVM's kernel.

\textbf{ScaLabelCert For The Multi-Class Setting}. For the multi-class case, we use the one-vs-all strategy by decomposing the problem with $K$ classes into $K$ separate binary classification tasks. For each class $c \in [K]$, a binary classifier is trained to distinguish between samples of class $c$  and samples from all other classes. Assume that $p_c$ is the prediction score of a classifier for the learning problem corresponding to class $c$. Then, the class prediction $c^*$ for a test sample is constructed by $c^* = \arg \max_{c\in [K]} p_c$. The labels are collected in the vector $\rvy$ where $\rvy_i^c =1$ if the class of the $i$th sample is $c$, and 0 otherwise. Recall that for each base classifier, EnsembleCert requires white-box certificates that determine, for \textit{every} class, the minimum number of label flips needed to change the model’s prediction to that class. Using a soft-margin kernel SVM with a sufficiently small $C$ as our base model, the certificate \textbf{computing minimum number of label flips required to change the prediction of the model to a particular class $\mathbf{c'}$ }can be formulated as (derived in \Cref{ss:exact_multiclass}):

\vspace{-0.15cm}
\begin{equation}
\begin{aligned}
O_1(c'): \quad \min_{\tilde{\rvy}} \sum_{i \in [N]} \left( 1 - \sum_{c \in [K]} y_i^{c} \tilde{y}_i^{c} \right) \quad
&\text{s.t.} \quad
  \sum_{i \in [N]} \tilde{y}_i^{c'} Q_t^i > \sum_{i \in [N]} \tilde{y}_i^{c} Q_t^i \;\ \;\forall c \neq c', \\
&\forall i \in [N], c \in [K]: \,\, \sum_{c \in [K]} \tilde{y}_i^{c} = 1 \;\ ,\tilde{y}_i^{c} \in \{0, 1\}.
\end{aligned}
\label{eq:O1}
\end{equation}
\vspace{-0.15cm}

While solving $O_1(c')$ is NP-hard, we show that \emph{tight lower and upper bounds} for the solution of $O_1(c')$ can be computed in polynomial time (see \Cref{ss:target_certificate}).

\textbf{Certificate for Kernel Regression}. With minor modifications, we can leverage the above formulation to certify a kernel regression model. Specifically, the adjustment is to replace $Q_t^i$ with
\[(Q_{\mathrm{eff}})_t^i = [(Q_{\mathrm{train}}+\lambda I)^{-1}Q_{t,:}]_i\]  where $Q_{\mathrm{train}}$
is the kernel matrix for the training samples; $Q_{t,:}$ is the vector  of kernel entries for test sample $\rvt$ and the training samples; and $\lambda$ is the %
regularization parameter. 
Deriving the certificate for kernel regression with the above modifications, we certify a sufficiently wide NN trained on a regularized mean-squared loss by using the network's NTK as the kernel.

\textbf{Exact Certificate Given No Partitioning ($N_p=1$)}. When there is no partitioning, we do not need to solve $O_1(c')$ exactly for every $c'$ to get an exact certificate for a stand-alone model. As $O_1(c')$ represents the number of flips required to change the prediction to a particular class $c'$, the \emph{exact} certificate for the stand-alone model can be derived by simply computing the \emph{minimum} over $O_1(c')$  i.e,     
$\; \tilde{\rho}(\rvt) = \min_{c \in [K]\backslash c^*} O_1(c) -1$. We show that with ScaLabelCert, this can be solved \emph{in polynomial time}, by employing  a similar line of argument as \Cref{thm:P2poly}. The proof is presented in \Cref{ss:exact_multiclass}. 
This results in the \textbf{first exact certificate} for neural networks against a poisoning attack that scales to common image benchmark datasets like MNIST or CIFAR-10.

\section{Experiments and Results}

\label{ss:enr}

\textbf{Implementation Details.} 
We perform experiments on MNIST, CIFAR-10, and binary MNIST 1-vs-7. Following SS-DPA \citep{levine2020deep}, before training the base-classifiers we extract unsupervised features using RotNet \citep{gidaris2018rotnet} for MNIST and SimCLR \citep{chen2020simclr} for CIFAR-10, using pretrained models from \citet{levine2020deep}. 
For the supervised training of base-classifiers, the extracted RotNet features for MNIST are used as input to an infinitely-wide convolutional network with a one convolutional layer and no pooling for supervised classification. For CIFAR-10, SimCLR features are fed to an infinitely-wide fully-connected network with one hidden layer and no non-linear activation. NTK computations are performed using the Google \texttt{neural-tangents} library \citep{neuraltangents2020}. Using the NN-kernel equivalence (\Cref{ss:scalabelcert}), the NTK is then used either with a kernel SVM for wide NNs trained on the soft-margin loss or with kernel regression for wide NNs trained on the regularized mean-squared loss.
Solving the MCKP for ensemble-level certificates as described in \Cref{ss:ensemblecert} is implemented using standard dynamic programming. 
The metrics used for evaluation are \emph{certified accuracy}, with certified accuracy at $r$ label flips being the fraction of test samples for which the model prediction is correct and robust up to $r$ label flips; and \emph{median certified robustness} (MCR), which denotes the number of label flips upto which the model prediction for $50\%$ of the correctly classified samples is robust. \rebuttal{We provide further implementation details and certification runtimes in \Cref{ss:cert_runtime}}.

\begin{figure}[t]
    \centering
    \vspace{-0.1cm}
    \begin{subfigure}[b]{0.24\textwidth}
        \centering
        \includegraphics[width=0.95\linewidth]{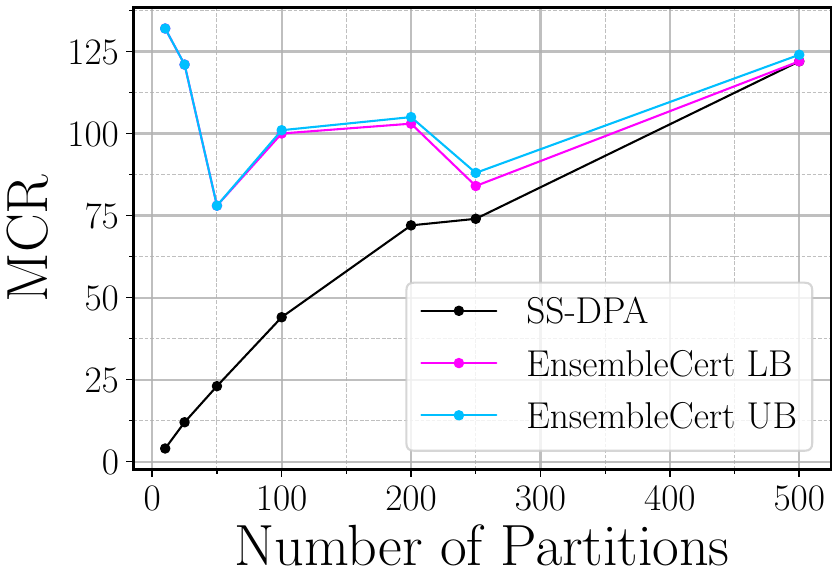}
        \caption{CIFAR-10}
        \label{fig:cifar_kernel_svm_mcr}
    \end{subfigure}%
    \begin{subfigure}[b]{0.24\textwidth}
        \centering
        \includegraphics[width=0.95\linewidth]{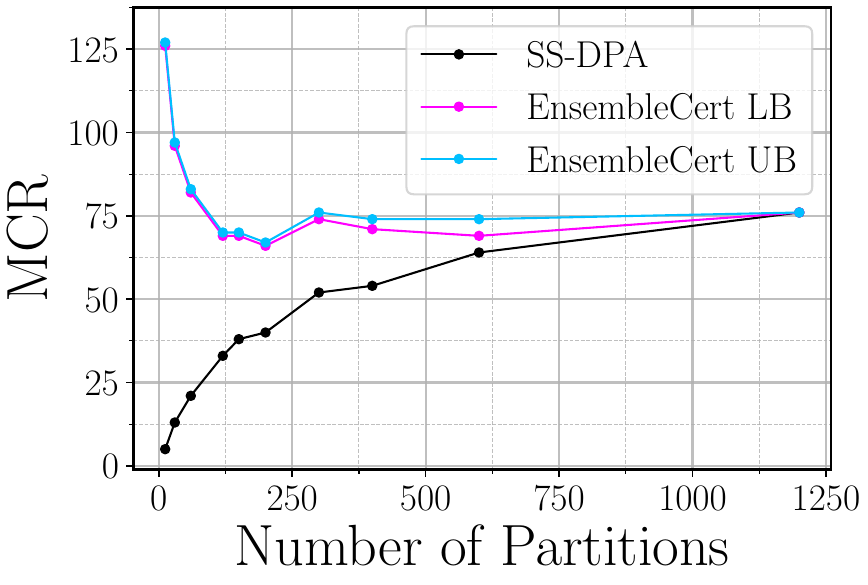}
        \caption{MNIST}
        \label{fig:mnist_kernel_svm_mcr}
    \end{subfigure}%
    \hspace{0.02\textwidth}%
    \begin{subfigure}[b]{0.24\textwidth}
        \centering
        \includegraphics[width=0.95\linewidth]{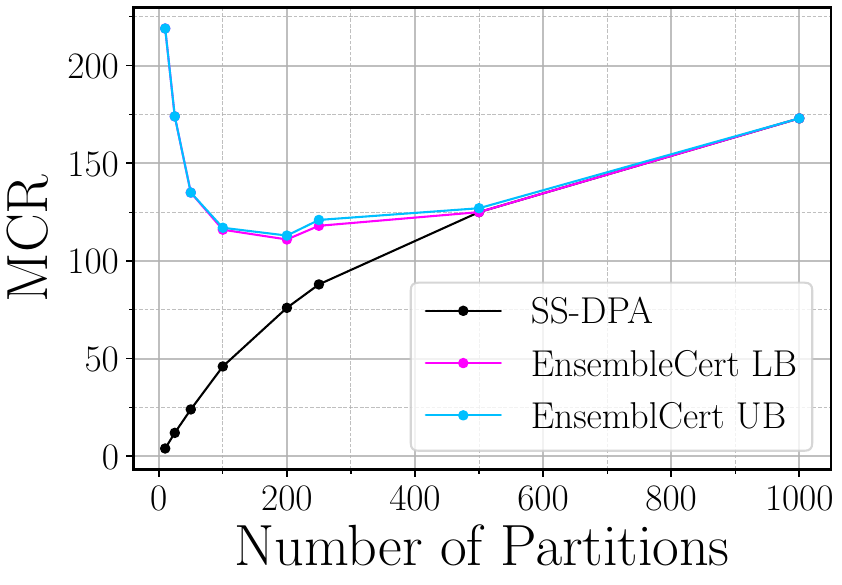}
        \caption{CIFAR-10, $\lambda=100$}
        \label{fig:cifar_kernel_reg_mcr}
    \end{subfigure}%
    \begin{subfigure}[b]{0.24\textwidth}
        \centering
        \includegraphics[width=0.95\linewidth]{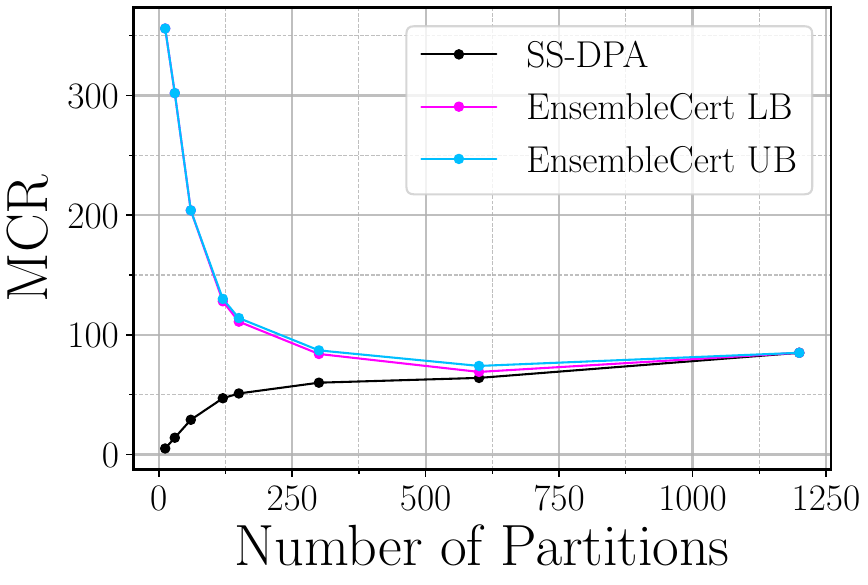}
        \caption{MNIST, $\lambda=0.1$}
        \label{fig:mnist_kernel_reg_mcr}
    \end{subfigure}%
    \vspace{-0.1cm}
    \caption{EnsembleCert evaluation using the NTK for $(i)$ kernel SVM with a sufficiently small $C$ (a, b); $(ii)$ kernel regression under strong regularization (c, d). Median certified robustness either remains largely invariant across partitions or exhibits a decay until the white-box certificate converges to the black-box certificate. The tightness of our bounds on the \emph{exact} certificate for the ensemble is evident, as the upper (EnsembleCert UB) and lower (EnsembleCert LB) bounds largely coincide across all plots.}
    \label{fig:kernel_mcr_comparison}
    \vspace{-0.2cm}
\end{figure}

\textbf{Experiments.}
We instantiate EnsembleCert with sufficiently wide NNs trained on the soft-margin loss (equivalent to kernel SVMs with NTK) and the regularized mean square loss (equivalent to kernel ridge regression with NTK). Although regression losses may seem ill-suited for classification, they work well in practice \citep{mika1999kfd,rifkin2003rlsc}. Moreover, \citet{arora2019cntk} showed that kernel ridge regression with the NTK of convolutional NNs achieves competitive performance on image datasets. 
On CIFAR-10, we evaluate EnsembleCert additionally on two different types of base classifiers $(i)$ Finite-width networks and $(ii)$ Smoothed linear classifiers. The relevant implementation details can be found in \Cref{ss:agt_ec} and \Cref{ss:rs_integration} respectively.
For every choice of base classifier, we observe that \emph{injecting white-box knowledge into the ensemble} \textbf{\textit{substantially increases}} \emph{certified robustness} \emph{for low to intermediate numbers of partitions}, highlighting the relative looseness of guarantees obtained using the black-box approach. The substantial improvement in certified robustness achieved by our white-box certificate for kernel methods as base classifiers is evident in \Cref{fig:kernel_mcr_comparison}. Further results demonstrating the same for every choice of base classifier can be found in \Cref{ss:plots}. The gap between the white-box and black-box certificates narrows as the number of partitions grows, with the white-box certificate eventually converging  to the black-box certificate. This convergence reflects the realization of the worst-case scenario, where a single label flip can alter the prediction of a base classifier. Beyond the point of convergence, our method performs on par with the black-box approach. This behavior is a direct consequence of our method’s design and holds consistently across all experiments. In the next sections, we present some crucial insights that can be derived from our evaluation of EnsembleCert and ScaLabelCert. 
\begin{figure}[t]
    \centering
    \begin{subfigure}[b]{0.24\textwidth}
        \centering
        \includegraphics[width=0.95\linewidth]{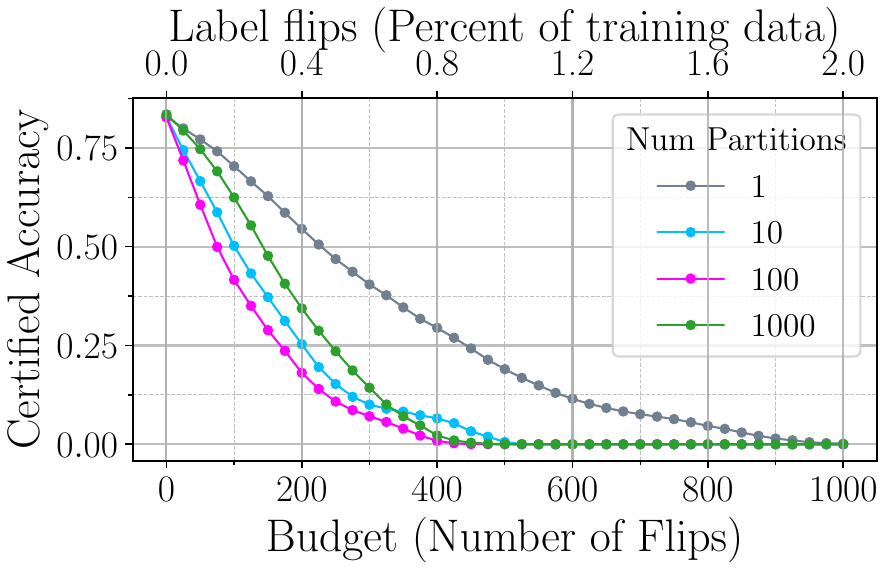 }
        \caption{CIFAR-10: SVM}
        \label{fig:cifar_kernel_svm_cert_acc_comparison}
    \end{subfigure}%
    \begin{subfigure}[b]{0.24\textwidth}
        \centering
        \includegraphics[width=0.95\linewidth]{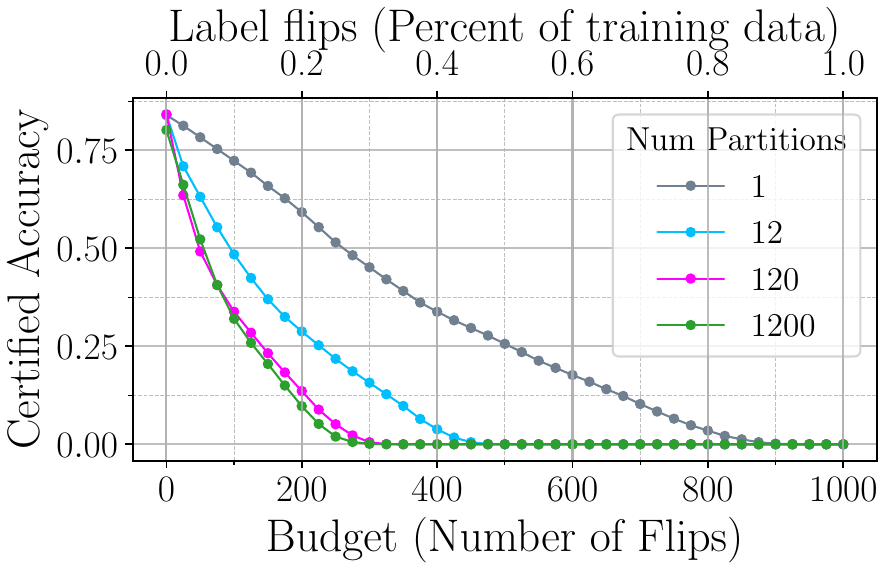 }
        \caption{MNIST: SVM}
        \label{fig:mnist_kernel_svm_cert_acc_comparison}
    \end{subfigure}%
    \hspace{0.02\textwidth}%
    \begin{subfigure}[b]{0.24\textwidth}
        \centering
        \includegraphics[width=0.95\linewidth]{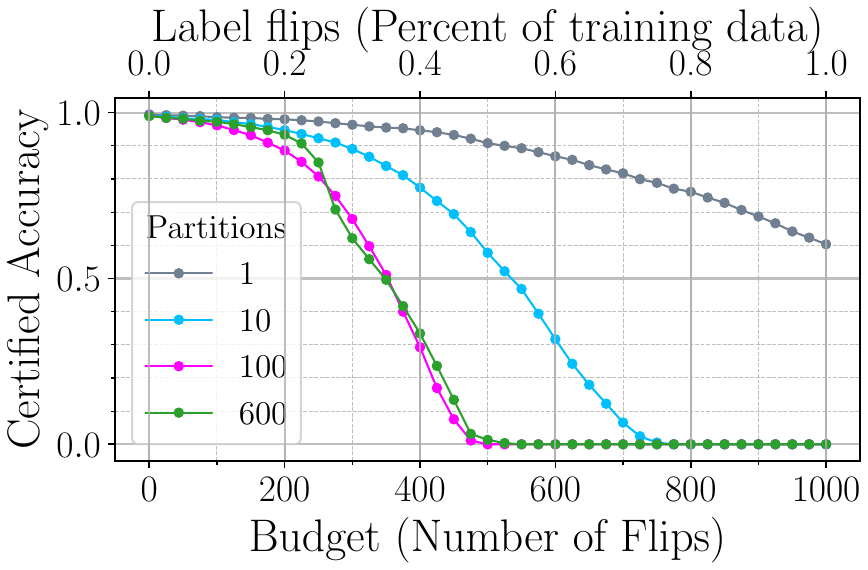}

        \caption{MNIST 1-vs-7:  Reg}
        \label{fig:binary_p_vs_np}
    \end{subfigure}%
    \begin{subfigure}[b]{0.24\textwidth}
        \centering
        \includegraphics[width=0.95\linewidth]{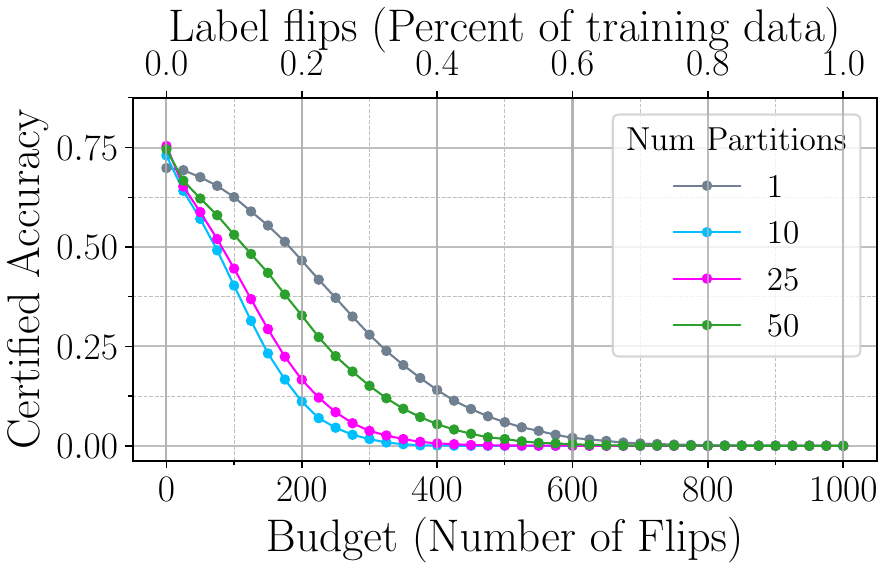}
        \caption{CIFAR-10: RS}
        \label{fig:rs_p_vs_np}
    \end{subfigure}
    \caption{Comparing certified accuracies of stand-alone base models and their partition aggregation ensembles. Results with the base model as kernel SVM are in (a) and (b). (c): Using a stand-alone kernel regression model on MNIST 1-vs-7 maintains a certified accuracy of close to $\mathbf{80\%}$ when the certified accuracy for the corresponding best performing ensemble ($N_p=10$) reaches $0$. (d): Smoothed linear regression as base model, comparing the stand-alone case and the ensembling. } 
    \label{fig:p_vs_np}
    \vspace{-0.5cm}
\end{figure}

\textbf{Invariance to Number of Partitions with Kernel SVM.} 
For the instantiation of EnsembleCert with kernel SVM, we use a regularization parameter $C$ that is small enough to satisfy the condition in \Cref{thm:smallC}, as it is the key to computing scalable certificates for kernel SVM. 
 We observe that the MCR of the white-box certificate remains largely \textbf{invariant} \textbf{to the number of partitions }for CIFAR-10 (\Cref{fig:cifar_kernel_svm_mcr}) and MNIST 1-vs-7 (\Cref{fig:mnist_binary_kernel_svm_mcr}) until the point of convergence. On MNIST, there is a \textbf{sharp decline initially on increasing the number of partitions, followed by plateauing} (\Cref{fig:mnist_kernel_svm_mcr}). These findings indicate that strong guarantees can be achieved without requiring overly large ensembles.

\textbf{Robustness Decay with Kernel Regression.} 
For our instantiation of EnsemblCert with kernel ridge regression, we study the effect of the regularization parameter $\lambda$ on certified robustness of the ensemble.
For each dataset, we observe that the trend of certified robustness varies with the regularization parameter $\lambda$. As we increase $\lambda$ from very small values, MCR initially improves with the number of partitions. Beyond a dataset-specific threshold, however, the trend reverses—\textbf{increasing the number of partitions leads to lower certified robustness}. For example, on CIFAR-10, when $\lambda = 100$ (which lies beyond the threshold for this dataset), EnsembleCert certifies a median of $\mathbf{219}$ label flips with just $\mathbf{10}$ partitions, whereas using $\mathbf{1000}$ partitions reduces this to $\mathbf{173}$ (\Cref{fig:cifar_kernel_reg_mcr}). Similarly, on MNIST with $\lambda = 0.1$, EnsembleCert certifies $\mathbf{356}$ label flips using only $\mathbf{12}$ partitions, whereas using $\mathbf{1200}$ partitions lowers the certified robustness to $\mathbf{82}$ (\Cref{fig:mnist_kernel_reg_mcr}).We present plots for low values of $\lambda$ and discuss the behavioral change across the spectrum of $\lambda$ in \Cref{ss:lambda_study_enr}.
 As robust kernel regression is associated with the use of higher values of $\lambda$ \citep{pmlr-v130-hu21a}, the decreasing trend of the MCR observed with high $\lambda$ suggests that \textbf{deeper partitioning limits the true robustness potential} of the ensemble when the underlying base classifier has a high degree of robustness.

\textbf{To Partition or Not to Partition?}
The polynomial-time calculable exact certification method derived by ScaLabelCert allows us to analyze the robustness of sufficiently wide neural networks without employing an ensemble, that is, $N_p=1$. While the simplification under small $C$ eliminates the need to calculate the training kernel for kernel SVM, making it easily scalable to datasets such as CIFAR-10 and MNIST, it remains an essential component of the pipeline for kernel regression. Thus, performing kernel regression on such datasets without partitioning is computationally challenging due to the need to compute the entire training kernel. Hence, in the no-partition setting, we evaluate ScaLabelCert using the efficient kernel SVMs on all datasets and evaluate using kernel regression only on the relatively small MNIST 1-vs-7 binary dataset. %
On CIFAR-10, ScaLabelCert achieves non-trivial certified accuracy for up to $\mathbf{1000}$ label flips, which amounts to $\mathbf{2\%}$ of the training data \Cref{fig:cifar_kernel_svm_cert_acc_comparison}. In contrast, the evaluation by \citet{levine2020deep} fails to certify \textit{any} test sample beyond $\mathbf{500}$ label flips. \rebuttal{Additionally, we compare our method with the gradient-based parameter bounding technique from \citet{sosnin2024certified} and show that ScaLabelCert significantly outperforms their method on CIFAR-10 in certified accuracy. Refer to \Cref{ss:slc_vs_agt} for details.}
Motivated by the observation that deeper partitioning may limit the ensemble’s true robustness potential, we further investigate the role of partitioning by comparing the certified accuracy of a single base model against that of its partition-aggregated ensemble. 
Our experiments across multiple datasets and base model choices, as shown in \Cref{fig:p_vs_np}, reveal that a \textbf{single base model trained on the entire training dataset achieves significantly higher certified accuracy} as compared to its partition-aggregation ensemble. This raises an important question: \emph{Does partition aggregation enhance or diminish the robustness potential of a given base model?}

\vspace{-0.15cm}
\section{Discussion}
\vspace{-0.15cm}
\label{ss:discussion}

We discuss the most important points here and refer to the appendix (\Cref{ss:ext_dis}) for extended discussion, including related work.

\rebuttal{\textbf{Certificate validity for finite-width neural networks.}  ScaLabelCert leverages the equivalence of infinite-width kernel methods with kernel methods induced by the NTK. This equivalence in training dynamics and model outputs is exact only in the infinite-width case. 
For a  finite-width neural network however, where $w$ denotes the smallest layer width of the
network, the output difference of the network to the SVM is bounded by $
\mathcal{O}\!\left( \frac{\ln w}{\sqrt{w}} \right)
$ with probability $p=1 - \exp(-\Omega(w))$, as shown in \citet{gosch2025provable_tmlr}, \citet{liu2021linearitylargenonlinearmodels}.
Thus, our certificates obtained by utilizing the neural network and NTK equivalence based on kernel SVM and regression represent an asymptotically exact certificate as the width $w$ approaches infinity.}

\textbf{On Using Sufficiently Small $C$ in Kernel SVM.} The choice of the parameter $C$, which controls the penalty for misclassifications, introduces a robustness--accuracy trade-off in soft-margin SVMs. Smaller values of $C$  improve robustness to label noise and adversarial perturbations, as they encourage larger margins and reduce the influence of individual (potentially corrupted) points on the decision boundary%
. Thus, our choice of $C$ for the SVM simplification in \Cref{thm:smallC} aligns with building robust base-classifiers. Although this choice may not be optimal for clean accuracy, We show that performance remains competitive. \rebuttal{We ask the reader to refer to \Cref{ss:small_c_tradeoff} for a discussion on the robustness-accuracy trade-off and the corresponding experiments}. %

\section{Conclusion} 
\label{ss:conclusion}
We introduce \textbf{EnsembleCert}, a framework that leverages model information from base-classifiers to yield significantly tighter ensemble-level 
 certificates against label-flipping attacks in \emph{polynomial time} .
To efficiently extract the white-box information, we develop \textbf{ScaLabelCert}, a framework for the exact certification of sufficiently-wide NNs against label-flipping attacks. ScaLabelCert computes exact certificates against label flipping attacks in polynomial time,  making it the \emph{first} \textbf{polynomial-time} exact certification method that can certify (wide) NNs against data poisoning attacks. 
Through our evaluation of EnsembleCert instantiated with sufficiently wide NNs, we observe that with robust base-classifiers, the partition aggregation ensemble can achieve stronger guarantees using notably few partitions, outperforming excessively deep partitioning. This is crucial, as excessively deep partitioning requires training a very large number of base-classifiers, introducing significant computational overhead and limiting scalability.
 The experiments evaluating ScaLabelCert on stand-alone models indicate that employing partition aggregation ensembles does not always bring out the true robustness potential of the chosen base classifier architecture. 
 Overall, our findings motivate the development of effective white-box certificates for finite-width neural networks to bring out the true robustness of a partition aggregation ensemble and to understand the role of partition-based ensembling itself in achieving strong robustness guarantees.

\section*{Ethics Statement}

Our work introduces EnsembleCert and ScaLabelCert, which, for the first time, leverage white-box information to quantify the worst-case robustness of partition aggregation ensembles of neural networks against label poisoning. Although such capabilities could, in principle, be misapplied by adversaries, we contend that understanding these vulnerabilities is essential for the trustworthy and safe use of neural networks. We therefore hold that the advantages of advancing robustness research outweigh the potential downsides, and we do not anticipate any immediate risks arising from our contributions. 

\section*{Reproducibility Statement}

The full codebase, along with configuration files for every experiment, is available at \url{https://github.com/ajinkya-mohan/ensembleCert}.

\section*{Acknowledgements}

 This paper has been supported by the DAAD programme Konrad Zuse Schools of Excellence in Artificial Intelligence, sponsored by the German Federal Ministry of Education and Research; by the German Research Foundation, grant GU 1409/4-1; as well as by the TUM Georg Nemetschek Institute Artificial Intelligence for the Built World.

\bibliography{iclr2026_conference}
\bibliographystyle{iclr2026_workshop}

\clearpage
\appendix

\section{Theoretical Details}
\label{ss:appendix}
\raggedbottom

\subsection{The formal description of the worst case scenario}
\label{ss:worst_case_scenario}
Recall that the prediction of the partition aggregation ensemble $g_{\mathcal{S}}(\rvt)$ is determined by a majority vote over the prediction by the base classifiers $f_{\{1,...,N_p\}}$: $g_{\mathcal{S}}(\rvt) = \arg \max_{c \in [K]} n_c(\rvt)$,  where $n_c(\rvt) := |\{i \in [N_p] \mid f_i(\rvt) = c\}|$  is the number of votes received by class $c$. Ties are resolved deterministically by choosing the smaller index. If we denote $g_{\mathcal{S}}(\rvt)$ as $c^*$, the certificate  $\tilde{\rho}(\rvt)$ for sample $\rvt$, that computes the number of adversarial label flips upto which the prediction of the ensemble will not change,
as derived by black-box treatment of the base classifier is given as: 
\[\tilde{\rho}(\rvt) :=\big \lfloor \frac{n_{c^*}(\rvt) - \max_{c' \neq c^*}(n_{c'}(\rvt) + \mathbf{1}_{c' < c^*})}{2} \big \rfloor. \]

As each base classifier is treated as a black box, the certificate derivation follows from a key worst-case assumption: \textbf{The prediction of certain base classifiers can be altered by a \emph{single} label flip}. The formal description of the worst-case scenario is given below.

\textbf{Formalising the worst-case scenario}: Let $\mathcal{C}_{\text{sec}} := \arg \max_{c\neq c^*}(n_{c}(t) + \mathbf{1}_{c < c^*})$. One can think of $\mathcal{C}_{\text{sec}}$ as the set of runner-up classes. We define $\mathcal{P
}_{\text{maj}}$ as the set of base classifiers that voted for $c^*$. The worst-case scenario can be represented as: 

\textit{$\exists\ c' \in \mathcal{C}_{\text{sec}}$ s.t. the prediction of at least $\tilde{\rho} + 1$ base classifiers in $\mathcal{P}_{\text{maj}}$ can be changed from $c^*$ to $c'$ with one label flip in their corresponding partitions. In such a scenario, attacking the corresponding base classifiers with one label flip each would change the prediction of the ensemble to $c'$.}

Note that the numerator in $\tilde{\rho}$:  $n_{c^*} - \max_{c' \neq c^*}(n_{c'}(t) + \mathbf{1}_{c' < c^*})$, is the difference in the number of the votes received by the majority class $c^*$ and $c'$. Flipping the vote of a base classifier from $c^*$ to $c'$ bridges the gap between $c^*$ and $c'$ by 2 votes, explaining the 2 in the denominator.
In light of the worst-case assumption, the reader can now see that the certificate actually calculates the number of base classifiers whose prediction needs to be flipped in order to change the ensemble prediction. 
 With \textit{white-box} knowledge about the base classifiers, we can improve upon the worst-case assumption, %
leading to a tighter certificate for the ensemble.
 One could argue that we need the white-box information solely about the base classifiers in $\mathcal{P}_{\text{maj}}$ and classes in
 $\mathcal{C}_{\text{sec}}$ to challenge the assumption. The point to note is that if information about $\mathcal{P}_{\text{maj}}$ indicates that the worst-case scenario cannot be realized, we cannot assume that the adversary will attack partitions only in $\mathcal{P}_{\text{maj}}$ and change the prediction to a class in $\mathcal{C}_{\text{sec}}$. Hence, to derive a tighter certificate, we would need this information for all base classifiers and classes. 

\subsection{Theorem 1: $\min P_1(c) = \min P_2(c)$}

\label{ss:proof_p2poly}

\textbf{Intuition.}  
Recall that $P_1(c')$ denotes the minimum number of label flips needed to make $c'$ the \emph{majority class}, whereas $P_2(c')$ denotes the minimum number of label flips needed to make $c'$ surpass the current majority class $c^*$ in number of votes. Intuitively, if $c'$ is the class that requires the fewest flips to become the new prediction, then making it just beat $c^*$ will already make it the majority class. 

\medskip

\textbf{Vote Configuration}  
Let $\tilde{\rmV} \in \{0, 1\}^{N_p \times K}$ denote the perturbed vote configuration, where $\tilde{v}_i^c = 1$ if partition $i$ votes for class $c$ after label flips, and $0$ otherwise.  
Let $\rmV$ denote the clean vote configuration.  
We define $
O(\tilde{\rmV})$  as the number of label flips required to reach configuration  $\tilde{\rmV}$  starting from the clean configuration  $\rmV$.

\medskip

\textbf{Restatement of $P_1(c')$.}  
Recall that $P_1(c')$ is defined as the minimum number of label flips needed to make $c'$ the majority class. In \Cref{ss:ensemblecert}, the contraint  was formulated through the set of inequalities
\[
\sum_{i=1}^{N_p} \left( \tilde{v}_i^{c'} - \tilde{v}_i^{c} \right) \geq \mathbf{1}_{c < c'}, \quad \forall c \in [K] \setminus c',
\]
which constraints  $c'$  to be the majority class (with deterministic tie-breaking).  
For brevity, we now re-express this condition using the function
$
\text{majVote}(\tilde{\rmV}) := \arg \max_{c \in [K]} \sum_{i=1}^{N_p} \tilde{v}_i^c,
$ as $c' = \text{majVote}(\tilde{\rmV})$
where ties are resolved deterministically by choosing the class with the smaller index.   

With this shorthand notation, we write
\[
\begin{aligned}
P_1(c') = \min_{\tilde{\rmV}} \ & O(\tilde{\rmV})  \\
\text{s.t.} \quad 
& c' = \text{majVote}(\tilde{\rmV}), \\
& \sum_{c=1}^{K} \tilde{v}_i^c = 1, \quad \forall i \in [N_p], \\
& \tilde{v}_i^c \in \{0, 1\}, \quad \forall i \in [N_p], \ \forall c \in [K].
\end{aligned}
\]

\medskip

\textbf{Restatement of $P_2(c')$.}  
Problem $P_2(c')$ relaxes the above by requiring $c'$ to surpass \emph{only} the original majority class $c^*$, instead of all classes:
\[
\begin{aligned}
P_2(c') = \min_{\tilde{\rmV}} \ & O(\tilde{\rmV})  \\
\text{s.t.} \quad 
& \sum_{i=1}^{N_p} \left( \tilde{v}_i^{c'} - \tilde{v}_i^{c^*} \right) \geq \mathbf{1}_{c^* < c'}, \\
& \sum_{c=1}^{K} \tilde{v}_i^c = 1, \quad \forall i \in [N_p], \\
& \tilde{v}_i^c \in \{0, 1\}, \quad \forall i \in [N_p], \ \forall c \in [K].
\end{aligned}
\]

\medskip

\begin{theorem*} [Restating \Cref{thm:P2poly}]
 \;\ \boxed{\displaystyle \min_{c' \in [K] \setminus \{c^*\}} P_1(c') = \min_{c' \in [K] \setminus \{c^*\}} P_2(c')}  
\end{theorem*}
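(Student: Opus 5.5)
We prove the two inequalities separately. Both minima range over the same index set $[K]\setminus\{c^*\}$ and share the objective $O(\tilde{\rmV})=\sum_{i,c}\rho_i^c\tilde v_i^c$. This objective depends only on $\tilde{\rmV}$, not on the target class. So it suffices to compare the \emph{feasible sets}. Write $F_1(c')$ for the configurations with $c'=\text{majVote}(\tilde{\rmV})$, and $F_2(c')$ for the configurations with $\sum_i(\tilde v_i^{c'}-\tilde v_i^{c^*})\ge \mathbf{1}_{c^*<c'}$.

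\textbf{Step 1: $\min P_2 \le \min P_1$.} We show $F_1(c')\subseteq F_2(c')$ for every $c'\neq c^*$. If $c'=\text{majVote}(\tilde{\rmV})$, then the constraint of $P_1(c')$ written in \Cref{ss:ensemblecert} holds for every $c\ne c'$. In particular it holds for $c=c^*$, which gives $\sum_i(\tilde v_i^{c'}-\tilde v_i^{c^*})\ge \mathbf{1}_{c^*<c'}$, exactly the constraint of $P_2(c')$. Hence $P_2(c')\le P_1(c')$ for each $c'$, and taking minima gives the inequality. (If some $F_1(c')$ is empty we use the convention $P_1(c')=+\infty$; no issue arises.)

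\textbf{Step 2: $\min P_1 \le \min P_2$.} Let $c'\neq c^*$ attain $\min_c P_2(c)$, and let $\tilde{\rmV}$ be an optimal configuration for $P_2(c')$. Set $c''=\text{majVote}(\tilde{\rmV})$. We claim $c''\neq c^*$. Indeed, $c^*$ has at most as many votes as $c'$ in $\tilde{\rmV}$. If the counts are equal, the constraint forces $\mathbf{1}_{c^*<c'}=0$, i.e.\ $c'<c^*$, so under smaller-index tie-breaking $c'$ beats $c^*$. In either case $c^*$ cannot be the argmax, so $c''\in[K]\setminus\{c^*\}$. By construction $\tilde{\rmV}\in F_1(c'')$, hence $P_1(c'')\le O(\tilde{\rmV})=P_2(c')=\min_c P_2(c)$. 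This gives $\min_c P_1(c)\le \min_c P_2(c)$. Note that $c''$ may differ from $c'$; this is precisely the intuition stated after \Cref{thm:P2poly}. Combining both steps yields the equality.

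\textbf{Main obstacle.} The delicate point is the tie-breaking check in Step 2: showing that the indicator $\mathbf{1}_{c^*<c'}$ in $P_2$ exactly encodes ``$c'$ beats $c^*$ under the deterministic rule.'' This is what guarantees that the new majority is not $c^*$. We handle it by splitting into the strict-inequality and equality cases as above.

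There is also a minor degenerate case: $P_2$ may be infeasible for every $c'$, for instance when $K=1$ or when some $\rho_i^c=\infty$. Then $P_1$ is infeasible for every $c'$ as well, by Step 1, and both sides equal $+\infty$. This case should be noted explicitly.
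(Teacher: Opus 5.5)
Your proof is correct and follows the paper's argument. Step 1 is its Lemma 1.1 (inclusion of feasible sets), and Step 2 is the core of its Lemma 1.3 (take a $P_2$-optimal configuration and pass to its new majority class), applied directly so that the paper's Lemma 1.2 is not needed; your explicit tie-breaking check that this new majority class differs from $c^*$ is a step the paper relies on without verifying, so including it tightens the argument.
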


\begin{proof}

We first state three lemmas and then combine them to prove the theorem.

\medskip

\begin{lemma}
$\forall c' \in [K] \setminus c^*, \quad P_1(c') \geq P_2(c')$.
\end{lemma}
\begin{proof} The feasible region of $P_1(c')$ is contained within that of $P_2(c')$ since the latter has a weaker constraint. Hence, $P_2(c')$ can only be smaller (or equal) to $P_1(c')$.
\end{proof}

\medskip

\begin{lemma} 
$\forall c_1^* \in \arg\min_{c' \in [K] \setminus c^*} P_1(c'), \quad P_1(c_1^*) = P_2(c_1^*)$.
\end{lemma}
\begin{proof} By contradiction. Suppose $P_1(c_1^*) > P_2(c_1^*)$. Let $\tilde{\rmS}$ be the optimal solution for $P_2(c_1^*)$, i.e.,

\begin{equation}
    O(\tilde{\rmS}) = P_2(c_1^*).
\end{equation}
Since $O(\tilde{\rmS}) < P_1(c_1^*)$, $\tilde{\rmS}$ is not feasible for $P_1(c_1^*)$. As the feasibility for $P_1(c_1^*)$ requires $c^*$ to be the majority class, there must exist some $c_s^* \neq c_1^*$ such that $c_s^* = \text{majVote}(\tilde{\rmS})$. Note that $\tilde{\rmS}$ is feasible for $P_1(c_s^*)$ as $c_s^*$ is the majority class for the vote configuration $(\tilde{\rmS})$, so
\begin{equation}
O(\tilde{\rmS}) \geq P_1(c_s^*).  
\end{equation}

Combining (2) and (3) with the assumption that $P_1(c_1^*) > P_2(c_1^*)$ gives $P_1(c_1^*) > P_1(c_s^*)$, contradicting the assumption that $c_1^*$ minimizes $P_1(c')$.
\end{proof}

\begin{lemma} 

$\exists z^* \in \arg\min_{c' \in [K] \setminus c^*} P_2(c')$ such that $P_1(z^*) = P_2(z^*)$.
\end{lemma}
\begin{proof}  
Let $c_2^* \in \arg\min_{c' \in [K] \setminus c^*} P_2(c')$ and let $\tilde{\rmS}$ be the vote configuration in the  optimal solution for $P_2(c_2^*)$, i.e,
\begin{equation}
    O(\tilde{\rmS}) = P_2(c_2^*).
\end{equation}

Let $z^* = \text{majVote}(\tilde{\rmS})$. Then $\tilde{\rmS}$ is feasible for both $P_1(z^*)$, that requires $z^*$ to be the majority class and $P_2(z^*)$, that requires $z^*$ to have higher votes than $c^*$, implying
\begin{equation}
   O(\tilde{\rmS}) \geq P_2(z^*) \quad \text{and} 
\end{equation}
\begin{equation}
    \quad O(\tilde{\rmS}) \geq P_1(z^*). 
\end{equation}

With (4), (5) and the minimality of $c_2^*$, we have $z^* \in \arg\min_{c' \in [K] \setminus c^*} P_2(c')$ and $O(\tilde{\rmS})= P_2(z^*)$. Combining this result with (6) and using Lemma 1.1 we conclude $P_1(z^*) = P_2(z^*)$.
\end{proof}
\medskip

\textbf{Proof of \Cref{thm:P2poly}.}  
Let $c_1^* \in \arg\min_{c' \in [K] \setminus c^*} P_1(c')$ and $z^* \in \arg\min_{c' \in [K] \setminus c^*} P_2(c') \;\ s.t. \;\ P_1(z^*) = P_2(z^*). $ . Using Lemmas 1.2 and 1.3, we obtain
\[
P_1(z^*) = P_2(z^*) \leq P_2(c_1^*) = P_1(c_1^*),
\]
which implies
\[
P_1(c_1^*) = P_1(z^*) = P_2(z^*),
\]
thus proving that
\[
\boxed{\displaystyle \min_{c' \in [K] \setminus \{c^*\}} P_1(c') = \min_{c' \in [K] \setminus \{c^*\}} P_2(c')}.
\]
\end{proof}  

\subsection{Reduction to MCKP and complexity analysis}

\label{ss:reduction_mckp}

In this section, we will use the terms base classifiers and partitions interchangeably, and discuss %
the optimal attack from an adversary's point of view to make $c'$ surpass $c^*$. Let’s denote the set of partitions that voted for  $c^*$  originally as $\mathcal{P}_{\text{maj}}$, the ones that voted for $c'$ originally as $\mathcal{P}_{\text{target}}$ and rest of the partitions as $\mathcal{P}_{\text{rest}}$. Formally:
\[\mathcal{P}_{\text{maj}} = \{i \in [N_p] \; \ | \; \ v_i^{c^*} =1 \}, \quad \mathcal{P}_{\text{target}} = \{i \in [N_p] \;\ | \;\ v_i^{c'} =1 \}, \quad \mathcal{P}_{\text{rest}} = [N_p] \backslash (\mathcal{P}_{\text{target}} \cup \mathcal{P}_{\text{maj}}).\]

 The adversary will not attack  partitions in $\mathcal{P}_{\text{target}}$. If a partition in $\mathcal{P}_{\text{rest}}$ is attacked, the vote can change only to $c'$. Changing the vote to any other class will deem the label perturbation pointless.
Let $\mathcal{C}_{i} $ be the set of classes that partition $i$ could vote for after the optimal attack:
\[\forall i \in \mathcal{P}_{\text{rest}} :\;\ \mathcal{C}_i = \{ c \in K \;\ | \;\ c=c' \;\ or \;\ v_i^{c}= 1 \}\;\ , \; 
\forall i \in \mathcal{P}_{\text{target}}: \;\ \mathcal{C}_i = \{ c'  \}.\]

Note that  $\forall i \in [N_p]$, we need the binary variable $ \tilde{v}_i^c $ only if $c \in \mathcal{C}_i$. Attacking a partition in $\mathcal{P}_{\text{maj}}$ could change the vote to $c'$ or to the class with the minimal number of flips required for a prediction change to that class. Formalizing the above notion, we define $c_{\text{min}}$ as: $\forall i \in \mathcal{P}_{\text{maj}} :  \ c_{\text{min}}(i) = \argmin_{c \in [K]\backslash c^*} \rho_i^{c}$. Given this we have: 
\[\forall i \in \mathcal{P}_{\text{maj}} :\;\ \mathcal{C}_i = \{ c \in K \;\ | \;\ c=c' \;\ or \;\ c=c_{\text{min}}(i) \;\ or \;\ c=c^* \}.\]
We model the constraint $C_1 :=\sum_{i=1}^{N_p}( \tilde{v}_i^{c'} - \tilde{v}_i^{c^*}) \geq \mathbf{1}_{c^* < c'}$ differently. Let $d$ be the original difference between the number of votes for $c'$ and $c^*$:
$d = \sum_{i=1}^{N_p}( v_i^{c^*} - v_i^{c'})$. We define $r_i^c$ to be the reduction in the gap between $c'$ and $c^*$ caused by flipping the vote of partition $i $ to class $c$.
For partitions in $\mathcal{P}_{\text{maj}}$, if the vote changes to $c'$ , the difference will decrease by 2. If the vote goes to any other class, the reduction is by 1. It is trivial to see that $\forall i \in \mathcal{P}_{\text{target}} ,c \in \mathcal{C}_i: \;\ r_i^c = 0$. We can similarly define these values for  partitions in $\mathcal{P}_{\text{rest}}$ and get:

\[\forall i \in \mathcal{P}_{\text{maj}},c \in \mathcal{C}_i:  \;\ r_i^c = \begin{cases}
2 & \text{if } c=c' \\
1 & \text{else if } c=c_{\text{min}}(i) \\
0 & \text{else if } c=c^*
\end{cases} \;\ ,\;\ \forall i \in \mathcal{P}_{\text{rest}} ,c \in \mathcal{C}_i: \;\ r_i^c = \begin{cases}
1 & \text{if } c=c' \\

0 & \text{else}
\end{cases}\]

For $c'$ to have higher number of votes than $c^*$, the total reduction in the difference should be greater than or equal to $d + \mathbf{1}_{c^* < c'}$. Remodeling $C_1$ with the above idea, we can reformulate $P_2(c')$ as:

\[ \quad\begin{aligned} P_2(c'): \quad \min _{\tilde{\rmV}}\sum_{i=1}^{N_p} \sum_{c \in \mathcal{C}_i} \rho_i^c \, \tilde{v}_i^c \quad \text{s.t.} \quad %
& \sum_{i=1}^{N_p} \sum_{c \in \mathcal{C}_i} r_i^c \, \tilde{v}_i^c \geq d+ \mathbf{1}_{c^* < c'}  , \\& \forall i \in [N_p], \; \forall c \in \mathcal{C}_i: \;\ \sum_{c \in \mathcal{C}_i} \tilde{v}_i^c = 1 , \;\ \tilde{v}_i^c \in \{0, 1\}. \end{aligned}
\]

This problem can be easily converted to a MCKP (multiple choice knapsack problem). To arrive at the excact formulation of MCKP, we need to change the $\min$ objective to a $\max$ objective and reverse the sign of the constraint inequality . Note that solving $\min _{\tilde{v}}O(\tilde{\rmV})$ is same as solving $A - \max _{\tilde{v}}(A-O(\tilde{\rmV}))$ , where $A$ is a positive constant. We choose the constant $A$ to be $N_p*\rho_{\text{max}}$, where $\rho_{\text{max}} =\max_{i \in N_p,c \in C_i} \rho_i^c$. Lets denote $P_3(c')$  as follows . 

\[\begin{aligned} P_3(c') = \max _{\tilde{v}} \; &(N_p*\rho_{\text{max}} - \sum_{i=1}^{N_p} \sum_{c \in C_{i}} \rho_i^c \, \tilde{v}_i^c) \\\quad \text{s.t.} \;\ &(\sum_{i=0}^{N_p - 1} \sum_{c \in C_{i}} r_i^c \, \tilde{v}_i^c) \geq d+ \mathbf{1}_{c^* < c'} , \\& \sum_{c \in C_{i}}\tilde{v}_i^c = 1 ,\quad \forall i \in [N_p], \\& \tilde{v}_i^c \in \{0, 1\}, \quad \forall i \in [N_p], \; \forall c \in C_{i}\end{aligned} \]

As $\sum_{c \in C_{i}}\tilde{v}_i^c = 1$  , we can rewrite $N_p*\rho_{\text{max}}$ as $(\sum_{i=0}^{N_p - 1} \sum_{c \in C_{i}} \rho_{\text{max}} * \, \tilde{v}_i^c)$ . Using this trick, we reformulate $P_3(c')$  as : 

\[\begin{aligned} P_3(c') = \max _{\tilde{v}} \ &\sum_{i=1}^{N_p} \sum_{c \in C_{i}} (\rho_{\text{max}} - \rho_i^c )\, \tilde{v}_i^c \\ \text{s.t.} \;\ &(\sum_{i=0}^{N_p - 1} \sum_{c \in C_{i}} r_i^c \, \tilde{v}_i^c) \geq d+ \mathbf{1}_{c^* < c'}  , \\& \sum_{c \in C_{i}}\tilde{v}_i^c = 1 ,\quad \forall i \in [N_p], \\& \tilde{v}_i^c \in \{0, 1\}, \quad \forall i \in [N_p], \; \forall c \in C_{i}\end{aligned}
\]

We use the same trick to reverse the sign of the inequality. We will skip through the construction for the trick as it is exactly the same. Reformulating it finally gives us :

\[
 \begin{aligned} P_3(c') = \max _{\tilde{v}} \ &\sum_{i=1}^{N_p} \sum_{c \in C_{i}} (\rho_{\text{max}} - \rho_i^c )\, \tilde{v}_i^c \\ \quad \text{s.t.} \;\ &\sum_{i=0}^{N_p - 1} \sum_{c \in C_{i}} (r_{\text{max}} - r_i^c) \, \tilde{v}_i^c \leq N_p*r_{\text{max}} -(d+ \mathbf{1}_{c^* < c'}) , \\& \sum_{c \in C_{i}}\tilde{v}_i^c = 1 ,\quad \forall i \in [N_p], \\& \tilde{v}_i^c \in \{0, 1\}, \quad \forall i \in [N_p], \; \forall c \in C_{i}\end{aligned}
\]

As we have explicitly specified the $r_i^c$ values, we can see that $r_{\text{max}}$ is 2.  The value of $d$ is upper bounded by $N_p$ as it is the difference in the number of votes. Thus, just as a  sanity check, we can confirm that $N_p*r_{\text{max}}$ - $(d+1)$ is positive. 

We can define  
$\rho_{eff} = \rho_{\text{max}} - \rho$ and  
$r_{eff} = r_{\text{max}} - r$ . Note that $r_{eff}$ and $\rho_{eff}$ are non-negative. Hence we have a MCKP with positive weights and profits . 

\[\boxed{\begin{aligned}P_3(c') = \max _{\tilde{v}} \; &\sum_{i=1}^{N_p} \sum_{c \in C_{i}} (\rho_{eff})_i^c\, \tilde{v}_i^c \\ \quad \text{s.t.} \; &\sum_{i=0}^{N_p - 1} \sum_{c \in C_{i}} (r_{eff})_i^c \, \tilde{v}_i^c \leq N_p*r_{\text{max}} -(d+ \mathbf{1}_{c^* < c'}) , \\& \sum_{c \in C_{i}}\tilde{v}_i^c = 1 ,\quad \forall i \in [N_p], \\& \tilde{v}_i^c \in \{0, 1\}, \quad \forall i \in [N_p], \; \forall c \in C_{i}\end{aligned}}
\]

\textbf{Complexity.} The worst case complexity for solving the above problem is $\mathcal{O}((N_p*r_{\text{max}} -(d+ \mathbf{1}_{c^* < c'}))*{\sum_{i=1}^{N_p} |\mathcal{C}_i|})$ \citep{dudzinski1987exact}. Note that $\sum_{i=1}^{N_p} |\mathcal{C}_i| \leq 3*N_p$. Hence, the worst case complexity of solving the MCKP for our use case is $\mathcal{O}(N_p^2)$. $P_2(c')$ can be computed as $N_p*\rho_{\text{max}} - P_3(c')$. We derive the certificate for the ensemble by solving $P_2(c')$  for every class and finding the minimum. Thus, we derive ensemble-level guarantees by aggregating the white-box certificates from the base classifiers in $\mathcal{O}(K*N_p^2)$.

\newpage
\subsection{SVM simplification for sufficiently small $C$}

\label{ss:simplficiation_small_c}
\begin{theorem*}[Restating \Cref{thm:smallC}]
\label{thm:svm_small_c}
Given a soft-margin SVM with penalty parameter $C$, kernel matrix entries $Q_i^j$, 
and dual solution $\rvalpha$ to $P_\text{svm}(\rvy)$, if 
\[
C \left( \max_{i \in [n]} \sum_{j=1}^{n} |Q_i^j| \right) - 1 \leq 0,
\]
then for all label assignments $\rvy \in \{-1,1\}^n$ we have:
\[
\rvalpha = C \cdot \mathbf{1}^n.
\]
That is, all dual variables are equal to $C$, independent of the choice of labels.

\end{theorem*}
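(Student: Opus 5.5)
The plan is to verify directly that $\rvalpha = C\cdot\mathbf{1}^n$ satisfies the KKT conditions of the box-constrained convex quadratic program $P_{\textbf{svm}}(\rvy)$. Since the problem is convex, KKT is sufficient for optimality. Uniqueness is a separate point, discussed below.

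\textbf{Writing the optimality conditions.} Write the objective as $F(\rvalpha) = -\mathbf{1}^\top\rvalpha + \tfrac12 \rvalpha^\top \mathbf{D}_y \mathbf{Q}\mathbf{D}_y\rvalpha$ with $\mathbf{D}_y=\mathrm{diag}(\rvy)$. Its gradient is
\[
\partial_i F(\rvalpha) = -1 + y_i\sum_{j} \alpha_j y_j Q_i^j .
\]
For a box constraint $0\le\alpha_i\le C$, a point at the upper bound $\alpha_i = C$ is KKT-optimal exactly when the gradient component satisfies $\partial_i F \le 0$. In that case the multiplier of the upper bound is $\mu_i = -\partial_i F \ge 0$ and the multiplier of the lower bound is zero. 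So it suffices to show $\partial_i F(C\mathbf{1}^n)\le 0$ for every $i$ and every labeling $\rvy$.

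\textbf{The key estimate.} At $\rvalpha=C\mathbf{1}^n$ we have $\partial_i F = -1 + C\, y_i\sum_j y_j Q_i^j$. Since $|y_iy_j|=1$,
\[
C\, y_i\sum_j y_j Q_i^j \le C\sum_j |Q_i^j| \le C\max_{i}\sum_j|Q_i^j|\le 1,
\]
where the last step is the hypothesis. Hence $\partial_i F\le 0$ for all $i$, independently of $\rvy$. The point $C\mathbf{1}^n$ is therefore a minimizer of $P_{\textbf{svm}}(\rvy)$ for all labelings.

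\textbf{Uniqueness.} This is the main obstacle, because the statement asserts that $\rvalpha$ \emph{equals} $C\mathbf{1}^n$, not merely that it is one solution. If $\mathbf{Q}$ is strictly positive definite, for instance the NTK on distinct inputs, then $F$ is strictly convex and the minimizer is unique, which finishes the proof.

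In the merely positive semidefinite case I would argue differently. Convexity gives
\[
F(\rvalpha)\ge F(C\mathbf{1}) + \nabla F(C\mathbf{1})^\top(\rvalpha - C\mathbf{1}).
\]
Every component of the gradient is $\le 0$ and every component of $\rvalpha - C\mathbf{1}$ is $\le 0$, so each term of the inner product is $\ge 0$. Consequently any other minimizer must satisfy $\partial_iF(C\mathbf{1})\,(\alpha_i - C) = 0$ for all $i$. It can therefore differ from $C$ only in coordinates where equality $\partial_i F = 0$ holds, which requires the boundary case of the hypothesis. Moreover, all minimizers of a convex quadratic share the same $\mathbf{D}_y\mathbf{Q}\mathbf{D}_y\rvalpha$, so any such minimizer yields identical predictions.

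I would state the result as: $C\mathbf{1}^n$ is always an optimal dual solution. It is the unique one under strict inequality together with $\mathbf{Q}\succ 0$. In every case the prediction $\mathrm{sign}(\sum_i C\tilde y_i Q_t^i)$ is the correct SVM output, which is all the later certificate needs.
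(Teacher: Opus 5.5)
Your core argument is correct and is essentially the paper's. Both rest on the bound $\partial_i F \le C\sum_j |Q_i^j| - 1 \le 0$, which puts every $\alpha_i$ at its upper bound. The paper proves this bound on the whole box $[0,C]^n$, using $|\alpha_j|\le C$. It then concludes that $F$ is non-increasing in each coordinate over the feasible set, and this step does not use convexity. You evaluate the gradient only at $C\mathbf{1}^n$ and invoke KKT sufficiency, which needs $\mathbf{Q}\succeq 0$. That is harmless here, since NTK Gram matrices are PSD and the paper itself relies on convexity of $P_{\textbf{svm}}$.

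Where you fall short of the statement is uniqueness. You are right that the paper's ``monotonically decreasing'' is really only non-increasing. So its argument also shows only that $C\mathbf{1}^n$ is \emph{a} minimizer. However, you then weaken the theorem unnecessarily. For every PSD $\mathbf{Q}$ the minimizer is unique even in the boundary case, and you already have the ingredients:
\begin{itemize}
\item Suppose $\rvalpha$ is another minimizer. Your observation that all minimizers share $\mathbf{D}_y\mathbf{Q}\mathbf{D}_y\rvalpha$ gives $\mathbf{M}\rvalpha = C\mathbf{M}\mathbf{1}$, where $\mathbf{M}=\mathbf{D}_y\mathbf{Q}\mathbf{D}_y$.
\item Hence $\rvalpha^\top\mathbf{M}\rvalpha = C^2\,\mathbf{1}^\top\mathbf{M}\mathbf{1}$, so the quadratic terms agree.
\item Equal objective values then force $\mathbf{1}^\top\rvalpha = nC$.
\item Since $\rvalpha \le C\mathbf{1}$ componentwise, this gives $\rvalpha = C\mathbf{1}^n$.
\end{itemize}
So the statement holds exactly as written for any valid kernel. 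Your fallback condition, strict inequality together with $\mathbf{Q}\succ 0$, was doubly redundant: under convexity, either one alone already yields uniqueness.

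What is genuinely needed is positive semidefiniteness. Take the indefinite $\mathbf{Q}=\begin{pmatrix}0&1\\1&0\end{pmatrix}$ with $\rvy=(1,1)$ and $C=1$. The hypothesis holds with equality, yet the objective equals $(1-\alpha_1)(1-\alpha_2)-1$, which is minimized by every $(\alpha_1,1)$.
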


\begin{proof}
We restate the dual formulation of the soft-margin SVM optimization problem for completeness.  
Given training labels $\rvy \in \{-1, 1\}^n$ and kernel matrix entries $Q_i^j$, the dual problem is:  

\[
P_{\text{svm}}(\rvy) = 
\min_{\rvalpha} 
\left( 
-\sum_{i=1}^{n} \alpha_i + \frac{1}{2} \sum_{i=1}^{n} \sum_{j=1}^{n} y_i y_j \alpha_i \alpha_j Q_i^j
\right)
\quad 
\text{s.t.} 
\quad 
0 \leq \alpha_i \leq C \;\; \forall i \in [n].
\]

The gradient of the objective $P_{\text{svm}}(\rvy)$ with respect to $\alpha_i$ is:  

\[
\frac{\partial P_{\text{svm}}(\rvy)}{\partial \alpha_i} 
= 
\sum_{j=1}^{n} y_i y_j \alpha_j Q_i^j - 1.
\]

Over the feasible domain $0 \leq \alpha_j \leq C \; \forall j \in [n]$, we can bound the derivative as:  

\[
\frac{\partial P_{\text{svm}}(\rvy)}{\partial \alpha_i} 
\leq 
C \sum_{j=1}^{n} |Q_i^j| - 1.
\]

Now, if  $
C \left( \max_{i \in [n]} \sum_{j=1}^{n} |Q_i^j| \right) - 1 \leq 0,
$, then for every $i \in [n]$:  

\[
\frac{\partial P_{\text{svm}}(\rvy)}{\partial \alpha_i} 
= 
\sum_{j=1}^{n} y_i y_j \alpha_j Q_i^j - 1 
\leq 0.
\]

This implies that $P_{\text{svm}}(\rvy)$ is monotonically decreasing in each $\alpha_i$ over the feasible set. 
Hence, the minimum is attained at the boundary:  

\[
\alpha_i = C \quad \forall i \in [n].
\]

Thus, under the stated condition on $C$, the solution is 
$\rvalpha = C \cdot \mathbf{1}^n$, \textit{regardless of the choice of labels} 
$\rvy \in \{-1,1\}^n$.
\end{proof}

\subsection{ScaLabelCert for the binary setting}
\label{ss:scalabelcert_binary}

Recall that for the binary setting, we wish to find the minimum number of label flips required to change the prediction of a soft-margin SVM that uses a sufficient small $C$ (as described in \Cref{thm:svm_small_c}). Under sufficiently small $C$, the SVM prediction $\hat{p}_t$ on a test sample $\rvt$ simplifies to $\hat{p}_t = \sum_{i=1}^{n} y_i Q_{ti}$. We denote the perturbed training labels as $\tilde{\rvy}$. Then, the number of label flips required to get the perturbed labels $\tilde{\rvy}$ from  the clean labels ${\rvy}$ can be formulated as $\frac{1}{2}\sum_{i=1}^{n} \bigl(1 - y_i \tilde{y}_i \bigr)$. For the prediction  $p_t$ to change when the model is trained on the perturbed labels, the sign of the clean prediction $\hat{p}_t$ and the tamperd prediction $p_t = \sum_{i=1}^{n} \tilde{y}_i Q_{ti}$ should be opposite. With this information we can formulate our objective as:

\[
O(\rvy): \quad
\min_{\tilde{\rvy}\in\{-1,1\}^n}
\frac{1}{2}\sum_{i=1}^{n} \bigl(1 - y_i \tilde{y}_i \bigr)
\quad
\text{s.t.}\quad
\operatorname{sign}(\hat{p}_t) \sum_{i=1}^{n} \tilde{y}_i Q_{ti} < 0.
\]

We show that $O(\rvy)$ can be solved in polynomial time. The intuition being: The labels corresponding to the largest positive contributions in $\text{sign}(\hat{p}_t)(\sum_{i=1}^{n} y_i Q_t^i)$ are the most influential in determining the prediction, so flipping these labels greedily till the prediction changes is the optimal attack from the adversary's point of view. 

\textbf{Proof.} 
We define the prediction margin to be the sum $S = \sum_{i=1}^{n}\text{sign}(\hat{p}_t) y_i Q_t^i$. Note that $S$ is always positive as we have included the $\text{sign}(\hat{p}_t)$ inside the sum. The prediction for the SVM trained on the perturbed labels $\tilde{\rvy}$ will change when $\tilde{S} = \sum_{i=1}^{n}\text{sign}(\hat{p}_t) \cdot \tilde{y}_i Q_t^i$ becomes negative.
Let $a_i = \text{sign}(\hat{p}_t) \cdot y_i \cdot Q_{ti} \;\  \forall i \in [n]$.  Thus, $S = \sum_{i=1}^{n} a_i$ . Flipping a subset of the clean training labels $F \in 2^{[n]}$ to get the perturbed labels $\tilde{\rvy}$ changes the $i$th term $a_i$ to $-a_i$ for $i \in F$, resulting in
\[
\tilde{S} = S - 2\sum_{i\in F} a_i.
\]
The prediction changes when $\tilde{S}$ is negative, i.e,  $\sum_{i\in F} a_i > S/2.$
Hence, $O(\rvy)$ reduces to finding the smallest subset $F$ such that satisfies the above condition.

\paragraph{Greedy algorithm.}
\label{ss:greedy_alg}
Construct $W = (a_1,\dots,a_n)$ and sort it in descending order: $
a_{(1)} \ge a_{(2)} \ge \dots \ge a_{(n)}.
$
Let $P_k = \sum_{j=1}^{k} a_{(j)}$ be the cumulative sum of the largest $k$ elements. We find the smallest $k'$ such that $P_{k'} > S/2$ and construct the set $F$ by including the labels corresponding to $a_{(1)},\dots,a_{(k')}$. Note that $\forall k<k': \; P_k < S/2$. We claim that $F$ is the minimal set that we want and $k$ is the minimum number of flips required to change the prediction of the SVM. By construction we ensure that $\tilde{S}$ corresponding to the label flips in $F$ is negative. We prove that that $F$ is the minimal set by contradiction. Assume there exists a subset $F' \in 2^{[n]}$ with $|F|=m \leq k'-1$ such that flipping the labels in $F'$ results in changing the prediction of the SVM , i.e, $\sum_{i\in F} a_i > S/2$. Note that $\sum_{i\in F} a_i$ can be only as large as  $P_{k'-1}$, which is the sum of the $k'-1$ largest elements in $W$. But $P_{k'-1} $ is less than $S/2$ as  $\forall k<k': \; P_k < S/2$. This contradicts the requirement $\sum_{i\in F'} a_i > S/2$. Thus, $F$ is the minimal subset and $k$ is the minimum number of label flips required to change the SVM prediction. 

\paragraph{Complexity.}
Sorting $W$ requires $O(n\log n)$, and scanning for $k$ is $O(n)$.  
Hence $O(\rvy)$ is solvable in $O(n \log n)$ time, i.e., in polynomial time. Thus, ScaLabelCert provides a polynomial-time computable exact certificate for sufficiently-wide neural networks, when their NTK is used as the SVM kernel. The certificate for kernel regression can be derived similarly by replacing $Q_t^i$ by $(Q_{eff})_t^i$, where $(Q_{eff})_t^i$  can be obtained by a minor modification described in \Cref{ss:scalabelcert}.

\newpage
\subsection{ Exact certificate for multiclass without partitioning proof}
\label{ss:exact_multiclass}

For the multi-class case, we use the one-vs-all strategy by decomposing the problem with $K$ classes into $K$ separate binary classification tasks. For each class $c \in [K]$, a binary classifier is trained to distinguish between samples of class $c$  and samples from all other classes.  Assume that $p_c$ is the prediction score of a classifier for the learning problem corresponding to class $c$. Then, the class prediction $c^*$ for a test sample is constructed by $c^* = \arg \max_{c\in [K]} p_c$. The labels are collected in the vector $\rvy \in \{0,1\}^{n \times K}$ where $\rvy_i^c =1$ if the class of the $i$th sample is $c$ and 0 otherwise. Recall that for a test sample $\rvt$, the certificate $\tilde{\rho(\rvt)}$ denotes the maximum number of label flips up to which the prediction for the classifier does not change.  We derive the certificate by finding for every class $c' \in [K]$, the minimum number of label flips required to change the prediction of the classifier to a particular class $c'$, and then taking the minimum over $c'$. The number of label flips to reach the perturbed label $\tilde{\rvy}$ from the clean labels $\rvy$ can be represented as $\sum_{i=1}^N (1 - \sum_{c=1}^K y_i^c \tilde y_i^c)$. For the class $c'$ to be the predicted class, the score $p_{c'}$ for class $c'$ should exceed the score $p_c$ for every other class $c$. Using a soft-margin kernel SVM with a sufficiently small $C$ as our base model, the score $p_c$ can be written as $
p_c = \sum_{i=1}^N y_i^c\,Q_{ti},
$ 
. With this information, the minimum number of label flips required to change the prediction of the model to a particular class $c'$ can be formulated as: 

\[
\begin{aligned}
O_1(c'): \; \min_{\tilde{\rvy}} &\sum_{i \in [N]} \left( 1 - \sum_{c \in [K]} y_i^{c} \tilde{y}_i^{c} \right) \quad
\\ \text{s.t.} \;
  &\sum_{i \in [n]} \tilde{y}_i^{c'} Q_t^i > \sum_{i \in [N]} \tilde{y}_i^{c} Q_t^i \;\ \;\forall c \neq c', \\
&\forall i \in [n], c \in [K]: \,\, \sum_{c \in [K]} \tilde{y}_i^{c} = 1 \;\ ,\tilde{y}_i^{c} \in \{0, 1\}.
\end{aligned}
\]

The certificate $\tilde{\rho}(\rvt)$ can be calculated as: $
\tilde\rho(\mathbf t) = \min_{c'\in[K]\setminus\{c^*\}} O_1(c') -1
$. $O_1(c')$ is a Integer Linear Program(ILP) with $n \times K$ binary variables. Hence, the complexity for solving $O_1(c')$ the problem is $\mathcal{O}(2^{n \times K})$. Consequently, the complexity for deriving $\tilde\rho(\mathbf t)$, which is calculated by taking the minimum over $O_1(c')$, is  $\mathcal{O}(K \times 2^{n \times K})$. This prompts us to solve a simpler alterative $O_2(c')$ instead, that relaxes the constraint in $O_1(c')$ requiring $c'$ to be the predicted class:

\begin{equation}
\label{eq:O2}
\begin{aligned}
O_2(c'): \; \min_{\tilde{\rvy}} &\sum_{i \in [N]} \left( 1 - \sum_{c \in [K]} y_i^{c} \tilde{y}_i^{c} \right) 
 \\ \text{s.t.} \
  &\sum_{i \in [n]} \tilde{y}_i^{c'} Q_t^i > \sum_{i \in [N]} \tilde{y}_i^{c^*} Q_t^i \;\ , \\
&\forall i \in [n], c \in [K]: \,\, \sum_{c \in [K]} \tilde{y}_i^{c} = 1 \;\ ,\tilde{y}_i^{c} \in \{0, 1\}.
\end{aligned}
\end{equation}

$O_2(c')$ calculates the minimum number of label flips needed to make the prediction score $p_{c'}$ for class $c'$ exceed the prediction score $p_{c^*}$ for the original predicted class $c^*$. Notably, this relaxation is similar to the relaxation of $P_1(c')$ to $P_2(c')$ in the context of computing the certificate for the ensemble. Despite the relaxation, we show that the minimum over $O_1(c')$ is preserved, i.e.:

\begin{theorem}

\[
\min_{c'\in [K] \setminus c^*} O_1(c') = \min_{c'\in [K] \setminus c^*} O_2(c')
\]
\end{theorem}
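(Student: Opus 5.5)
The plan is to mirror the proof of \Cref{thm:P2poly} for the ensemble problems $P_1$ and $P_2$. The structure is the same: $O_2(c')$ relaxes $O_1(c')$ by keeping only the constraint against $c^*$. The key structural fact is that the feasible set of $O_1(c')$ is exactly the set of labelings $\tilde{\rvy}$ whose perturbed prediction $\arg\max_{c} \sum_i \tilde{y}_i^c Q_t^i$ is $c'$. The strict inequalities make the argmax unique, so no tie-breaking issue arises. The objective, the number of flips, depends only on $\tilde{\rvy}$ and not on the target class. I will abbreviate it as $F(\tilde{\rvy})$ and write $\mathrm{pred}(\tilde{\rvy})$ for the perturbed argmax.

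The argument proceeds in three steps, as in Lemmas 1.1--1.3.

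\textbf{Step 1.} For every $c' \neq c^*$ we have $O_1(c') \ge O_2(c')$. This holds because the constraint of $O_2(c')$ is one of the constraints of $O_1(c')$, so the feasible region of $O_1(c')$ is contained in that of $O_2(c')$. Taking minima over $c'$ gives $\min O_1 \ge \min O_2$.

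\textbf{Step 2.} Let $c_2 \in \arg\min_{c'\neq c^*} O_2(c')$, and let $\tilde{\rvy}$ be an optimal solution of $O_2(c_2)$, so that $F(\tilde{\rvy}) = \min O_2$. Set $z = \mathrm{pred}(\tilde{\rvy})$, which must first be made well defined (see below).

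\textbf{Step 3.} Show $z \neq c^*$. By feasibility, $\sum_i \tilde{y}_i^{c_2}Q_t^i > \sum_i \tilde{y}_i^{c^*}Q_t^i$, so class $c^*$ is beaten by $c_2$ and is not the argmax. Then $\tilde{\rvy}$ is feasible for $O_1(z)$, provided $z$ strictly beats every other class. This gives $\min O_1 \le O_1(z) \le F(\tilde{\rvy}) = \min O_2$. Combined with Step 1, equality follows.

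The main obstacle is the strictness in $O_1$. The argmax of the perturbed scores may be attained by several classes tied at the top. If so, no class strictly beats all others, and $\tilde{\rvy}$ is feasible for no $O_1(z)$.

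I would handle this first by showing that a tie at the maximum among classes other than $c^*$ never hurts optimality. The top score is strictly above $c^*$'s, so every tied class is a valid target for $O_2$. Suppose the tie involves classes $z_1,\dots,z_m$. I would try to show that an optimal solution can be chosen without such ties, or argue via the same minimal-flip reasoning applied to a tied class. If this cannot be done cleanly, the fallback is to adopt the tie-breaking convention used elsewhere in the paper. The prediction is the argmax with ties resolved by the smaller index, and the constraints of $O_1$ and $O_2$ are read accordingly, mirroring the $\mathbf{1}_{c<c'}$ terms in $P_1$ and $P_2$. Under that convention, $\mathrm{pred}(\tilde{\rvy})$ is always a single class and Step 3 goes through verbatim. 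I would also note that degenerate ties are a measure-zero event for real-valued kernel entries.

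Finally, I would remark that, because $O_2(c')$ only compares $c'$ against $c^*$, it decouples per training point. This is what makes the minimum computable in polynomial time via a greedy argument like the one for $O(\rvy)$ in \Cref{ss:scalabelcert_binary}.
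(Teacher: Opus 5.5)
Your Steps 1--3 are the paper's argument. The paper proves this theorem by pointing back to Lemmas 1.1--1.3 with $\mathrm{majVote}$ replaced by $\mathrm{majScore}(\tilde y)=\arg\max_c \sum_i \tilde y_i^c Q_t^i$. Your direct chain $\min O_1\le O_1(z)\le F(\tilde y)=\min O_2$ is a slightly leaner version of it, since Lemma 1.2 is not needed. The tie issue you raise is a genuine gap, and the paper's one-line reduction glosses over the same point. $\mathrm{majScore}$ with index tie-breaking need not give a point feasible for $O_1(z)$, because the constraints of $O_1(z)$ are strict against \emph{every} other class.

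Your first remedy, showing that some optimal solution of $O_2$ is tie-free, cannot succeed: with the constraints as written, the identity is false. Take $K=3$ and three training points: two with label $1$ and $Q_t^i=1$, and one with label $3$ and $Q_t^i=-3$. Write $S_c=\sum_i \tilde y_i^c Q_t^i$.

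The clean scores are $(2,0,-3)$, so $c^*=1$ and no $O_2(c')$ is feasible with zero flips. Relabelling the third point to class $1$ gives $(-1,0,0)$, which is feasible for $O_2(2)$ and $O_2(3)$, so $\min O_2=1$. The other one-flip score vectors are $(1,1,-3)$, $(1,0,-2)$ and $(2,-3,0)$. None of the four one-flip vectors has a unique maximizer in $\{2,3\}$, so $\min O_1\ge 2$. In fact $\min O_1=2$: additionally moving one label-$1$ point to class $2$ gives $(-2,1,0)$. Under smaller-index tie-breaking, $(-1,0,0)$ is predicted as class $2$, so it is the literal $O_1$ that misdescribes the attack.

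You therefore have to commit to your fallback rather than keep it optional. Read $O_1(c')$ as $S_{c'}>S_c$ for $c<c'$ and $S_{c'}\ge S_c$ for $c>c'$. Read $O_2(c')$ as $S_{c'}>S_{c^*}$ if $c^*<c'$ and $S_{c'}\ge S_{c^*}$ otherwise. This mirrors the $\mathbf{1}_{c<c'}$ terms of $P_1,P_2$. Under this reading:
\begin{itemize}
\item $\mathrm{pred}(\tilde y)$ is single-valued;
\item $z\neq c^*$, because $c_2$ beats $c^*$ in the (score, index) order;
\item $\tilde y$ is feasible for $O_1(z)$ by the definition of $z$;
\end{itemize}
and your Step 3 then closes the proof.

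The alternative is an explicit genericity hypothesis (no ties among perturbed class scores). The measure-zero remark alone does not establish a statement claimed for all kernel values.
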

The intuition being --- while trying to make $c'$ surpass $c^*$, if another class $c''$ becomes the predicted class, then changing the classifier prediction to $c''$ should be easier compared to $c'$. As the design of the relaxation and the intuition are similar to the ensemble case, the proof strategy for this result is exactly the same as \Cref{thm:P2poly}. The only difference would be that the notation $\textbf{majVote}(\tilde{\rmV})$, that finds the majority class for a vote configuration $\tilde{\rmV}$ will be replaced by the $\textbf{majScore}$ notation that predicts the class when the model is trained on the perturbed labels $\tilde{\rvy}$, i.e, $\textbf{majScore}(\tilde{\rvy}) = \arg \max_{c \in [K]} \sum_{i \in [n]} \tilde{y}_i^{c} Q_t^i$.  Hence, we direct the reader to the proof for $\text{Theorem 1}$ provided in \Cref{ss:proof_p2poly}. With the above result, we can derive the certificate $\tilde\rho(\mathbf t)$ as: $
\tilde\rho(\mathbf t) = \min_{c'\in[K]\setminus\{c^*\}} O_2(c') -1
$

 \textbf{Solving $O_2(c')$. } We focus our attention on solving $O_2(c')$. Recall that $O_2(c')$ denotes the minimum number of label flips needed to make the prediction score $p_{c'}$ for class $c'$ exceed the prediction score $p_{c^*}$ for the original predicted class $c^*$.

We denote the training samples that were labeled  $c^*$  originally as $P_{\text{maj}}$ and samples that were labeled  $c'$ originally as $P_{\text{target}}$. $P_{\text{rest}}$ denote the set of remaining samples

$$
P_{\text{maj}} = \{i \in [N] \;\ | \;\ y_i^{c^*} =1 \}
, \quad P_{\text{target}} = \{i \in [N_p] \;\ | \;\ y_i^{c'} =1 \}, \quad P_{\text{rest}} = [n] \setminus P_{\text{maj}} \; \cup \; P_{\text{target}}
$$

Let $d$ be the original difference between score for $c'$ and $c^*$:  $d = \sum_{i} (y_i^{c^*}-y_i^{c'}) Q_{ti}
$. We define $r_i^c$ to be the \textbf{reduction} caused by flipping label of the  $i$ th sample to class $c$ in the difference between score for $c'$ and $c^*$. For the score $p_{c'}$  exceed the score $p_{c^*}$, the total reduction caused by the label flipping attack should exceed $d$.  Lets see what these values will be for different $i$  and $c$.
\textbf{For samples in $P_{\text{maj}}$.}  For samples that were originally labeled $c^*$, if the label is flipped  to $c'$ , reduction will be $2*Q_{ti}$. If the label is flipped from  $c^*$ to any other class, the reduction is only by $Q_{ti}$ as it  affects the score only for $c^*$. If $Q_{ti}$ is positive, flipping the label to $c'$ will cause the maximum reduction possible by flipping the $i$th label. Hence, an optimal attack will flip the $i$th label to $c'$ (if it chooses to flip the label). If $Q_{ti}$ is negative, an optimal attack will not flip the label for the $i$th sample as it will further increase the gap between $c'$ and $c^*$.
$$
\forall i \in P_{\text{maj}},c \in [K],  \quad r_i^c = \begin{cases}
2*Q_{ti} & \text{if } c=c' \\

0 & \text{else if } c=c^* \\
Q_{ti}&  \text{else}
\end{cases}
$$

\textbf{For samples in $P_{\text{target}}$.} Following a similar line of argument as above, we can safely say that $\forall i \in P_{\text{target}}$,  if $Q_{ti} < 0$, an optimal attack will flip the label for sample $i$ to $c^*$ (if it chooses to flip the label). If $Q_{ti}$ is positive, an optimal attack will not flip the label for the $i$th sample as it will further increase the gap between $c'$ and $c^*$.

$$
\forall i \in P_{\text{target}},c \in [K],  \quad r_i^c = \begin{cases}
- 2*Q_{ti} & \text{if } c=c^* \\

0 & \text{else if } c=c' \\
- Q_{ti}&  \text{else}
\end{cases}
$$

\textbf{For samples in $P_{\text{rest}}$.} For samples with the true label other than $c'$ or $c^*$, if $Q_{ti} > 0$, an optimal attack will flip the $i$th label to $c'$. If $Q_{ti}$ is negative, the optimal attack will flip the $i$th label to $c^*$, if it chooses to flip the label.

$$
\forall i \in P_{\text{rest}} ,c \in C_i, \quad r_i^c = \begin{cases}
Q_{ti} & \text{if } c=c' \\
- Q_{ti} & \text{if } c=c^* \\

0 & \text{else}
\end{cases}
$$

With this, we can define $r(i)$ as the reduction in the difference between scores for $c'$ and $c^*$ in the optimal attack, if the attacker chooses to flip the $i$th label: $ \;\ 
\forall i \in [N] ,r(i) = \max_ {c \in [K]} r_i^c
$. Note that if $r_i$ is 0 , the $i$th label will not be flipped. Hence we define the set of candidates for flipping the labels as $E_f$: $ \;\
E_f = \{ i \in [N] \quad | \quad r(i) > 0\}
$

We employ a greedy strategy similar to the one used in the computation of the certificate for the binary case (\Cref{ss:greedy_alg}).
We first sort the candidate flipping labels $E_f$ based on their $r(i)$ values in the descending order,i.e., $r_{(1)} \ge r_{(2)} \ge \dots$. Let $P_k = \sum_{j=1}^{k} r_{(j)}$ be the cumulative sum of the largest $k$ elements. We find the smallest $k'$ such that $P_{k'} > d$ and construct $G_{k'}$ by including the indices corresponding to the $k'$ largest $r(i)$ values. We claim that $k'$ is the minimum number of label flips required to make the score for $c'$ exceed $c^*$ and $G_{k'}$ is the minimal set of the labels that need to be flipped to make it happen. The greedy algorithm is illustrated in \Cref{alg:greedy_cert}.

We prove optimality by contradiction. Recall that $G_k = \{ (1), \dots, (k) \}$ is the greedy choice of $k$ largest $r(i)$ and $P_k = \sum_{j=1}^k r_{(j)}$. Suppose there exists a set $F \subseteq E_f$ with $|F|=m \leq k'-1$ such that flipping   labels in $F$ achieves the objective, i.e,
\(\sum_{i \in F} r(i) > d\). But $G_m$ consists of the $m$ largest $r(i)$, so 
\(\sum_{i \in F} r(i) \le \sum_{i \in G_m} r(i) = P_m \le P_{k'-1} \le d\),
a contradiction.

\begin{algorithm}[h]
\caption{Greedy Certificate Computation}\label{alg:greedy_cert}
\textbf{Input:} Score gap $d$, reductions $r(1), \dots, r(n)$ \\
\textbf{Output:} Minimum number of flips $k$
\begin{algorithmic}[1]
\State $\text{total\_red} \gets 0$
\State $\text{sorted} \gets \text{sort}(r, \text{descending})$
\State $i \gets 0$
\While{$\text{total\_red} < d$}
    \State $\text{total\_red} \gets \text{total\_red} + \text{sorted}[i]$
    \State $i \gets i + 1$
\EndWhile
\State \Return $i$
\end{algorithmic}
\end{algorithm}

\paragraph{Complexity.}Sorting based on the reduction values takes $\mathcal{O}(n\log n)$, and scanning for the minimal $k'$ takes $\mathcal{O}(n)$. Solving $O_2(c')$ for all $c' \in [K] \setminus c^*$ can be done in $\mathcal{O}(K  n \log n)$, resulting in a polynomial-time \textbf{exact} certificate  for test sample  $\rvt$: $\tilde\rho(\mathbf t) = \min_{c' \in [K] \setminus c^*} O_2(c')-1$.

\newpage
\subsection{Extracting white-box knowledge for the multi-class setting}
\label{ss:target_certificate}
Recollect that EnsembleCert utilizes the white-box knowledge $\rho_i^c$ for all base classifiers $i \in [N_p]$ and all classes $c \in [K]$ where $\rho_i^c$ denotes the minimum number of flips required to change the prediction of the $i$th base classifier to $c$. For each base classifier, using a soft-margin kernel SVM with a sufficiently small $C$ as our base model, we formulate the problem of finding the minimum number flips required to change the prediction of the classifier to $c'$ is formulated as:

\begin{equation}
\begin{aligned}
O_1(c'): \; \min_{\tilde{\rvy}} &\sum_{i \in [N]} \left( 1 - \sum_{c \in [K]} y_i^{c} \tilde{y}_i^{c} \right) \quad
\\ \text{s.t.} \;
  &\sum_{i \in [n]} \tilde{y}_i^{c'} Q_t^i > \sum_{i \in [N]} \tilde{y}_i^{c} Q_t^i \;\ \;\forall c \neq c', \\
&\forall i \in [n], c \in [K]: \,\, \sum_{c \in [K]} \tilde{y}_i^{c} = 1 \;\ ,\tilde{y}_i^{c} \in \{0, 1\}.
\end{aligned}
\end{equation}

The above problem is an ILP with $n \times K$ binary variables, resulting in a computational complexity of $\mathcal{O}(2^{nK})$. For each base classifier, we need to solve this problem for all classes. Consider an ensemble with $N_p$ base classifiers. We would need to solve $N_p \times K$ ILPs with the computational complexity of $O_1(c')$, making the problem intractable. Hence, instead of solving the problem $O_1(c')$ exactly, we bound the problem efficiently, and show that our bounds are sufficiently tight on empirical evaluation.

\subsubsection{Lower Bound}
\label{ss:lower_bound}
For the purpose of computing the exact certificate for the multiclass case in the no partition case, we formulated an alternate problem $O_2(c')$ (\Cref{eq:O2}), a relaxed version of $O_1(c')$ which only requires the score $p_{c'}$ for class $c'$ to exceed the score $p_{c^*}$ for class $c^*$. We showed in \Cref{ss:exact_multiclass} that $O_2(c')$ can be solved in polynomial time. As the constraint set of $O_2(c')$ is a subset of $O_1(c')$, the solution for $O_2(c')$ will always be less than or equal to $O_1(c')$. Hence, $O_2(c')$ represents a valid lower bound on  $O_1(c')$, that is polynomial-time calculable. Notably, $O_2(c')$ is also a certificate by definition, as it is a lower bound on the certified radius.

\subsubsection{Upper bound}
\label{ss:upper_bound}
To compute a valid upper bound on $O_1(c')$, finding an instance of perturbed labels $\tilde{\rvy}$ that satisfies the constraints of $O_1(c')$, i.e, $\textbf{majScore}(\tilde{\rvy}) = c'$, is sufficient. The number of label flips needed to reach any such $\tilde{\rvy}$ from the clean labels $\rvy$
represents a valid upper bound on $O_1(c')$. To compute this upper bound efficiently, we adopt a greedy strategy that iteratively flips training labels to reach a feasible solution of \Cref{eq:O1}. 
At the beginning of each iteration, we compute the current majority class 
$c^* = \arg\max_{c} S_c$, where $S_c=\sum_{i=1}^n \tilde y_i^c Q_t^i$ is the score of class $c$. 
Note that $c^*$ may change after each label flip, and our method  accounts for this by re-evaluating $c^*$ and all per-sample damages $d_i$ after every label flip. 
Let $S^{(2)} = \max_{c \in [K]\setminus c^*} S_c$ denote the score of the runner-up class.
For each sample $i$, we define the \emph{per-sample damage} $d_i$ as the maximum
possible reduction in the gap between $c'$ and the majority class achievable by flipping $i$:

\[
d_i \;=\;
\begin{cases}
\displaystyle \min\!\Big(2Q_t^i,\; Q_t^i + S_{c^*} - S^{(2)}\Big),
& \text{if } \tilde{y}_i^{c^*} = 1 \text{ and } Q_t^i > 0,\\[10pt]
\displaystyle \min\!\Big(2|Q_t^i|,\; |Q_t^i| + S_{c^*} - S^{(2)}\Big),
& \text{else if } \tilde{y}_i^{c'} = 1 \text{ and } Q_t^i < 0,\\[10pt]
\displaystyle Q_t^i,
& \text{if } \tilde{y}_i^{c^*} = 0,\; \tilde{y}_i^{c'} = 0,\; \text{and } Q_t^i > 0,\\[6pt]
0,&\text{otherwise.}
\end{cases}
\]

Intuitively, $d_i$ captures the maximal contribution that flipping sample $i$ can make toward satisfying the class-change constraint of \Cref{eq:O1}.  

\paragraph{Cases explained}
\begin{enumerate}
  \item \textbf{Case 1} ($\tilde{y}_i^{c^*}=1, Q_t^i>0$): Flipping a positively contributing $c^*$-labeled sample to $c'$ both reduces $S_{c^*}$ and increases $S_{c'}$, leading to a decrease of $2Q_t^i$. The possibility that the runner-up class becomes the majority is considered by the term $Q_t^i + S_{c^*} - S^{(2)}$, which accounts for the score gap to the second-highest class.
  \item \textbf{Case 2 (\(\tilde{y}_i^{c'}=1, Q_t^i<0\)):}  Flipping a negatively contributing $c'$-labeled sample to  $c'$ helps both by increasing $S_{c'}$ and decreasing $S_{c^*}$, giving a  decrease of $2|Q_t^i|$. The possibility of runner-up class becoming the majority class is handled as above.
  \item \textbf{Case 3 (neutral sample, \(Q_t^i>0\)):} Flipping such a sample to $c'$ only increases $S_{c'}$, so the reduction in the gap is exactly $Q_t^i$.

\end{enumerate}

In each iteration, we select $i^\star = \arg\max_i d_i$, flip the corresponding label to $c'$ or $c^*$, re-evaluate the majority class, and repeat this process until $c'$ becomes the majority class. Hence, by design, the greedy algorithm results in a feasible $\tilde{\rvy}$ satisfying \Cref{eq:O1}, and the number of flips performed constitutes a valid upper bound on $O_1(c')$. Our greedy approach is illustrated in \Cref{alg:upper_bound}.

\textbf{Complexity.} We choose the label to flip by calculating the possible reduction each label flip can cause in the gap between $c^*$ and $c'$, and choosing the one that causes the maximum reduction. This involves scanning the dataset at every iteration. Hence the worst case complexity in computing the upper bound for $O_1(c')$ is $\mathcal{O}(n^2)$.

\begin{algorithm}[H]
\caption{Greedy Upper Bound Computation for $O_1(c')$}
\label{alg:upper_bound}
\begin{algorithmic}[1]
\State \textbf{Input:} Clean labels $\rvy$, kernel entries corresponding to the $i$th training sample and test sample $t$: $Q_t^i \ \forall i \in [N]$, target class $c'$
\State \textbf{Output:} Upper bound on $O_1(c')$ (number of label flips)
\State Initialize $\tilde{\rvy} \gets \rvy$
\State Compute $S_c = \sum_{i=1}^{N} \tilde{y}_i^{c} Q_t^i$ for all $c \in [K]$
\State $c^* \gets \arg\max_{c} S_c$
\While{$c^* \neq c'$}
    \State $S^{(2)} \gets \max_{c \neq c^*} S_c$
    \For{$i = 1$ to $N$}
        \If{$\tilde{y}_i^{c^*}=1$ and $Q_t^i > 0$}
            \State $d_i \gets \min\big(2Q_t^i,\ Q_t^i + S_{c^*} - S^{(2)}\big)$
        \ElsIf{$\tilde{y}_i^{c'}=1$ and $Q_t^i < 0$}
            \State $d_i \gets \min\big(2|Q_t^i|,\ |Q_t^i| + S_{c^*} - S^{(2)}\big)$
        \ElsIf{$\tilde{y}_i^{c^*}=0$ and $\tilde{y}_i^{c'}=0$ and $Q_t^i > 0$}
            \State $d_i \gets Q_t^i$
        \Else
            \State $d_i \gets 0$
        \EndIf
    \EndFor
    \State $i^\star \gets \arg\max_i d_i$
    \If{$\tilde{y}_{i^*}^{c'}=1$}
        \State $\tilde{y}_{i^*}^{c^*} \gets 1$
    \Else
        \State $\tilde{y}_{i^*}^{c'} \gets 1$
    \EndIf
    \State Update $S_c = \sum_{i=1}^{N} \tilde{y}_i^{c} Q_t^i$ for all $c \in [K]$
    \State $c^* \gets \arg\max_{c} S_c$
\EndWhile
\State \Return Number of flips applied to reach $\tilde{\rvy}$
\end{algorithmic}
\end{algorithm}

\newpage
\section{Details of Randomized Smoothing integration}
\label{ss:rs_integration}
\subsection{Smoothed Linear Classifier as the Base Model}
In addition to ScaLabelcert for sufficiently wide networks, we use the method from  \citet{rosenfeld2020certifiedArXiv} for certification of base classifiers. Their approach uses a smoothed linear classifier as the base classifier.
The smoothing process involves  independently flipping each label with probability \(q\) and assigning a flipped label uniformly at random among the remaining \(K-1\) classes.  
 To certify robustness, the method bounds the probability that this randomized classifier switches its prediction from one class to another.  

\subsection{Prediction by the Smoothed Classifier}
The method by \citet{rosenfeld2020certifiedArXiv} computes, for each pair of classes \((c,c')\), a Chernoff bound \(p_{c,c'}(q)\) that gives an upper-bound on the probability that the randomized classifier switches its predicted class from \(c\) to \(c'\) under the randomized label flips.  

For each class \(c\), the method evaluates  
\[
\max_{c' \in [K] \setminus c} p_{c,c'}(q)
\]  
and defines the predicted class as  
\[
c^* = \arg \min_{c \in [K]} \max_{c' \in [K] \setminus c} p_{c,c'}(q),
\]

\subsection{Computing the Certificate}
The certified radius \(r\) is obtained by plugging the worst-case probability bound  
\(\max_{c' \in [K] \setminus c^*} p_{c^*,c'}(q)\)  
into the robustness guarantee from a result in \citet{rosenfeld2020certifiedArXiv}, giving  
\[
r \leq 
\frac{\log \big( 4p(1-p) \big)}{2(1-2q) \log \!\big( \frac{q}{1-q} \big)},
\]  
where \(p = \max_{c' \in [K] \setminus c^*} p_{c^*,c'}(q)\).  
This bound guarantees that if at most \(r\) labels were flipped by the adversary, the smoothed classifier would still predict \(c^*\).  

\subsection{Adaptation for our use case}
We follow the same prediction rule to obtain \(c^*\).  
However, instead of computing the  radius that certifies that the prediction will not change to \emph{any} other class, we focus on certifying robustness against a specific target class \(c'\).  
This is because our objective is to compute the minimum number of label flips required to change the prediction of a base classifier to \emph{every } class. This white-box information is then used by EnsembleCert to construct a white-box infused certificate for the ensemble.  

Concretely, we use the pairwise Chernoff bound \(p_{c^*,c'}(q)\) and compute  
\[
r_{c'} \leq 
\frac{\log \big( 4p_{c^*,c'}(1-p_{c^*,c'}) \big)}{2(1-2q) \log \!\big( \frac{q}{1-q} \big)},
\]  
which gives the number of label flips required to change the prediction specifically from \(c^*\) to \(c'\).

\newpage
\section{Extended Discussion and Related Work}
\label{ss:ext_dis}

\rebuttal{\textbf{Scarcity of relevant white-box certificates.} Our evaluations of EnsembleCert demonstrate significant improvement in certified robustness when the white-box knowledge of the base classifiers is utilised. Notably, EnsembleCert demands white-box certificates deriving minimum number of samples that need to be tampered with to change the prediction to a particular class. The dearth in works exploring certification of this nature pose an imminent challenge in the way of realising the true potential of EnsembleCert. }

\textbf{Versatility of ScaLabelCert.} Through the formulation $O_1(c')$   (\Cref{eq:O1}), ScaLabelCert derives efficient certificates for sufficiently wide networks that compute the minimum number of label flips needed to change the prediction of the classifier to a \emph{particular} class $c'$. Although these certificates are not exact, we compute sufficiently tight bounds (see \Cref{ss:enr}). %
Note that the certificate definition %
is different from the certified radius, which represents the minimum label flips needed to change the classifier's prediction %
to \emph{any} class. We remind the reader that our certificate for computing the certified radius for a stand-alone model is \textit{exact} and \textit{polynomial-time} calculable (\Cref{ss:exact_multiclass}).
Moreover, ScaLabelCert provides a general framework for certifying kernel SVMs and kernel regression models against label-flipping. Using the NTK is one instance of this framework, enabling efficient certification for sufficiently wide NNs. Finally, a kernel SVM with a sufficiently small $C$ and  kernel regression-based classifiers can be interpreted as \textbf{weighted nearest-neighbor models}, where $Q_t^i$ and $(Q_{\mathrm{eff}})_t^i$ denote the weight of the $i$th neighbor of the test sample $\rvt$ for kernel SVM and kernel regression, respectively. 
From this perspective, ScaLabelCert can also certify \textbf{weighted nearest-neighbor models} against label-flipping attacks in \textbf{polynomial time}, demonstrating its broad applicability. %

\textbf{Potential of EnsembleCert for Certifying Against Clean-Label Attacks.} 
In this work, we utilize EnsembleCert to certify against label-flipping attacks. However, EnsembleCert can also leverage white-box knowledge of base-classifiers to provide robustness guarantees against \textbf{clean-label attacks}. Specifically, consider an adversary capable of corrupting \textit{only} the features of a training sample within an $\ell_p$ ball. Under this threat model, the white-box information $\rho_i^c$ can denote the number of samples that must be corrupted to change the prediction of the $i$th base classifier to class $c$. EnsembleCert can aggregate this white-box information from the base classifiers to compute the number of samples in the \emph{entire} training dataset that need to be corrupted to alter the prediction of the ensemble. In this way, EnsembleCert can be adapted to derive white-box certificates for partition aggregation ensembles under multiple threat models. However, deriving efficient and scalable white-box clean-label certificates for certifying the base classifiers is still an open challenge.

        \textbf{Related Work.} Current ensemble-based poisoning certificates typically use the following ensembling techniques: $(i)$ randomized smoothing \citep{rosenfeld2020certifiedArXiv,wang2020certifyingrobustnessbackdoorattacks,zhang2022bagflip,weber2023rab}, where the randomization is over the training dataset, $(ii)$ partition-based aggregation \citep{levine2020deep, wang2022finite, rezaei2023run}, and $(iii)$ bootstrap aggregation \citep{jia2021intrinsic}, where the base classifiers are trained on independently sampled subsets of the training data. None of these works use white-box knowledge of the base classifiers, making them inherently black-box methods. Apart from the white-box certificates discussed in the introduction \citep{sabanayagam2025exactcertificationgraphneural, sosnin2024certified}, \citet{gosch2025provable_tmlr} is the only other white-box certification method that certifies NNs against clean-label attacks, notably using the NTK approach similar to ours and to \citet{sabanayagam2025exactcertificationgraphneural}. The remaining white-box certificates in the literature do not extend to NNs and apply to only decision trees \citep{meyer2021certifying,drews2020proving}, nearest neighbor models \citep{jia2022certified_knn} or naive Bayes classifiers \citep{bian2024naivebayes}.

\section{ Additional plots }
\label{ss:plots}
\subsection{\rebuttal{Further implementation details and Certification Runtime}}
\label{ss:cert_runtime}
\paragraph{\rebuttal{Hardware.}} \rebuttal{All the experiments were done on an internal cluster. We used GPUs solely for the NTK kernel computation, which was done using the Google \texttt{neural-tangents} library \citep{neuraltangents2020}. As the kernel computation is not the main focus of our work, we refer interested readers to \citep{neuraltangents2020} for details on latency and memory requirements. All the following steps for certificate derivation were executed on CPUs.}
\vspace{-0.3cm}

\paragraph{\rebuttal{Certificate derivation.}} \rebuttal{We process the test data in parallel batches of 100 samples. Recall that for each test sample, EnsembleCert first computes $\rho_i^c$ for every base classifier $i \in [N_p]$ and class $c \in [K]$ using ScaLabelCert, which provides both upper and lower bounds. This requires computing $N_p \times K$ entries before passing this information to EnsembleCert for aggregation. In the current implementation,  white-box information is computed sequentially by iterating over the partitions and classes. However, these computations are inherently parallelizable across both partitions and classes because the certificates are independent. In particular, the lower bound calculation for each $\rho_i^c$ (\Cref{ss:lower_bound}) can be fully vectorized, whereas the upper bound calculation (\Cref{ss:upper_bound}) must be performed independently for each sample. The aggregation step, which combines white-box information to derive the ensemble-level certificate, is also executed independently per sample. This step involves solving a Multiple-Choice Knapsack Problem (MCKP) for each test sample, which to the best of our knowledge, cannot be vectorized efficiently.}
\vspace{-0.2cm}
\paragraph{\rebuttal{Average Certification Time per Sample.}}
\rebuttal{As discussed, both EnsembleCert and ScaLabelCert yield polynomial-time computable certificates. Since lower bound computations are vectorized within each batch, per-sample latency cannot be measured directly. Instead, we report average amortized time, which refers to the total time taken to certify a batch divided by the number of samples in that batch, providing a fair per-sample estimate. We present this average amortized certification time per sample for EnsembleCert in \Cref{fig:avg_cert_time}.}
\begin{figure}[b]
    \centering
    \begin{subfigure}[b]{0.47\linewidth}
        \centering
        \includegraphics[width=\linewidth]{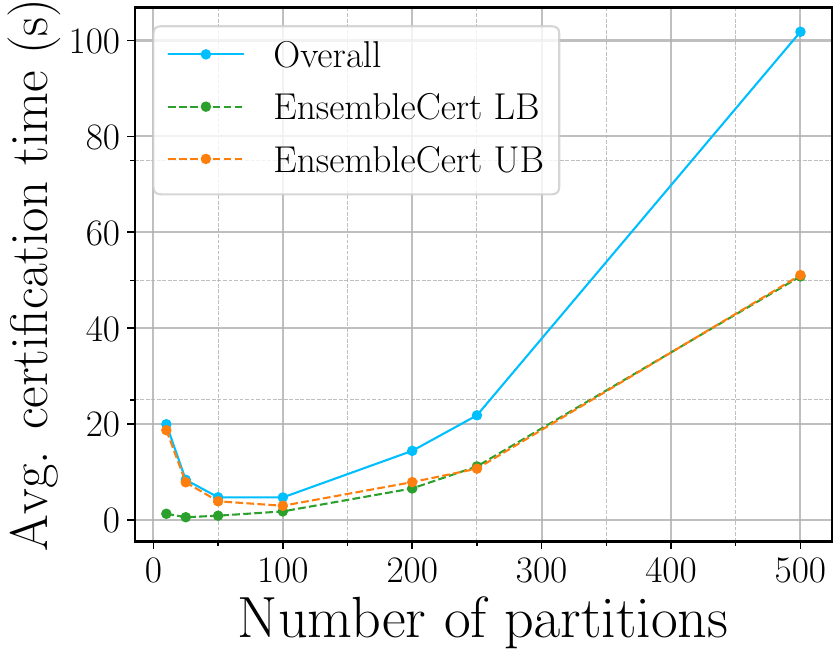}
        \caption{CIFAR-10 Kernel SVM}
        \label{fig:cifar_svm_rt}
    \end{subfigure}
    \hfill
    \begin{subfigure}[b]{0.47\linewidth}
        \centering
        \includegraphics[width=\linewidth]{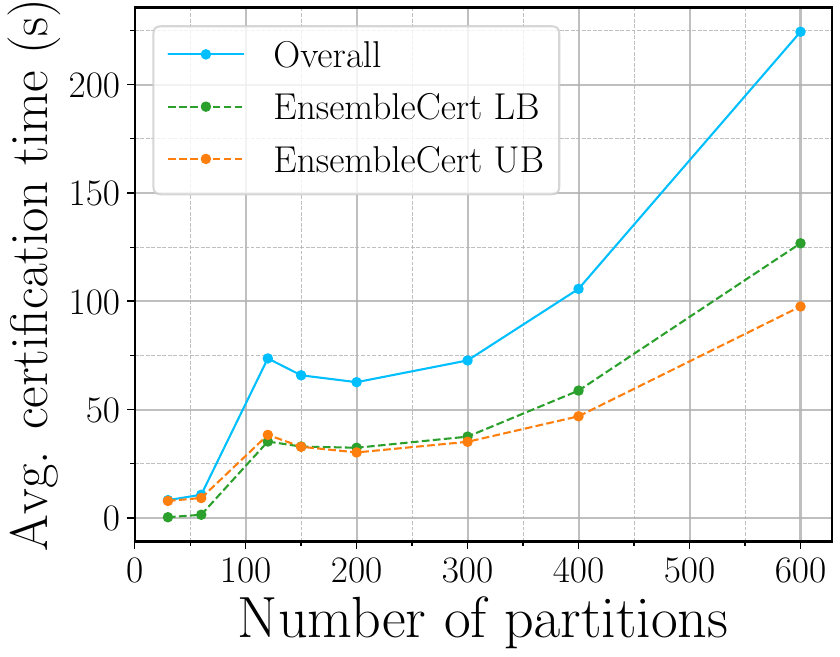}
        \caption{MNIST Kernel SVM}
        \label{fig:mnist_svm_rt}
    \end{subfigure}
\end{figure}
\begin{figure}
    \centering
    \ContinuedFloat
    \begin{subfigure}[b]{0.45\linewidth}
        \centering
        \includegraphics[width=\linewidth]{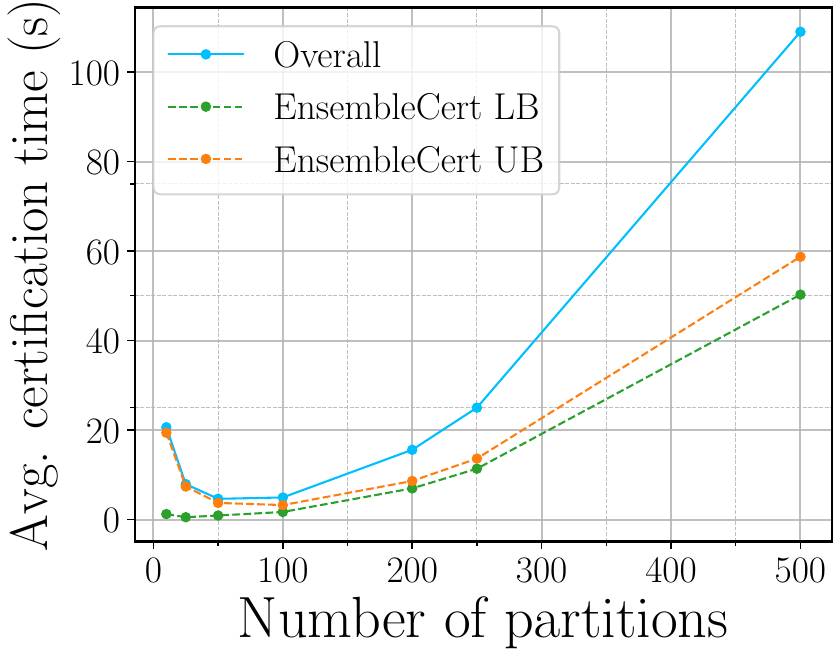}
        \caption{CIFAR-10 Kernel Reg $\lambda$=100}
        \label{fig:cifar_reg_100_rt}
    \end{subfigure}
    \hfill
    \begin{subfigure}[b]{0.45\linewidth}
        \centering
        \includegraphics[width=\linewidth]{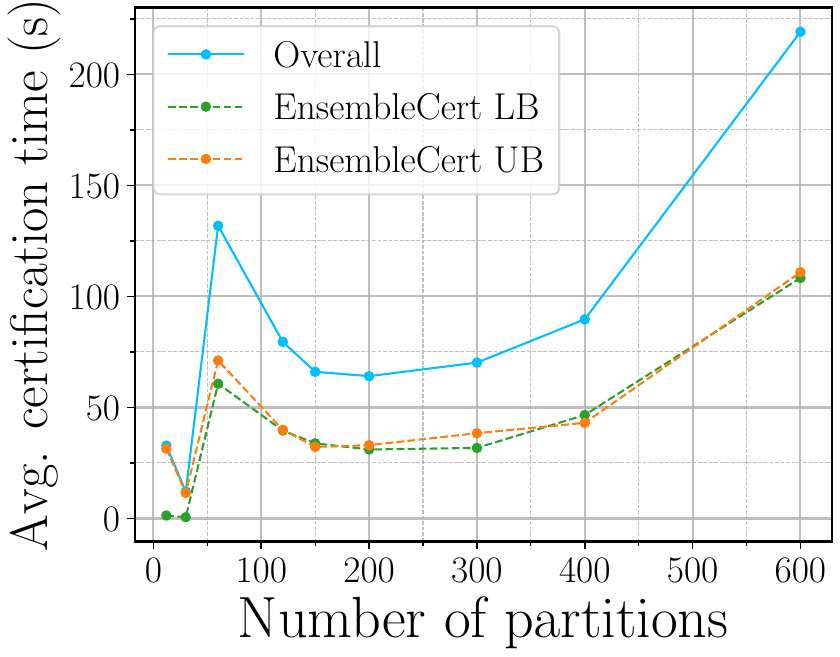}
        \caption{MNIST Kernel Reg $\lambda$=0.1}
        \label{fig:mnist_reg_0.1_rt}
    \end{subfigure}

    \caption{ \rebuttal{Figures (a) and (b) show the average latency per sample for kernel SVM on CIFAR-10 and MNIST, respectively. Figures (c) and (d) present the corresponding results for kernel regression. We report results for a single value of $\lambda$ for each dataset, as the trends are consistent across different $\lambda$ values and do not provide any additional qualitative insight.  }}
    \label{fig:avg_cert_time}
\end{figure}
\noindent \rebuttal{Latencies presented in the figure above also include the training and prediction latencies, which are negligible owing to the simplification under small $C$ for kernel SVM and the closed form solution for kernel Regression. The total certification time per sample is the sum of the latencies for the upper and lower bound computations. Importantly, the upper bound latency is not amortized since the computation is performed sequentially per sample.} 

\rebuttal{For small $N_p$, the number of samples per partition is high, which leads to a noticeable gap between the latencies of upper and lower bound computations.  This is because $(i)$ the lower bound computation is linear in the number of samples whereas the upper bound computation is quadratic and $(ii)$ lower bounds computation is vectorized whereas upper bound computation is done independently. As $N_p$ increases, white-box aggregation latency for both computations becomes dominant due to the quadratic complexity of MCKP in $N_p$, thereby narrowing the gap between the upper and lower bound curves. For CIFAR-10, the average (amortized) certification time per sample goes from as low as 5 seconds for $N_p=50$ to as high as under 2 minutes for $N_p=500$. For MNIST, the average amortized latency varies from 8 sec to a max of around 4 min. }
\paragraph{\rebuttal{Choosing $N_p$.}}
\rebuttal{\Cref{ss:enr} discussed the invariance of robustness to partitioning observed for EnsembleCert with kernel SVMs, and the robustness decay observed with sufficiently regularized kernel regression. This raises questions about the practical utility of using heavy partitioning. Moreover, the experiments in \Cref{fig:avg_cert_time} show that larger ensembles are computationally more expensive to certify using EnsembleCert. When robust base classifiers are employed, it becomes evident that using a low to medium number of partitions offers the best trade-off between achieving high certified robustness and minimizing computational costs.}
\vspace{-0.2cm}

\subsection{\rebuttal{Robustness}-accuracy tradeoff for small C}
\label{ss:small_c_tradeoff} 
\vspace{-0.1cm}

Recall that the key to obtaining polynomial-time computable certificates by ScaLabelCert for infintely-wide networks trained on the hinge loss is choosing a sufficiently small value of $C$. \rebuttal{Hence, this introduces a robustness-accuracy tradeoff as we are constrained to choose $C$ that is sufficiently small to achieve the polynomial-time certificate}. To study this tradeoff, we perform $5$-fold cross-validation for different values of $C$. For every number of partitions, the accuracy initially remains constant up to a certain threshold, indicating the range of sufficiently small $C$, as SVM performance does not depend on $C$ in this regime. The results for CIFAR-10 are shown in \Cref{fig:small_c_tradeoff}. \rebuttal{In addition to studying the tradeoff on an ensemble level, we conduct experiments to study the performance trade-off induced by "sufficiently small C" for stand-alone classifiers as a function of the training set size $N_s$.  For each value of $N_s$, we sub-sample the training set for 5 different random seeds while keeping class balance. The models trained on the sub-sampled training data are evaluated on the entire test dataset.  The results can be seen in \Cref{fig:study_C}. It is evident from these experiments that performance remains competitive in the small $C$ regime. } We do not evaluate on MNIST, as for our chosen kernel and RotNet preprocessing, the threshold for sufficiently small $C$ is significantly larger (order of $10^2$).

\begin{figure}[ht]
    \centering
    \begin{subfigure}{0.32\linewidth}
        \centering
        \includegraphics[width=\linewidth]{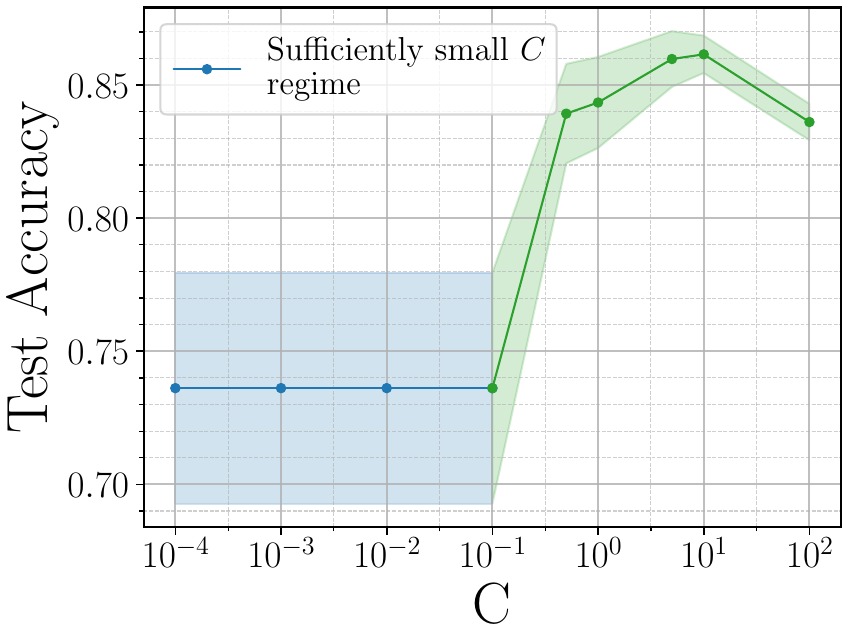}
        \caption{$N_s= 100$}
        \label{fig:sub1}
    \end{subfigure}\hfill
    \begin{subfigure}{0.32\linewidth}
        \centering
        \includegraphics[width=\linewidth]{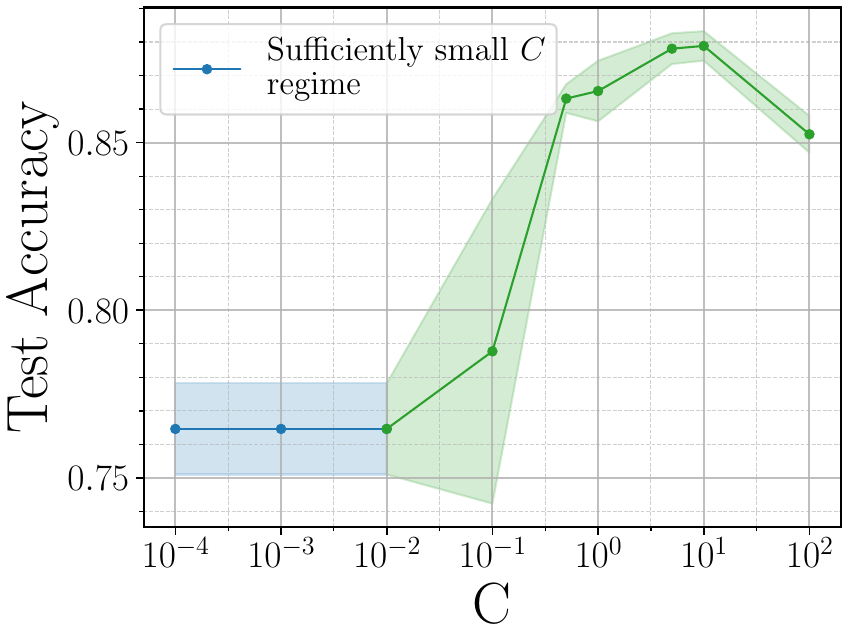}
        \caption{$N_s= 200$}
        \label{fig:sub2}
    \end{subfigure}\hfill
    \begin{subfigure}{0.32\linewidth}
        \centering
        \includegraphics[width=\linewidth]{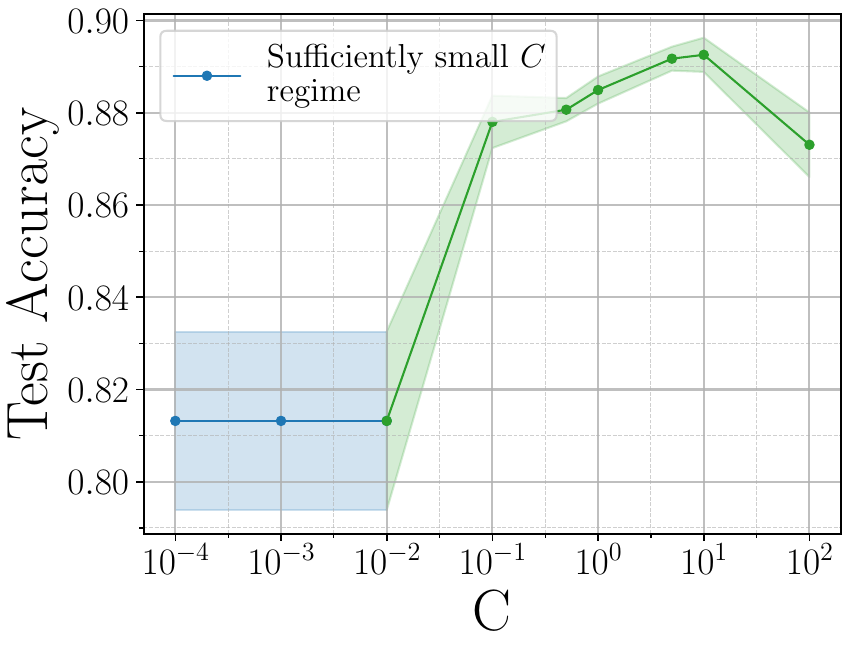}
        \caption{$N_s=500$}
        \label{fig:sub3}
    \end{subfigure}
\end{figure}

    \vspace{0.5cm} %

\begin{figure}
    \centering
    \ContinuedFloat
    \begin{subfigure}{0.32\linewidth}
        \centering
        \includegraphics[width=\linewidth]{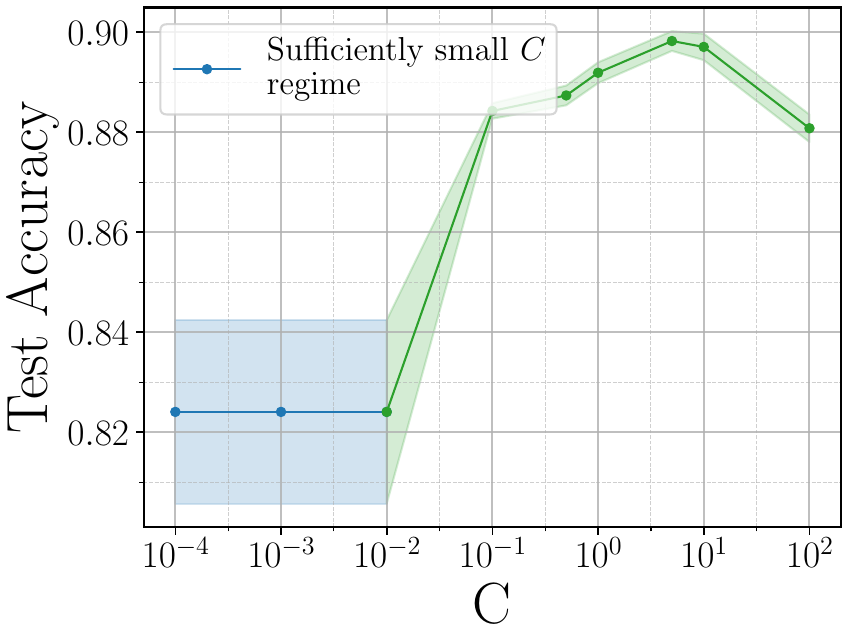}
        \caption{$N_s=1000$}
        \label{fig:sub4}
    \end{subfigure}\hfill
    \begin{subfigure}{0.32\linewidth}
        \centering
        \includegraphics[width=\linewidth]{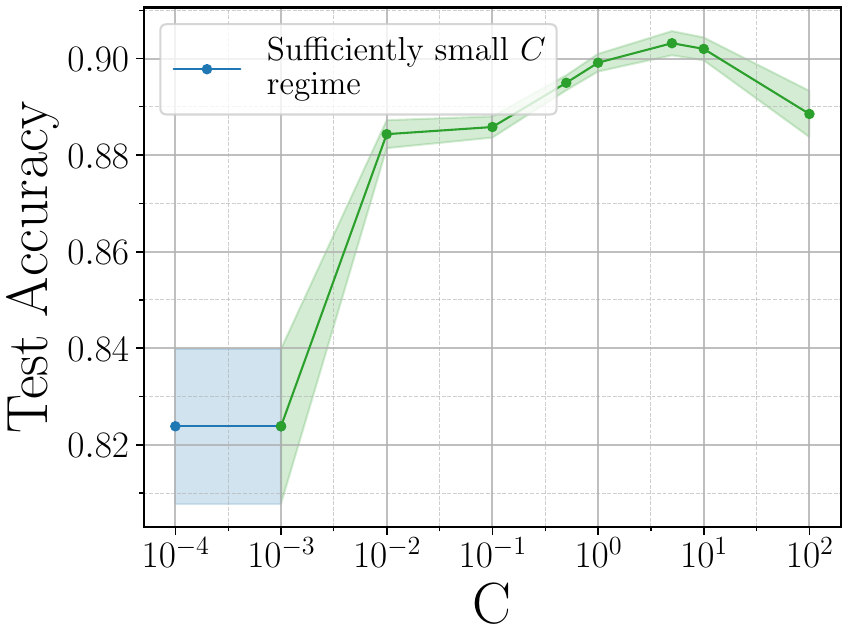}
        \caption{$N_s=2000$}
        \label{fig:sub5}
    \end{subfigure}\hfill
    \begin{subfigure}{0.32\linewidth}
        \centering
        \includegraphics[width=\linewidth]{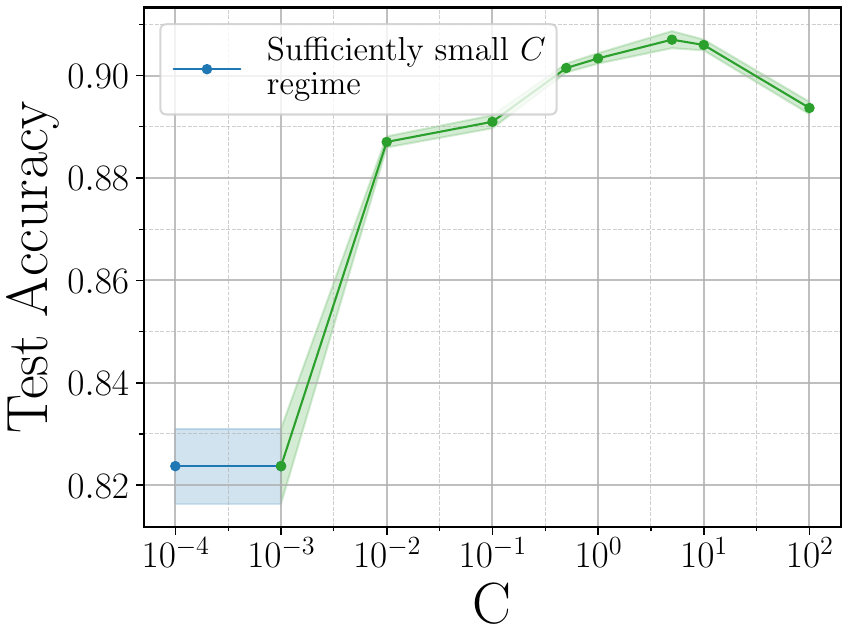}
        \caption{$N_s=5000$}
        \label{fig:sub6}
    \end{subfigure}

    \caption{\rebuttal{Performance of an infinitely-wide neural network with a single trained on hinge loss across different values of $C$ and the cardinality of the training dataset $N_s$. For each value of $N_s$, we subsample the training set for 5 different random seeds. The accuracy for each value of $N_s$ initially remains constant up to a certain threshold for $C$, indicating the range
    of sufficiently small C, as the SVM performance does not depend on C in this regime. Shading around the central line indicates the standard deviation. In each plot, the region of sufficiently small $C$ is colored blue for distinction.For each plot, the accuracy initially remains constant up to a certain threshold, indicating the range of sufficiently small $C$, as SVM performance does not depend on $C$ in this regime. } }
    \label{fig:study_C}
\end{figure}

\begin{figure}[h]  %
    \centering
    \includegraphics[width=0.7
    \textwidth]{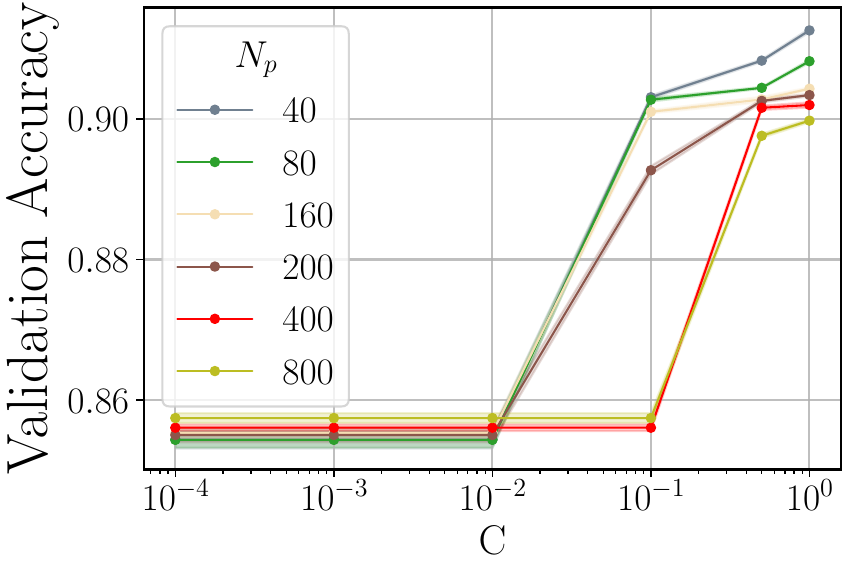}
    \caption{Robustness-accuracy trade-off introduced by  sufficiently small C. \rebuttal{While \Cref{fig:study_C} studies empirical performance of stand-alone models, the impact of small $C$ ion the empirical performance of the ensemble is studies here.}}
    \label{fig:small_c_tradeoff}
\end{figure}

\clearpage

\subsection{\rebuttal{Comparison of ScaLabelCert with the Gradient-based Bounding Certificate \citep{sosnin2024certified}}}
\label{ss:slc_vs_agt}
\rebuttal{We compare the performance of ScaLabelCert with the gradient-based parameter bounding method proposed by \citet{sosnin2024certified} on CIFAR-10. The gradient-based approach uses convex relaxations to over-approximate all possible parameter updates under a given poisoning threat model. The parameter bounds are then propagated to bound the logits for individual classes. The robustness of the prediction for a particular sample can then be certified by checking
if the lower bound on the output logit for the predicted class is greater than the upper bounds of all other
classes given the perturbation budget. If this condition is satisfied, the prediction is certifiably robust under the given budget. 
To evaluate this method, we add a linear layer on top of the CIFAR-10 features extracted via SimCLR. We train the network for 2 epochs and keep other parameters consistent with the codebase for \citet{sosnin2024certified}. For ScaLabelCert, the same SimCLR features are input to an infinitely wide fully-connected network with a single hidden layer and no non-linear activation, as described in \Cref{ss:enr}. Although the models are not identical, the architectures are structurally aligned as both models rely on fully connected layers without activations, making the comparison meaningful. As shown in \Cref{fig:slc_v_agt}, ScaLabelCert consistently outperforms the gradient-based method. This improvement highlights the importance of exact certification: while the gradient-based approach is inherently limited by the looseness of its over-approximations, ScaLabelCert provides tight guarantees, resulting in stronger certified robustness.}

\vspace{0.5cm}
\begin{figure}[h]
    \label{fig:slc_v_agt}
    \centering
    
    \includegraphics[width=0.7\linewidth]{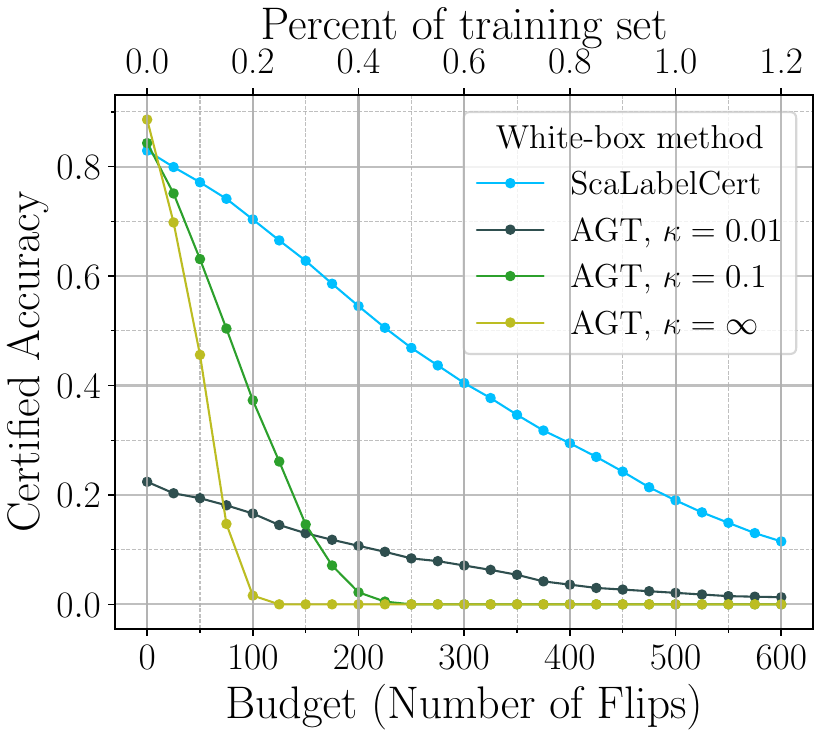}
    \caption{\rebuttal{Comparison of ScaLabelCert and gradient-based parameter bounding method. The parameter $\kappa$ represents the gradient clipping parameter. ScaLabelCert significantly outperforms the gradient-based method for all values of $\kappa$.}}
    \label{fig:placeholder}
\end{figure}

\FloatBarrier

\newpage
\subsection{\rebuttal{EnsembleCert with finite-width networks as base classifiers}}
\label{ss:agt_ec}
\rebuttal{The certificates derived by ScaLabelCert are asymptotically exact and deterministic for neural networks as the width of the network goes to infinity. Hence, ScaLabelCert is best suited for certifying infinite-width neural networks. To demonstrate that EnsembleCert can provide deterministic certificates even when finite-width networks are used as base classifiers, we instantiate EnsembleCert with a fully connected linear classifier as the base model and utilize the gradient-based parameter bounding method by \citet{sosnin2024certified}, introduced in the previous section, to certify the base classifiers. The gradient-based method certifies, for a given sample, whether the model’s prediction remains unchanged when at most 
$r$ training labels are flipped. However, this guarantee does not directly match the white-box quantity required by EnsembleCert, which is the minimum number of label flips needed to change the prediction to a \textit{particular} class. In what follows, we explain how this gradient-based certificate can be incorporated into EnsembleCert..}

\vspace{0.3cm}
\rebuttal{\textbf{Certificate alignment.}
For any certificate that verifies whether a model’s prediction remains robust under a fixed perturbation budget of $r$ label flips, one can obtain a certificate for the \emph{minimum} number of label flips needed to change the prediction to \emph{some} class by applying the fixed-budget certificate incrementally—starting at $r=0$ and increasing $r$ until the certificate first indicates non-robustness.  
Note that EnsembleCert requires white-box information quantifying the minimum number of label flips needed to change the prediction of a base classifier to a \emph{specific} target class. The certificate that computes the minimum flips needed to change the prediction to \emph{some} class then serves as a valid lower bound on $\rho_i^c$, the minimum number of label flips required to force the $i$-th base classifier to predict a particular class~$c$.  
This relationship enables us to incorporate the gradient-based certificate into EnsembleCert.
}

\vspace{0.3cm}

\rebuttal{\textbf{Experiments.} We evaluate EnsembleCert on ensembles with finite-width linear networks as base classifiers by applying the gradient-based certificate to each base classifier. As described in the previous section, this certificate can be incorporated into EnsembleCert. Briefly, for each sample we compute the minimum number of label flips required to change the prediction of base classifier $i$ to \emph{some} class by applying the gradient-based certificate over increasing budgets until robustness fails. We then use this value as $\rho_i^c$ for every class $c$. Because this incremental application of the gradient-based method is computationally demanding, our experiments on finite-width models are limited to CIFAR-10 and a small number of partitions. Nevertheless, as shown in \Cref{fig:agt_ec}, incorporating white-box information through EnsembleCert substantially improves certified accuracy. This demonstrates that EnsembleCert extends naturally to ensembles built from finite-width neural networks. Moreover, the integration procedure highlights a broader utility: any certificate that determines whether a model remains robust up to a given number of label flips can be adapted to certify the base classifiers within EnsembleCert.}

\vspace{0.7cm}

\begin{figure}[h]

    \centering
    \begin{subfigure}{0.32\linewidth}
        \centering
        \includegraphics[width=\linewidth]{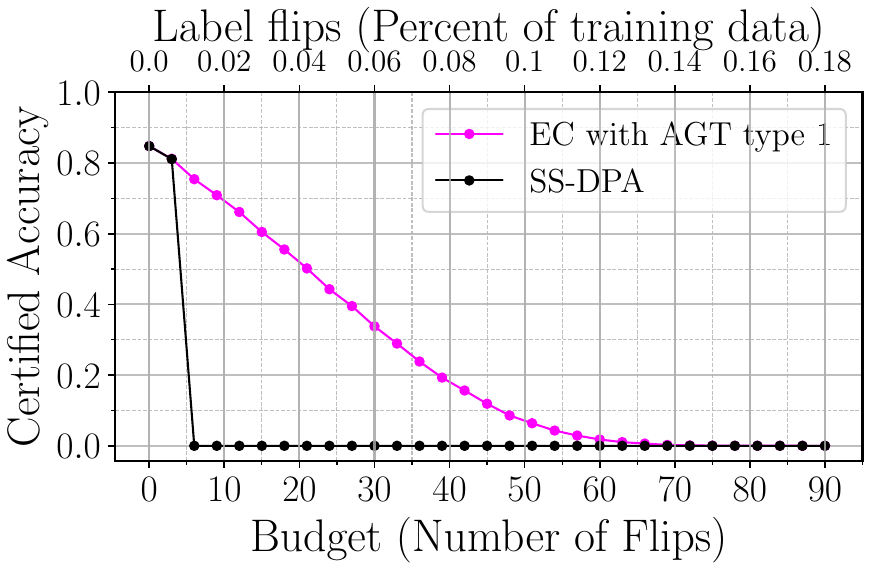}
        \caption{\rebuttal{$N_p=10$}}    
    \end{subfigure}
    \begin{subfigure}{0.32\linewidth}
        \centering
        \includegraphics[width=\linewidth]{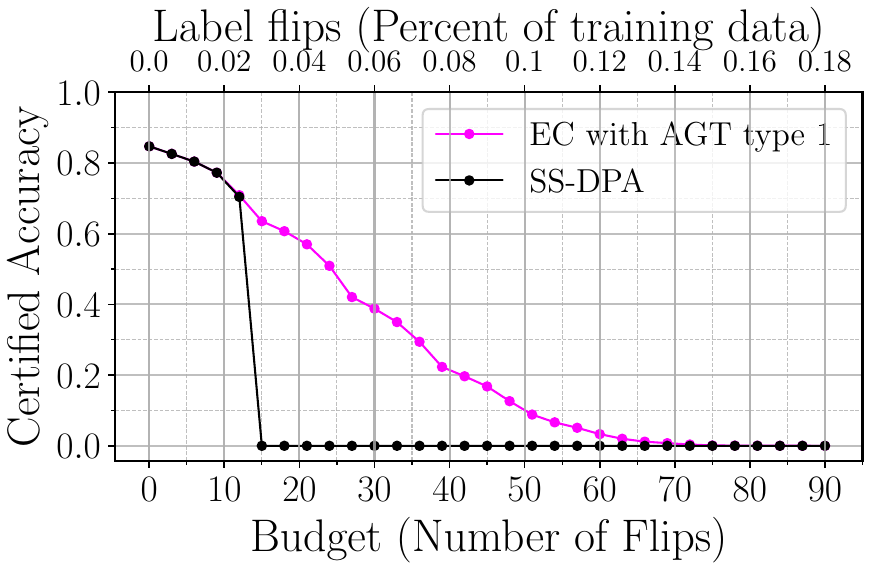}
        \caption{\rebuttal{$N_p=25$}}
    \end{subfigure}
    \begin{subfigure}{0.32\linewidth}
        \centering
        \includegraphics[width=\linewidth]{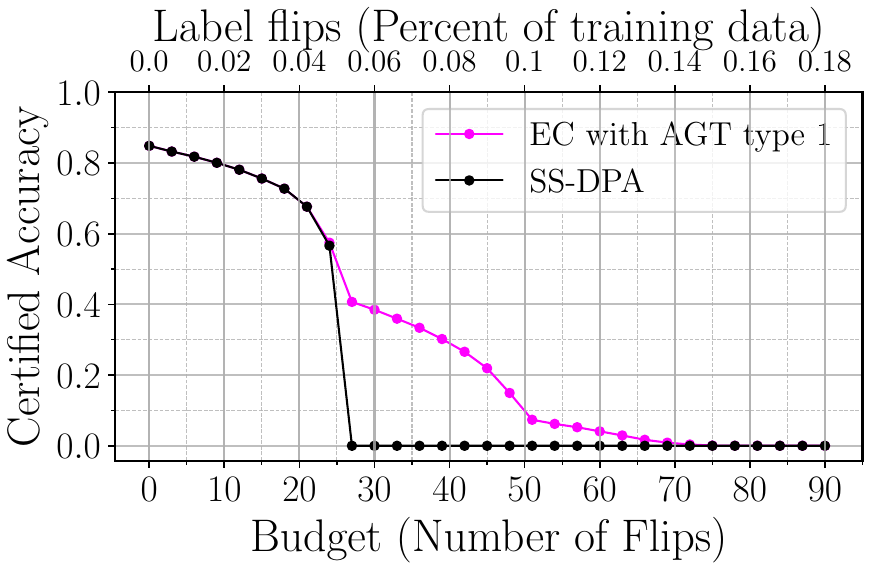}
        \caption{\rebuttal{$N_p=50$}}
    \end{subfigure}

    \caption{\rebuttal{EnsembleCert evaluated on ensembles with finite-width network as base classifiers. The base classifiers are certified by applying the gradient-based bounding technique by \citet{sosnin2024certified}. Evidently, EnsembleCert significantly outperforms SS-DPA, the black-box approach applied to the ensemble. } }
    \label{fig:agt_ec}
\end{figure}

\newpage
\subsection{Robustness trends across $\lambda$ for EnsembleCert with Kernel Regression}
\label{ss:lambda_study_enr}
We study the effect of the $\mathcal{L}_2$ regularization parameter $\lambda$ on certified robustness. For low $\lambda$, the %
MCR increases with the number of partitions across all datasets. In contrast, for high $\lambda$, the \textbf{certified robustness appears to be negatively affected by increasing the number of partitions}.
The contrasting behavior potentially arises due to the varying degree of robustness that the choice of $\lambda$ imparts the base classifier. Low $\lambda$ makes kernel regression unstable, causing base classifiers to be easily influenced by a few label flips, even with large partitions. In this regime, the ensemble is closer to the black-box assumption  -  that a single label flip can change a base classifier’s prediction - even when the number of partitions is small,  resulting in white-box certificates that aren’t much tighter than the black-box ones.
In contrast, using a high degree of regularization exhibits a substantial improvement in the certified accuracy on white-box infusion. \Cref{fig:lambda_study_cifar} shows how the robustness trend changes with varying $\lambda$ for CIFAR-10. While at $\lambda=0.01$, the median certified robustness scales almost linearly with the number of partitions, the trend completely changes by the time we reach $\lambda=100$. Empirical analysis suggests that the nature of the trend changes somewhere between $\lambda=1$ and $\lambda=5$. Interestingly, the change in trend also points towards the possibility that for some value between $\lambda=1$ and $\lambda=5$, the median certified robustness could exhibit invariance to the number of partitions, a phenomenon we observed  with kernel SVMs. Similar behavior is seen for experiments on MNIST as well (\Cref{fig:lambda_study_mnist}). As we mentioned in \Cref{ss:enr}, the threshold $\lambda$, where the behavior changes is data dependent.

\begin{figure}[ht]
    \centering
    \begin{subfigure}{0.24\linewidth}
        \centering
        \includegraphics[width=0.95\linewidth]{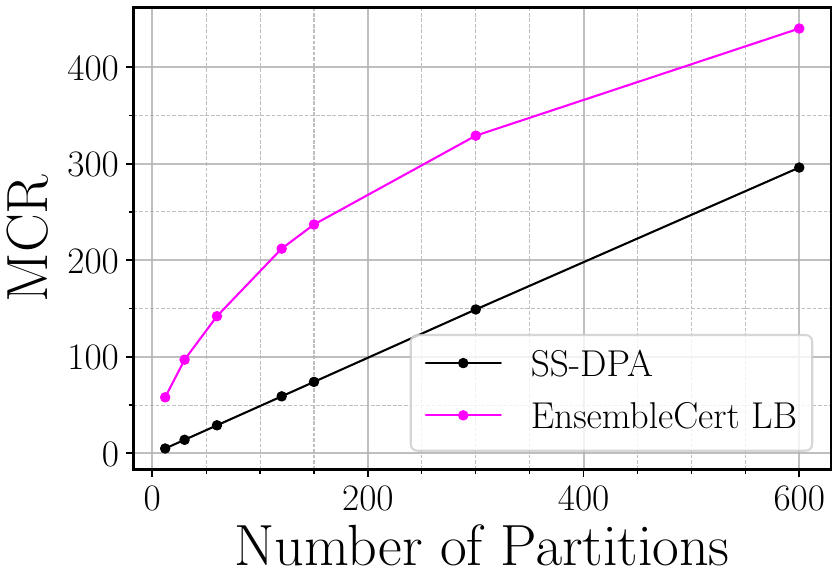}
        \subcaption{$\lambda = 0.0001$}
    \end{subfigure}
    \hfill
    \begin{subfigure}{0.24\linewidth}
        \centering
        \includegraphics[width=0.95\linewidth]{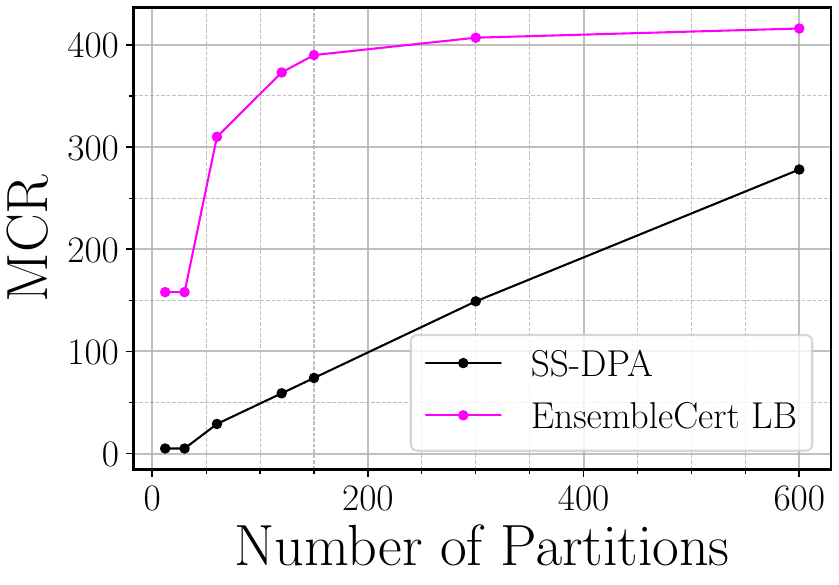}
        \subcaption{$\lambda = 0.01$}
    \end{subfigure}
    \hfill
    \begin{subfigure}{0.24\linewidth}
        \centering
        \includegraphics[width=0.95\linewidth]{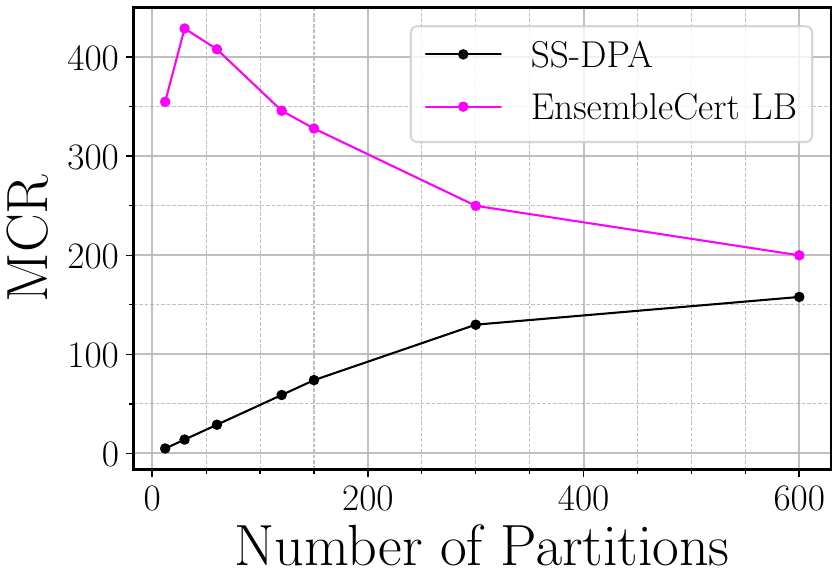}
        \subcaption{$\lambda = 0.01$}
    \end{subfigure}
    \hfill
    \begin{subfigure}{0.24\linewidth}
        \centering
        \includegraphics[width=0.95\linewidth]{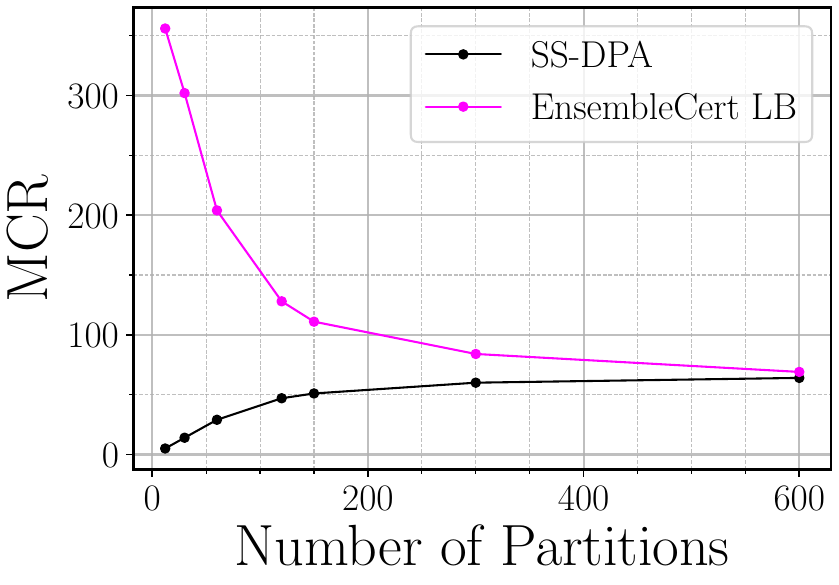}
        \subcaption{$\lambda = 0.1$}
    \end{subfigure}

    \caption{Robustness trends across different $\lambda$ values on MNIST using kernel regression.}
    \label{fig:lambda_study_mnist}
\end{figure}
\FloatBarrier

\begin{figure}[ht]
    \centering
    \begin{subfigure}{0.32\linewidth}
        \centering
        \includegraphics[width=\linewidth]{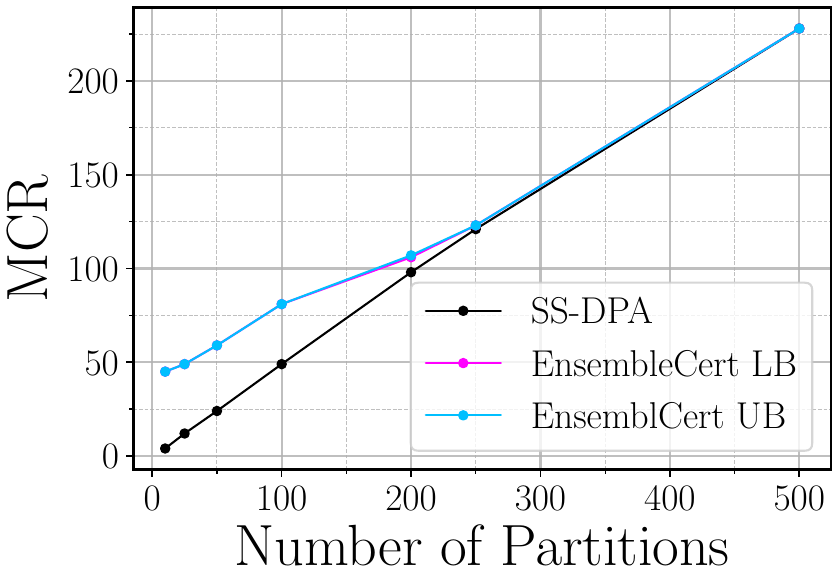}
        \subcaption{$\lambda = 0.01$}
    \end{subfigure}
    \hfill
    \begin{subfigure}{0.32\linewidth}
        \centering
        \includegraphics[width=\linewidth]{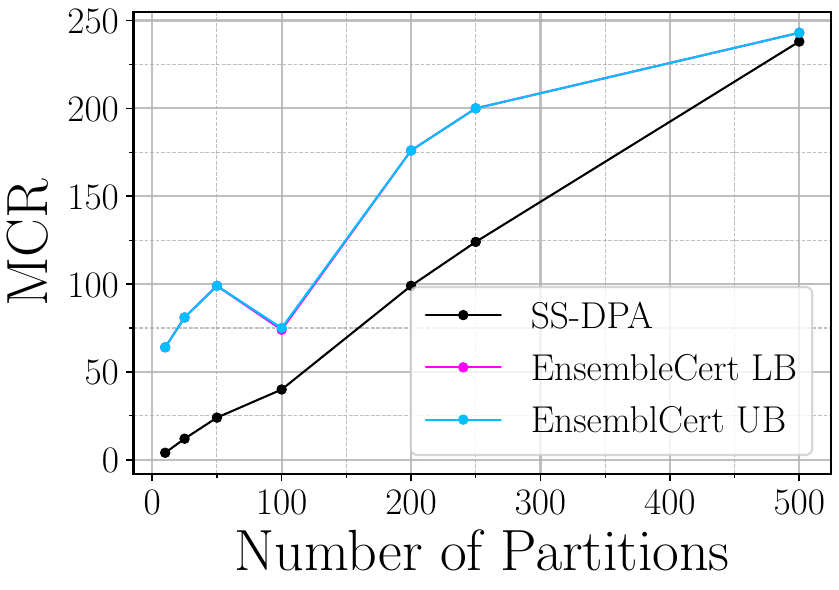}
        \subcaption{$\lambda = 0.1$}
    \end{subfigure}
    \hfill
    \begin{subfigure}{0.32\linewidth}
        \centering
        \includegraphics[width=\linewidth]{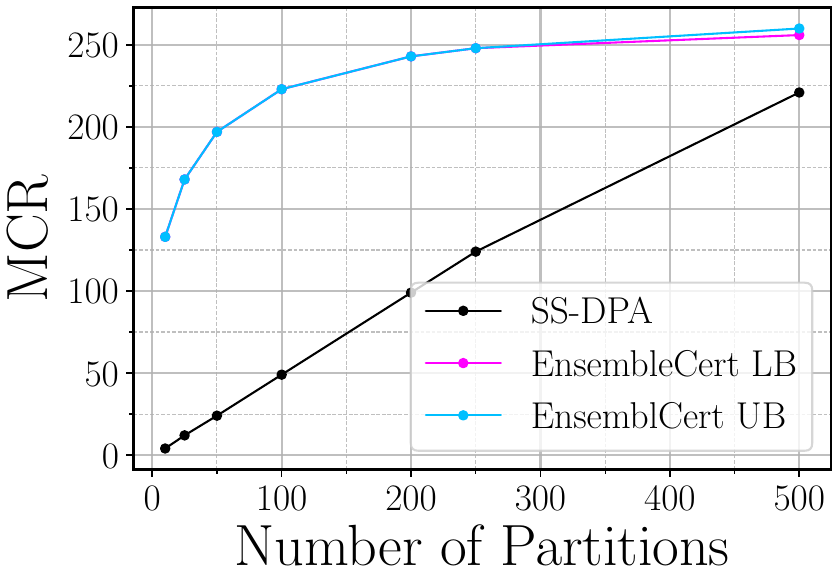}
        \subcaption{$\lambda = 1$}
    \end{subfigure}

    \vspace{0.5em}

    \begin{subfigure}{0.32\linewidth}
        \centering
        \includegraphics[width=\linewidth]{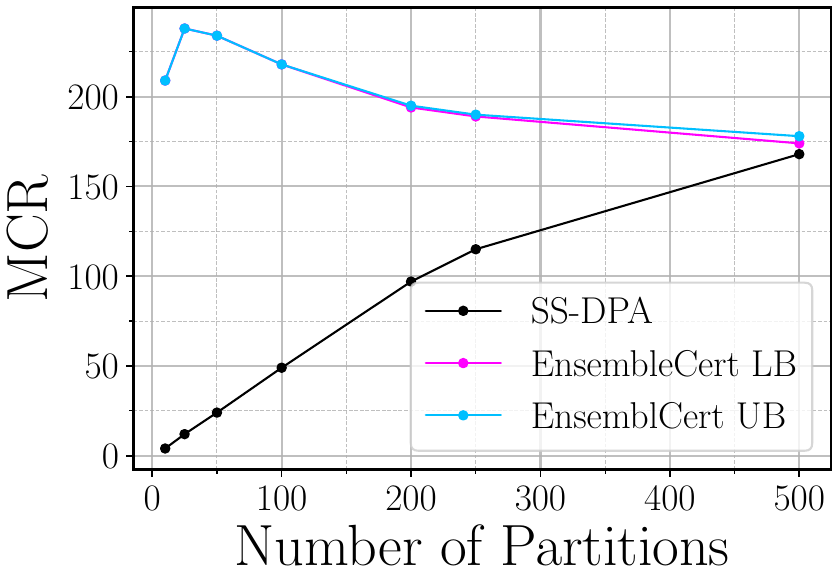}
        \subcaption{$\lambda = 5$}
    \end{subfigure}
    \hfill
    \begin{subfigure}{0.32\linewidth}
        \centering
        \includegraphics[width=\linewidth]{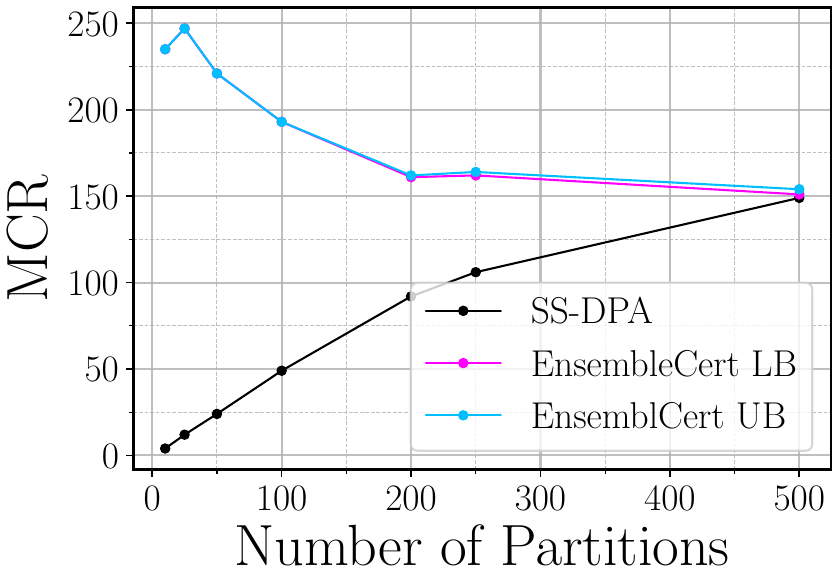}
        \subcaption{$\lambda = 10$}
    \end{subfigure}
    \hfill
    \begin{subfigure}{0.32\linewidth}
        \centering
        \includegraphics[width=\linewidth]{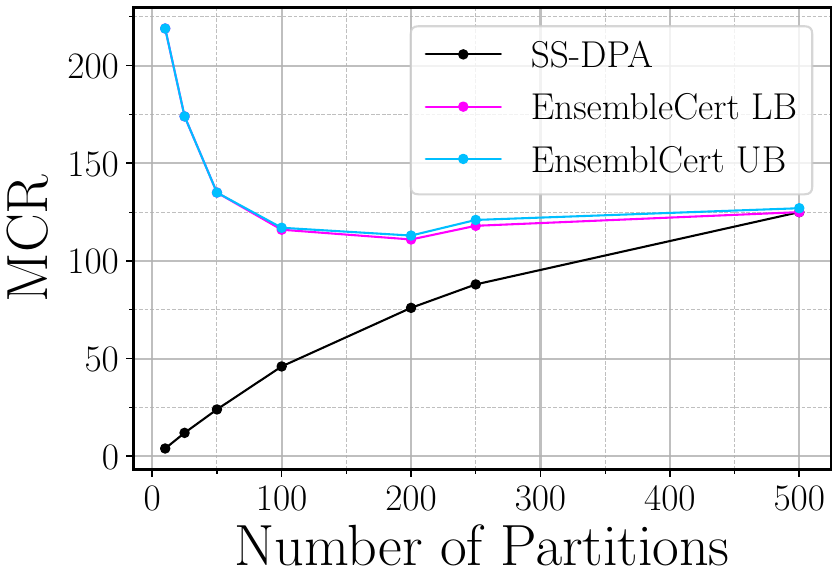}
        \subcaption{$\lambda = 100$}
    \end{subfigure}

    \caption{Robustness trends across different $\lambda$ values on CIFAR-10 using kernel regression.}
    \label{fig:lambda_study_cifar}
\end{figure}
\FloatBarrier

\newpage
\subsection{MNIST 1-vs-7}
\label{ss:results_binary}

\textbf{Kernel Regression.}
We present results for evaluation using kernel regression with NTK and the $\ell_2$ regularization parameter $\lambda = 0.1$ in \Cref{fig:mnist_binary_kernel_grid}.
\begin{figure}[h]
    \centering
    \begin{subfigure}[b]{0.3\textwidth}
        \centering
        \includegraphics[width=0.95\linewidth]{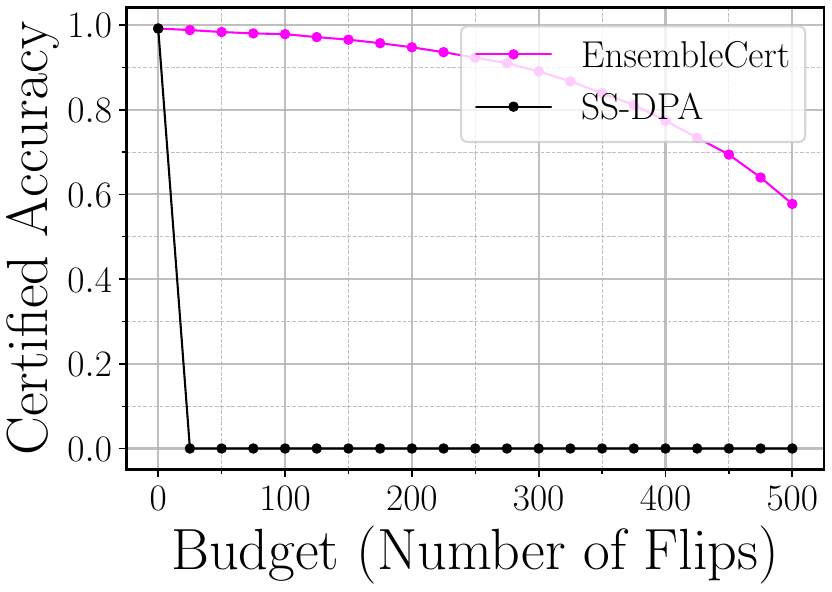}
        \caption{Partitions: 10}
    \end{subfigure}
    \begin{subfigure}[b]{0.3\textwidth}
        \centering
        \includegraphics[width=0.95\linewidth]{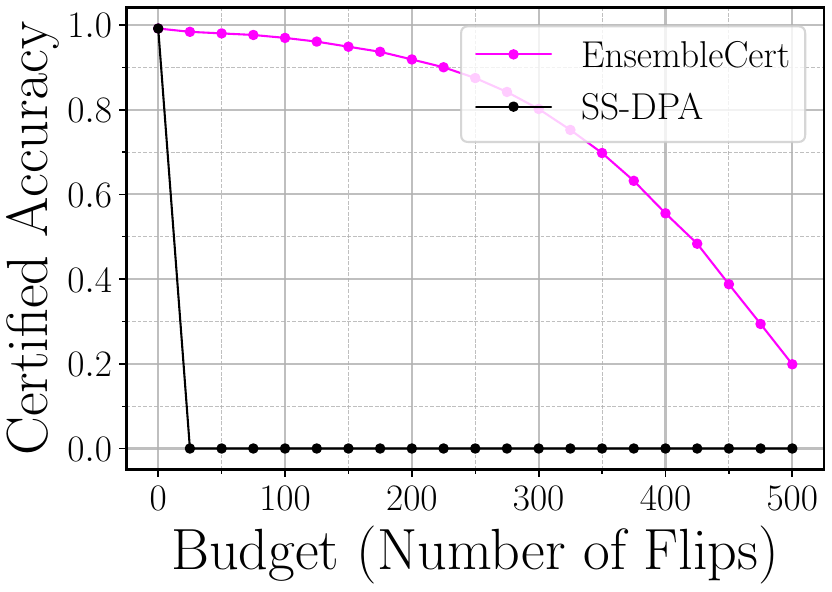}
\caption{Partitions: 25}
    \end{subfigure}
    \begin{subfigure}[b]{0.3\textwidth}
        \centering
        \includegraphics[width=0.95\linewidth]{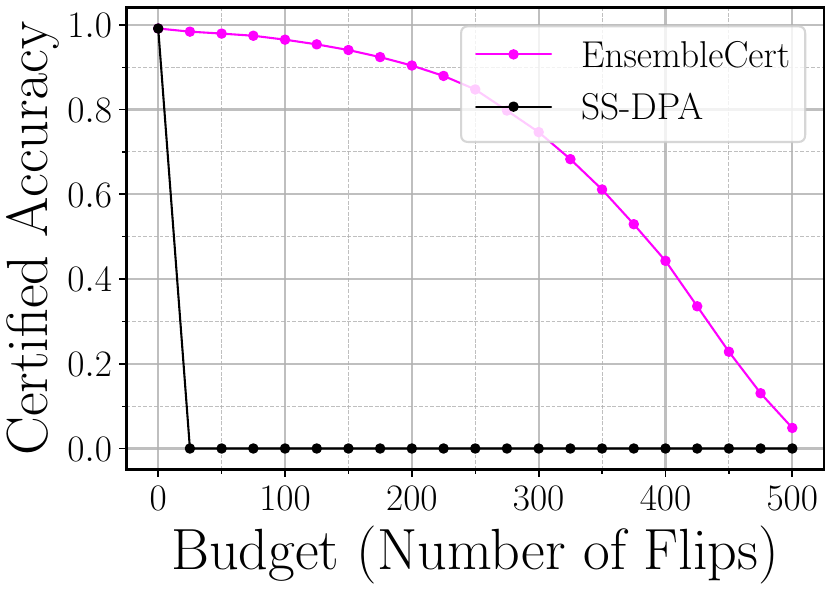}
    \caption{Partitions: 50}
    \end{subfigure}

    \begin{subfigure}[b]{0.3\textwidth}
        \centering
        \includegraphics[width=0.95\linewidth]{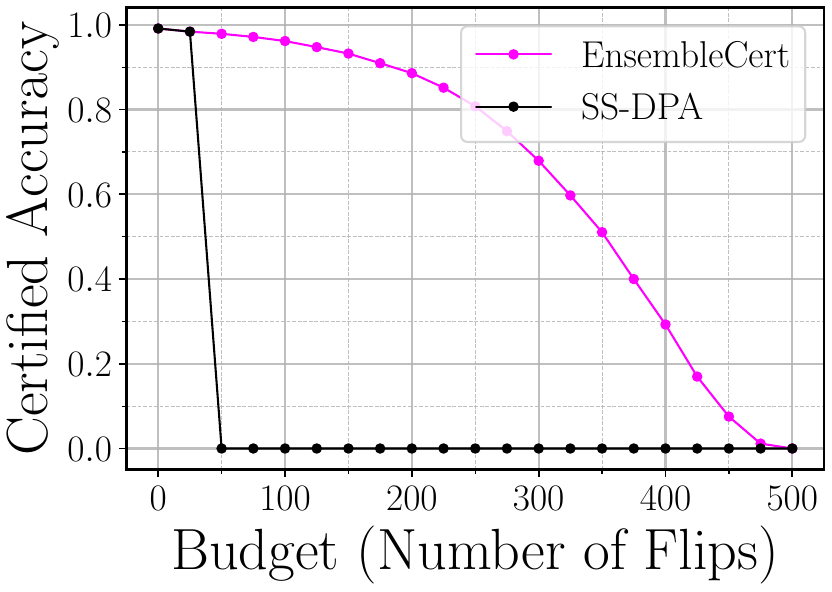}
        \caption{Partitions: 100}
    \end{subfigure}
    \begin{subfigure}[b]{0.3\textwidth}
        \centering
        \includegraphics[width=0.95\linewidth]{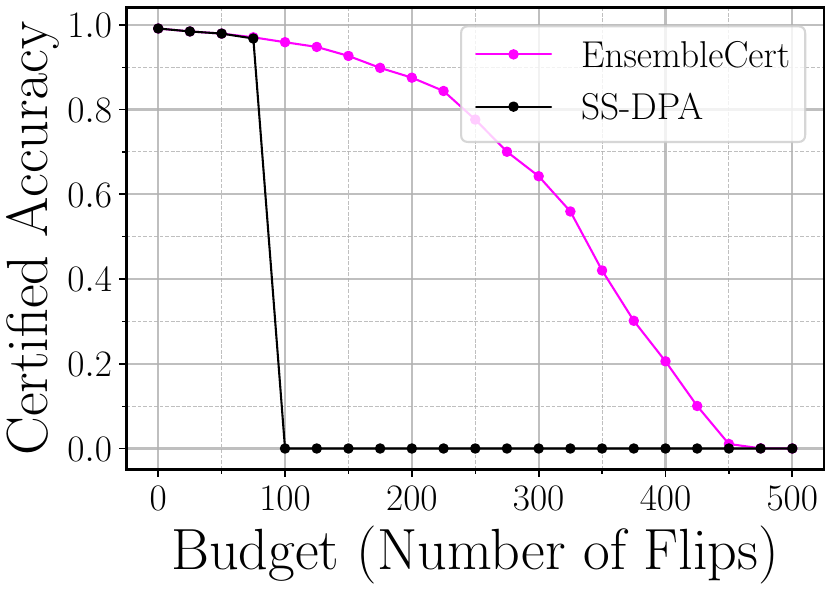}
    \caption{Partitions: 200}
    \end{subfigure}
    \begin{subfigure}[b]{0.3\textwidth}
        \centering
        \includegraphics[width=0.95\linewidth]{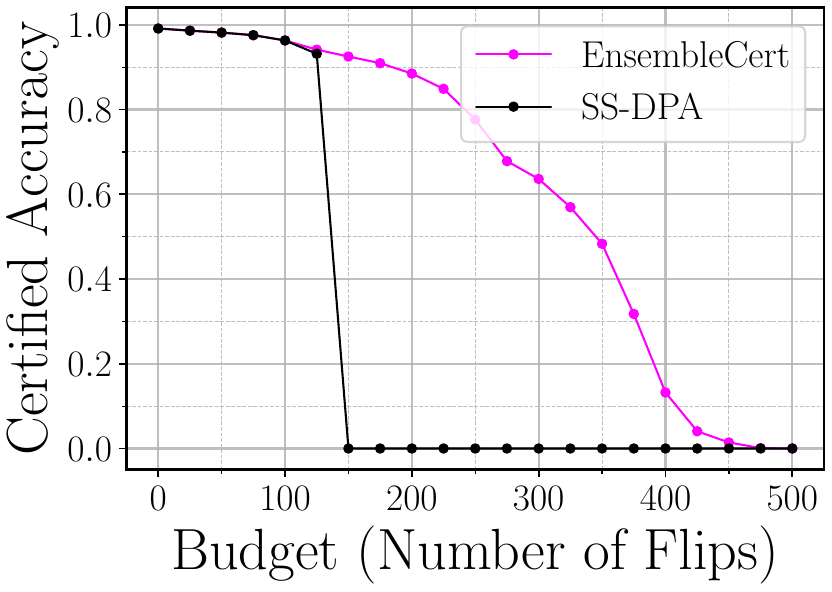}
    \caption{Partitions: 300}\end{subfigure}
    
    \begin{subfigure}[b]{0.3\textwidth}
        \centering
        \includegraphics[width=0.95\linewidth]{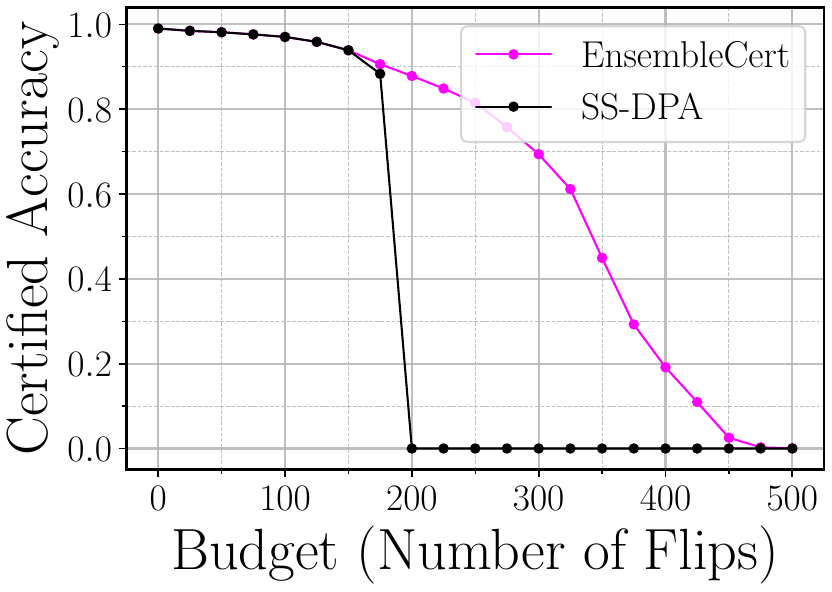}
    \caption{Partitions: 400}
    \end{subfigure}
    \begin{subfigure}[b]{0.3\textwidth}
        \centering
        \includegraphics[width=0.95\linewidth]{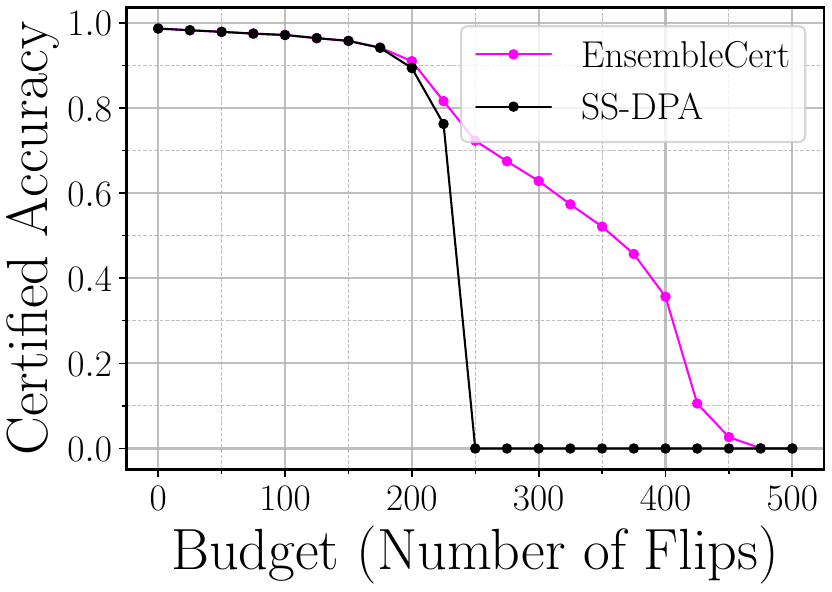}
    \caption{Partitions: 500}
    \end{subfigure}
    \begin{subfigure}[b]{0.3\textwidth}
        \centering
        \includegraphics[width=0.95\linewidth]{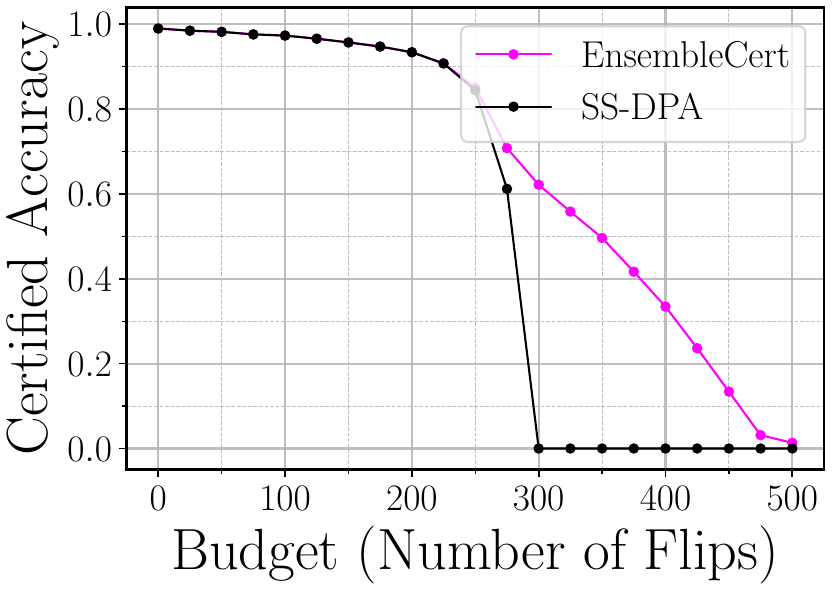}
    \caption{Partitions: 600}
    \end{subfigure}

    \caption{MNIST binary kernel regression with $\lambda = 0.1$ results for different number of partitions.}
    \label{fig:mnist_binary_kernel_grid}
\end{figure}
\FloatBarrier

\textbf{Kernel SVM}

We present results for evaluation using kernel SVM  with NTK and a sufficiently small $C$ in \Cref{fig:mnist_binary_kernel_svm_grid}.

\begin{figure}[h]
    \centering
    \begin{subfigure}[b]{0.3\textwidth}
        \centering
        \includegraphics[width=0.95\linewidth]{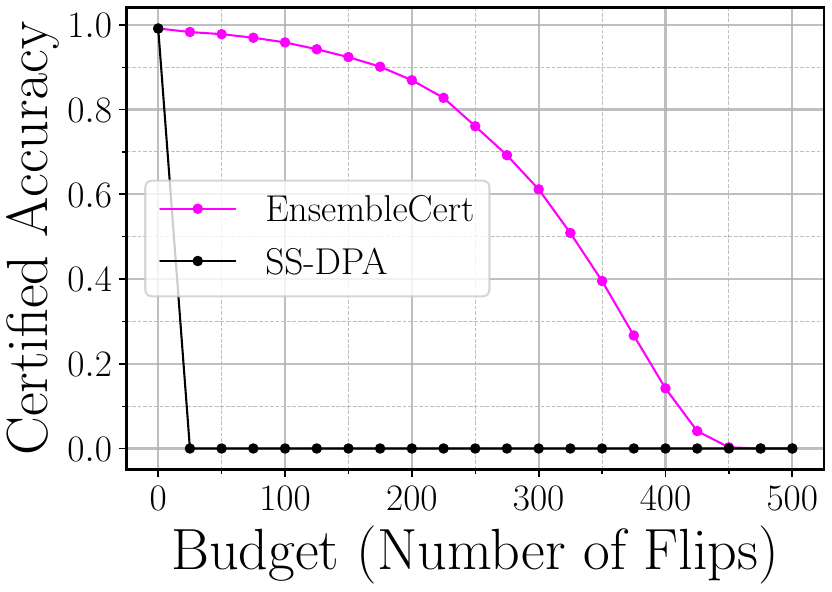}
    \caption{Partitions: 10}
    \end{subfigure}
    \begin{subfigure}[b]{0.3\textwidth}
        \centering
        \includegraphics[width=0.95\linewidth]{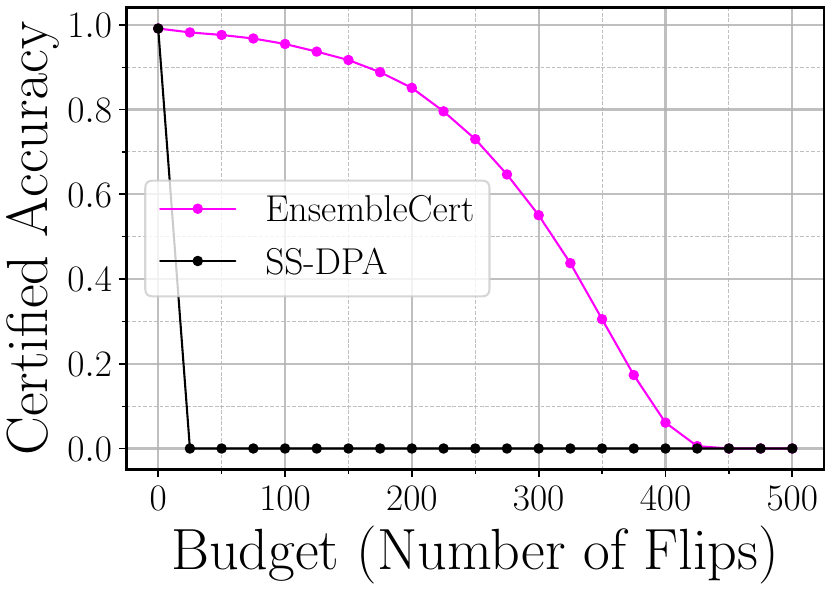}
    \caption{Partitions: 25}
    \end{subfigure}
    \begin{subfigure}[b]{0.3\textwidth}
        \centering
        \includegraphics[width=0.95\linewidth]{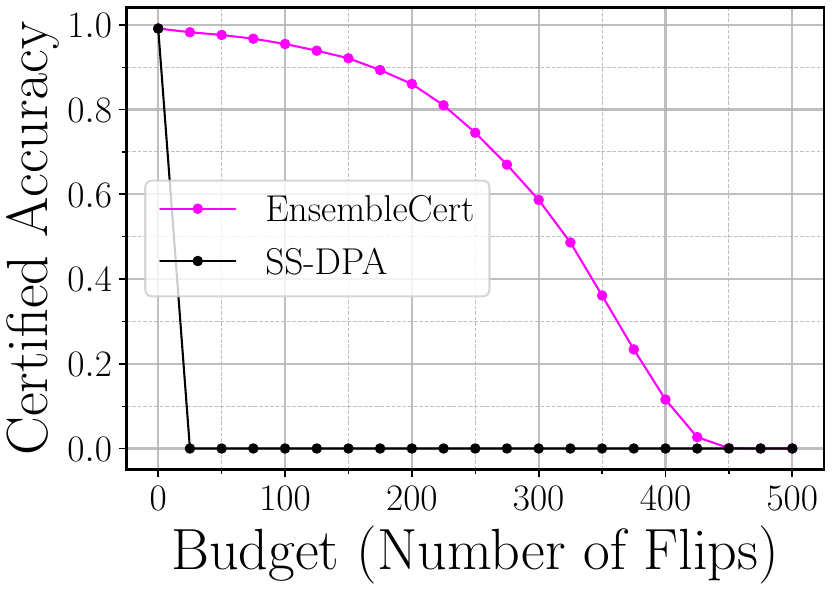}
        \caption{Partitions: 50}
    \end{subfigure}

    \begin{subfigure}[b]{0.3\textwidth}
        \centering
        \includegraphics[width=0.95\linewidth]{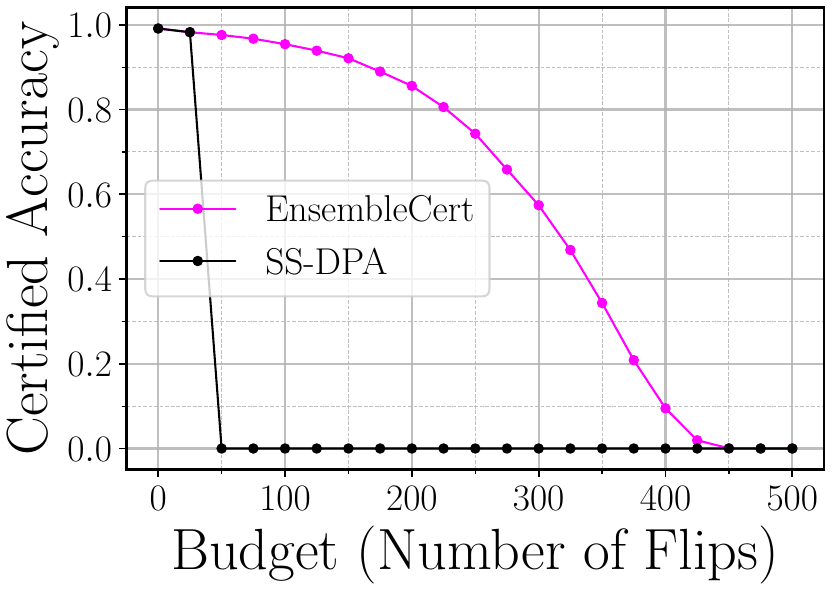}
        \caption{Partitions: 100}
    \end{subfigure}
    \begin{subfigure}[b]{0.3\textwidth}
        \centering
        \includegraphics[width=0.95\linewidth]{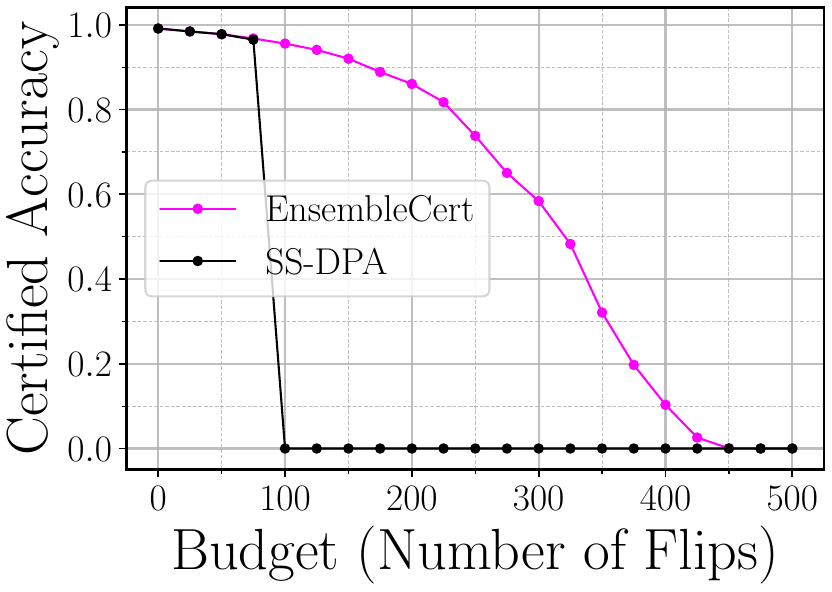}
        \caption{Partitions: 200}
    \end{subfigure}
    \begin{subfigure}[b]{0.3\textwidth}
        \centering
        \includegraphics[width=0.95\linewidth]{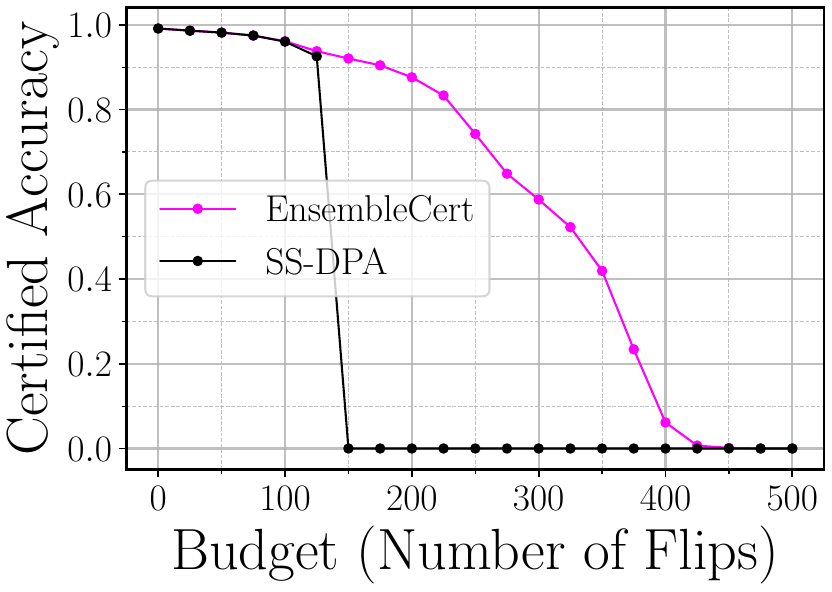}
        \caption{Partitions: 300}
    \end{subfigure}

    \begin{subfigure}[b]{0.3\textwidth}
        \centering
        \includegraphics[width=0.95\linewidth]{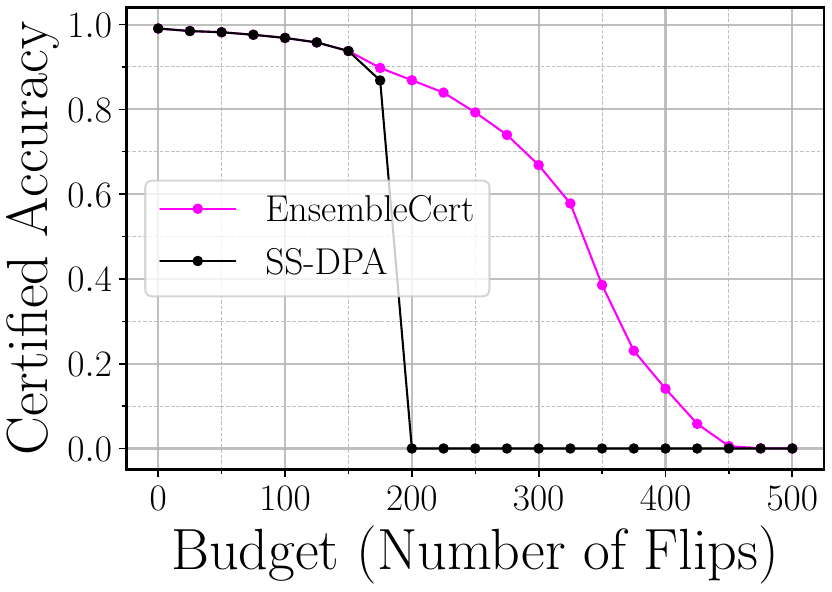}
        \caption{Partitions: 400}
    \end{subfigure}
    \begin{subfigure}[b]{0.3\textwidth}
        \centering
        \includegraphics[width=0.95\linewidth]{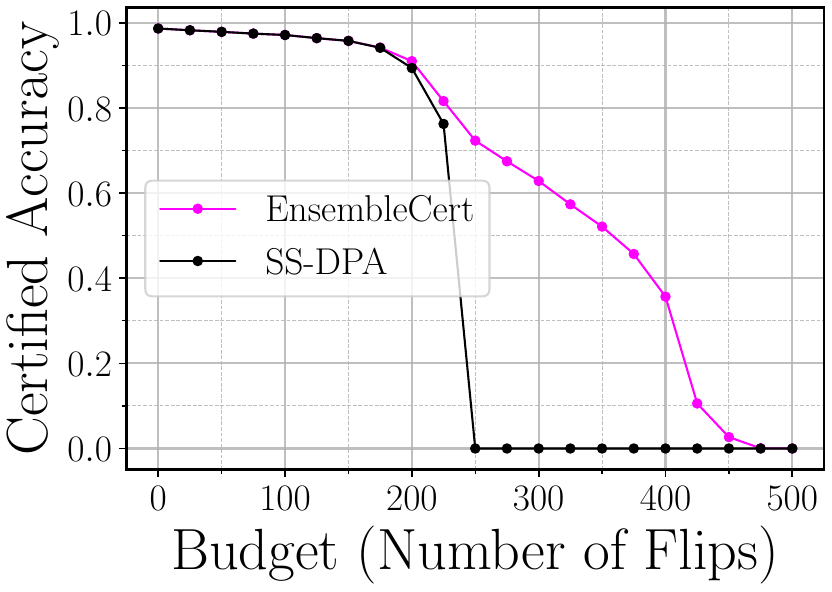}
        \caption{Partitions: 500}
    \end{subfigure}
    \begin{subfigure}[b]{0.3\textwidth}
        \centering
        \includegraphics[width=0.95\linewidth]{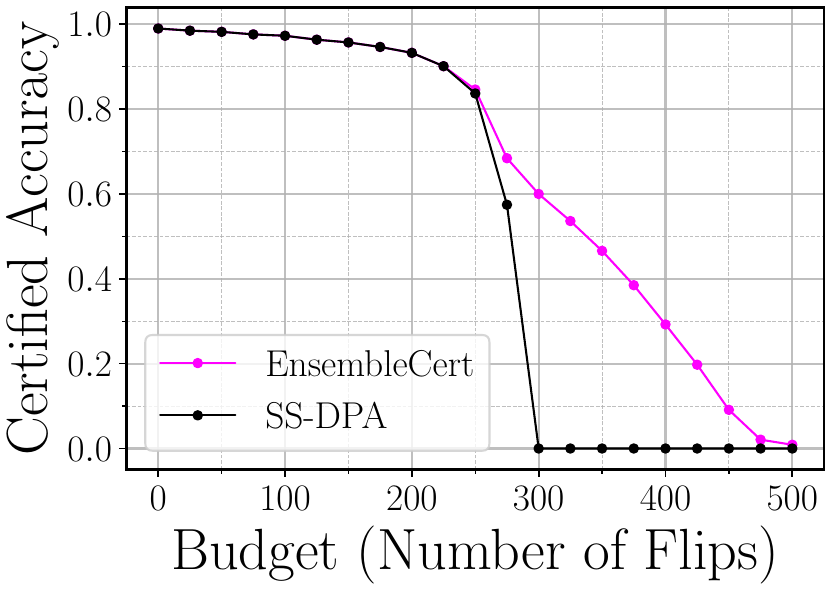}
        \caption{Partitions: 600}
    \end{subfigure}

    \caption{MNIST binary kernel SVM results for different number of partitions.}
    \label{fig:mnist_binary_kernel_svm_grid}
\end{figure}
\FloatBarrier

\textbf{MCR results} Consistent with the results on MNIST and CIFAR10, \Cref{fig:binary_kernel_mcr_comparison} demonstrates the invariance of median certified robustness to the number of partitions, for both kernel Reg and kernel SVM

\begin{figure}[h]
    \centering
    \begin{subfigure}[b]{0.5\textwidth}
        \centering
        \includegraphics[width=0.95\linewidth]{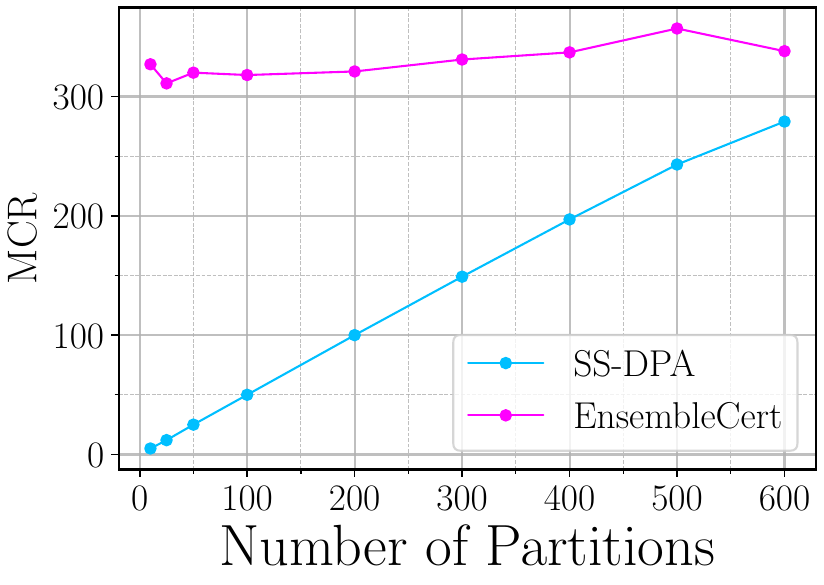}
        \caption{MNIST 1-vs-7: SVM}
        \label{fig:mnist_binary_kernel_svm_mcr}
    \end{subfigure}%
    \begin{subfigure}[b]{0.5\textwidth}
        \centering
        \includegraphics[width=0.95\linewidth]{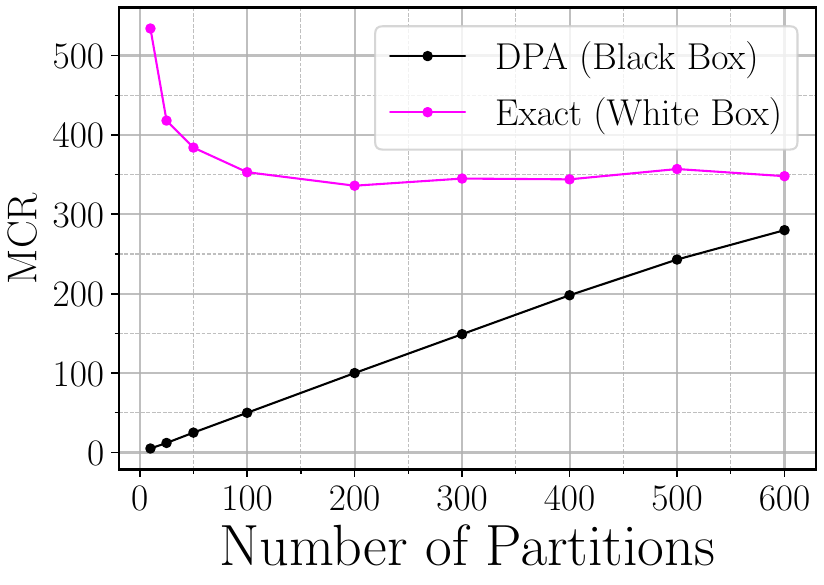}
        \caption{MNIST 1-vs-7: Reg}
        \label{fig:mnist_binary_kernel_reg_mcr}
    \end{subfigure}%
    \caption{Invariance to number of partitions}
    \label{fig:binary_kernel_mcr_comparison}
\end{figure}
\FloatBarrier

\newpage
\subsection{CIFAR-10}

\label{ss:results_cifar}

\textbf{Kernel Regression}

For low values of $\lambda$, the base classifier by itself is not robust and is closer to the worst-case black box assumption described in \Cref{ss:worst_case_scenario}. Consequently, we do not see a significant improvement on utilizing white-box knowledge of the base classifiers. This is illustrated in \Cref{fig:cifar_kernel_reg_grid} , where we evaluate EnsembleCert using kernel regression as base classifier and a low regularization parameter $\lambda=0.01$. In contrast, using a high degree of regularization changes the trend as mentioned in the experiments section. In this case, we see a substantial improvement in the certified accuracy on white-box infusion. The results on using a high lambda can be seen in \Cref{fig:cifar_kernel_reg_grid_lambda_10}.

\begin{figure}[h]
    \centering
    \begin{subfigure}[b]{0.3\textwidth}
        \centering
        \includegraphics[width=0.95\linewidth]{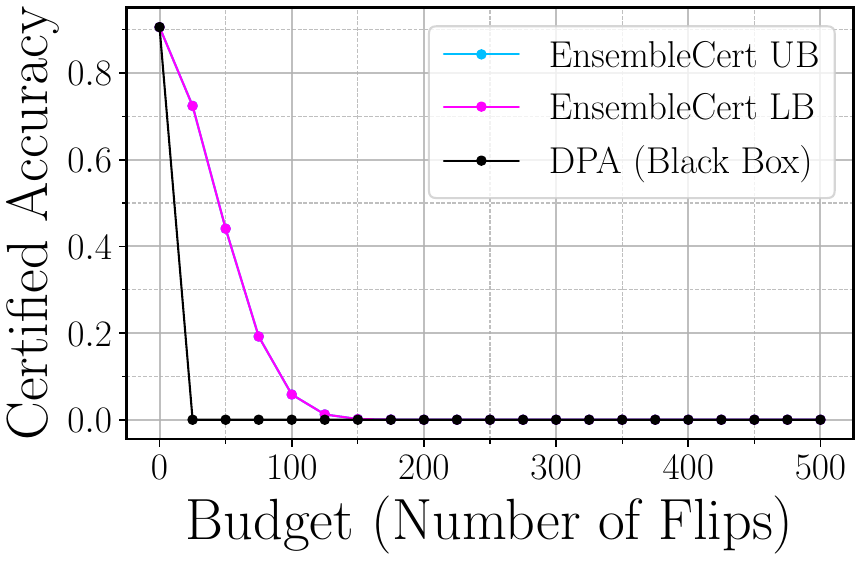}
        \caption{Partitions = 10}
    \end{subfigure}
    \begin{subfigure}[b]{0.3\textwidth}
        \centering
        \includegraphics[width=0.95\linewidth]{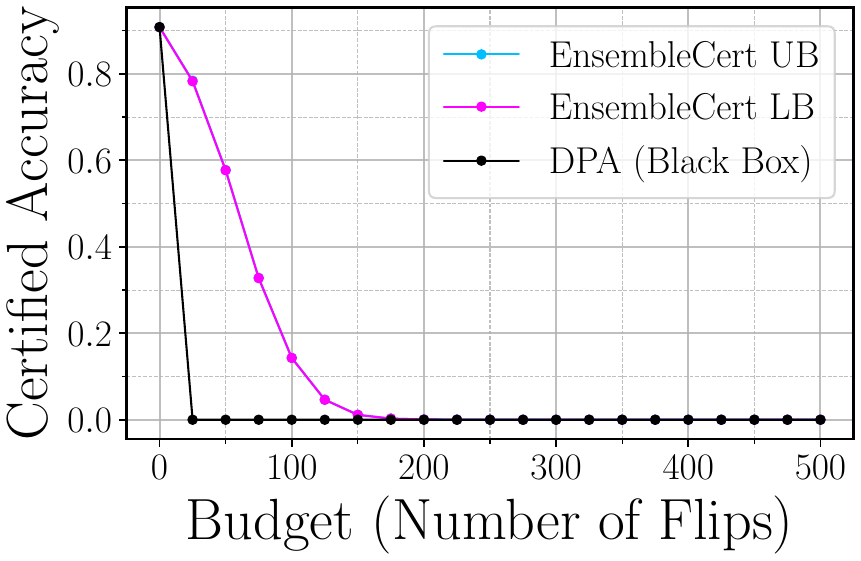}
        \caption{Partitions = 50}
    \end{subfigure}
    \begin{subfigure}[b]{0.3\textwidth}
        \centering
        \includegraphics[width=0.95\linewidth]{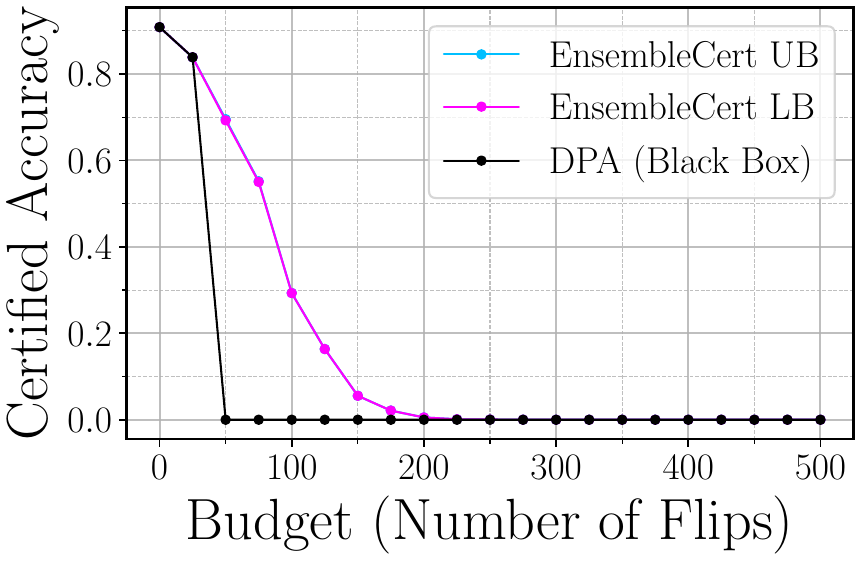}
        \caption{Partitions = 100}
    \end{subfigure}
    \begin{subfigure}[b]{0.3\textwidth}
        \centering
        \includegraphics[width=0.95\linewidth]{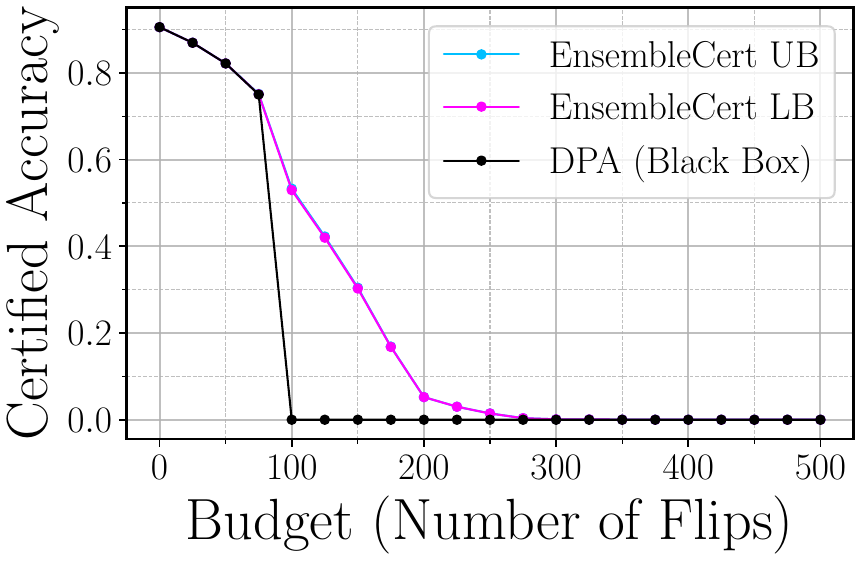}
        \caption{Partitions = 200}
    \end{subfigure}
    \begin{subfigure}[b]{0.3\textwidth}
        \centering
        \includegraphics[width=0.95\linewidth]{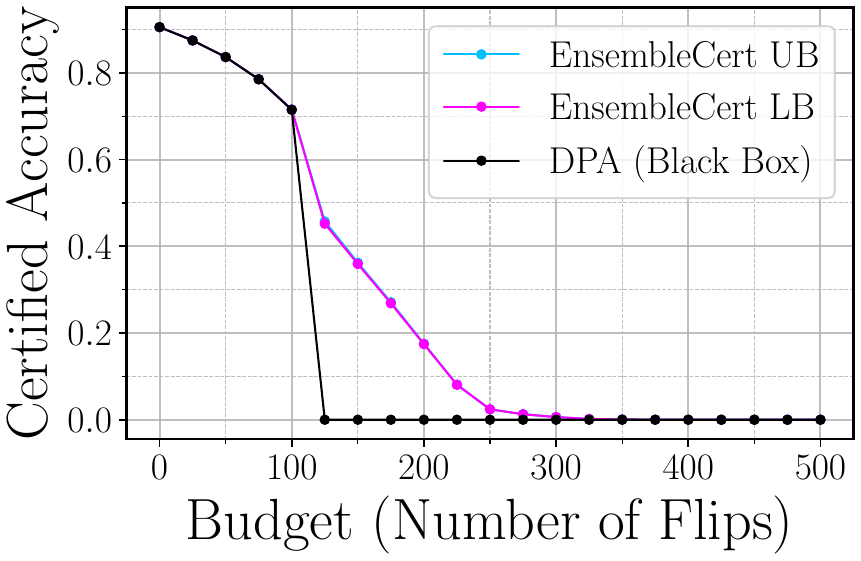}
        \caption{Partitions = 250}
    \end{subfigure}
    \begin{subfigure}[b]{0.3\textwidth}
        \centering
        \includegraphics[width=0.95\linewidth]{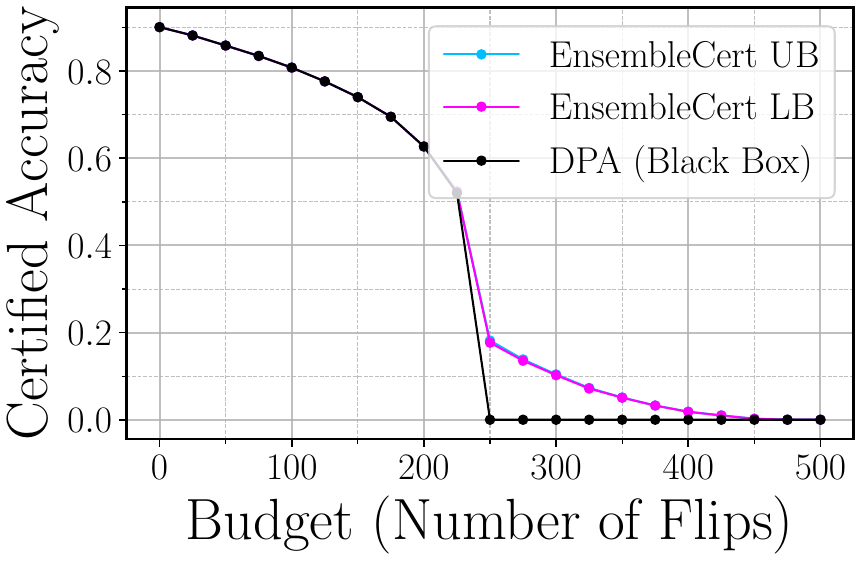}
        \caption{Partitions = 500}
    \end{subfigure}
    \caption{CIFAR-10 kernel regression results for different numbers of partitions (\(\lambda = 0.01\)).}
    \label{fig:cifar_kernel_reg_grid}
\end{figure}

\begin{figure}[h]
    \centering
    \begin{subfigure}[b]{0.3\textwidth}
        \centering
        \includegraphics[width=0.95\linewidth]{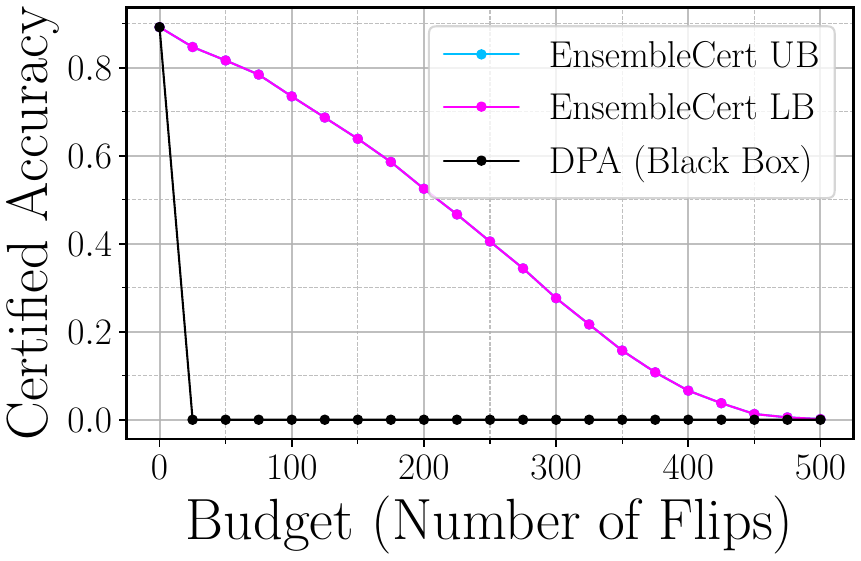}
        \caption{Partitions = 10}
    \end{subfigure}
    \begin{subfigure}[b]{0.3\textwidth}
        \centering
        \includegraphics[width=0.95\linewidth]{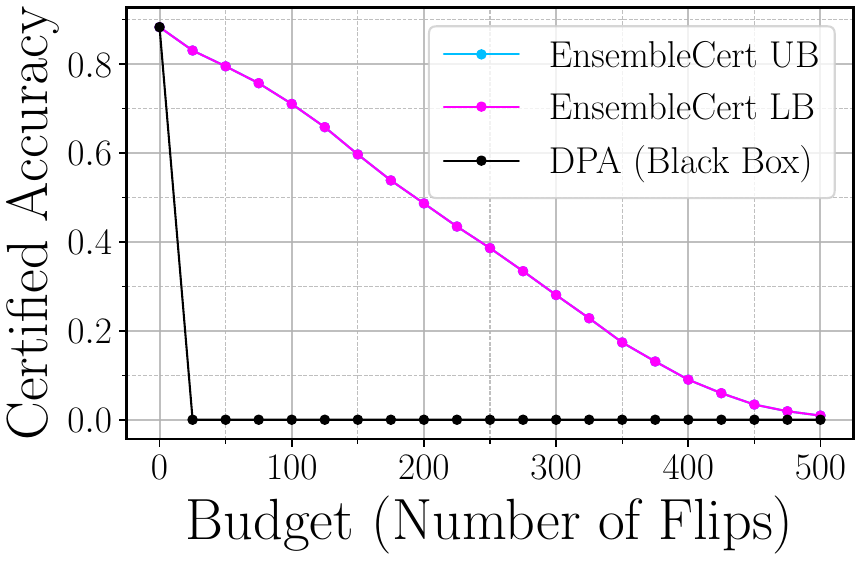}
        \caption{Partitions = 50}
    \end{subfigure}
    \begin{subfigure}[b]{0.3\textwidth}
        \centering
        \includegraphics[width=0.95\linewidth]{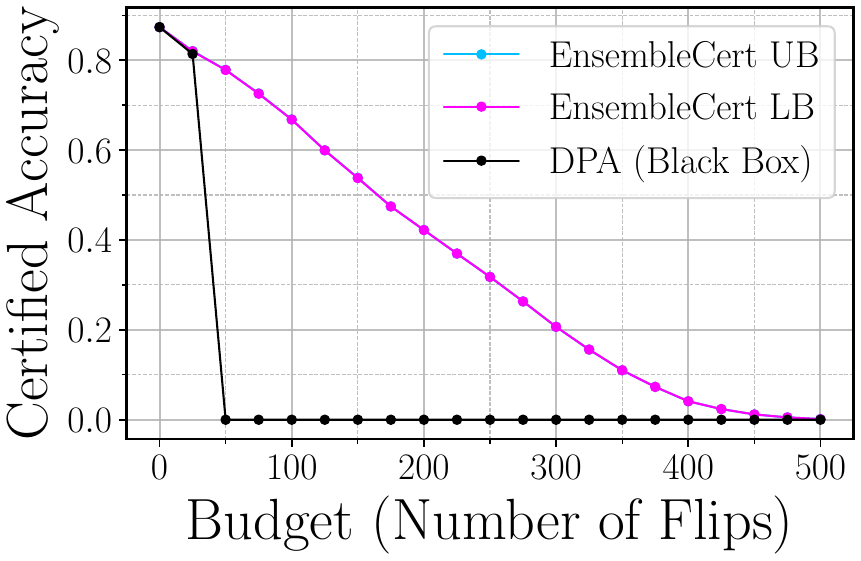}
        \caption{Partitions = 100}
    \end{subfigure}
    \begin{subfigure}[b]{0.3\textwidth}
        \centering
        \includegraphics[width=0.95\linewidth]{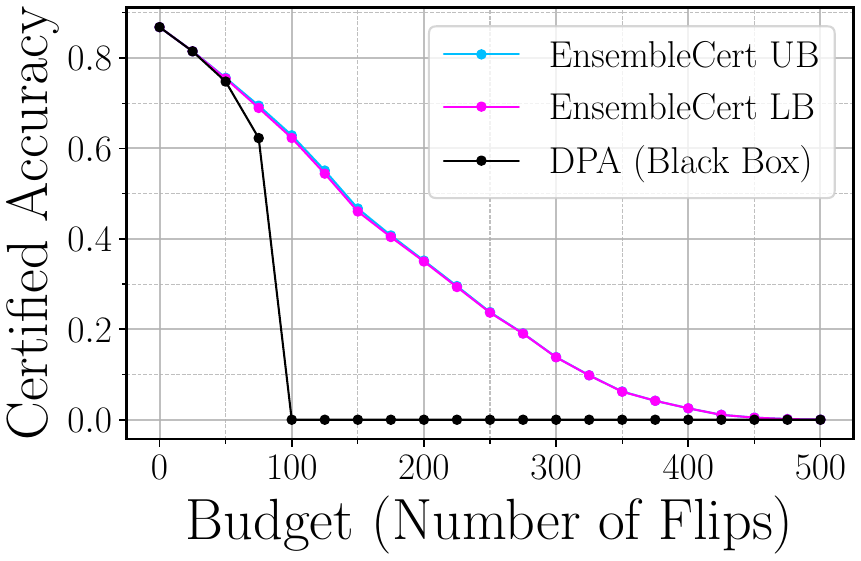}
        \caption{Partitions = 200}
    \end{subfigure}
    \begin{subfigure}[b]{0.3\textwidth}
        \centering
        \includegraphics[width=0.95\linewidth]{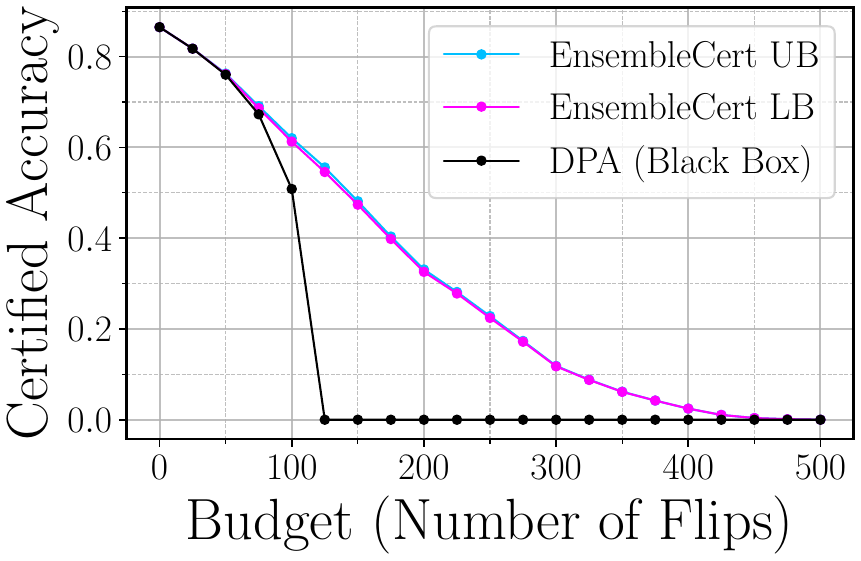}
        \caption{Partitions = 250}
    \end{subfigure}
    \begin{subfigure}[b]{0.3\textwidth}
        \centering
        \includegraphics[width=0.95\linewidth]{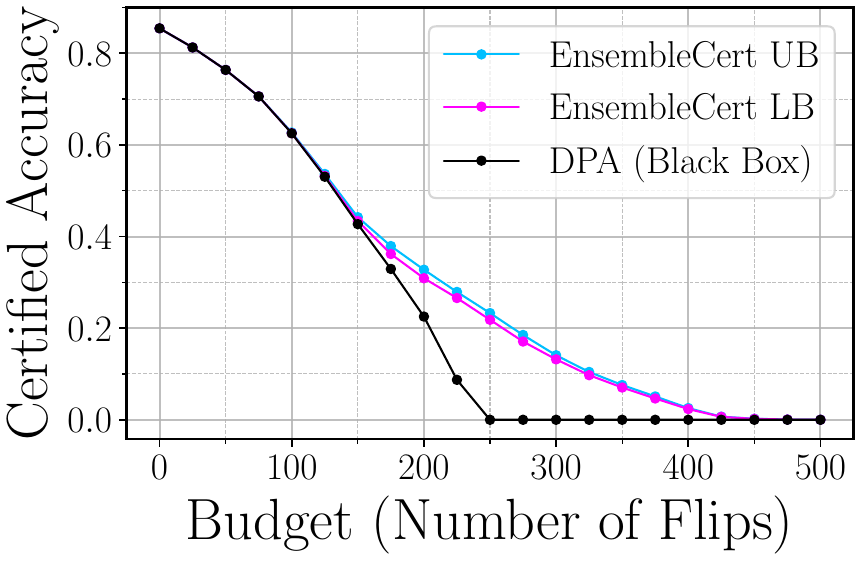}
        \caption{Partitions = 500}
    \end{subfigure}
    \caption{CIFAR-10 kernel regression with high regularization (\(\lambda = 10\)).}
    \label{fig:cifar_kernel_reg_grid_lambda_10}
\end{figure}
\FloatBarrier

\clearpage
\textbf{Kernel SVM}

Substantial improvement can be observed in certified accuracy on white-box infusion, as seen in \Cref{fig:cifar_kernel_svm_grid}.

\begin{figure}[h]
    \centering
    \begin{subfigure}[b]{0.3\textwidth}
        \centering
        \includegraphics[width=0.95\linewidth]{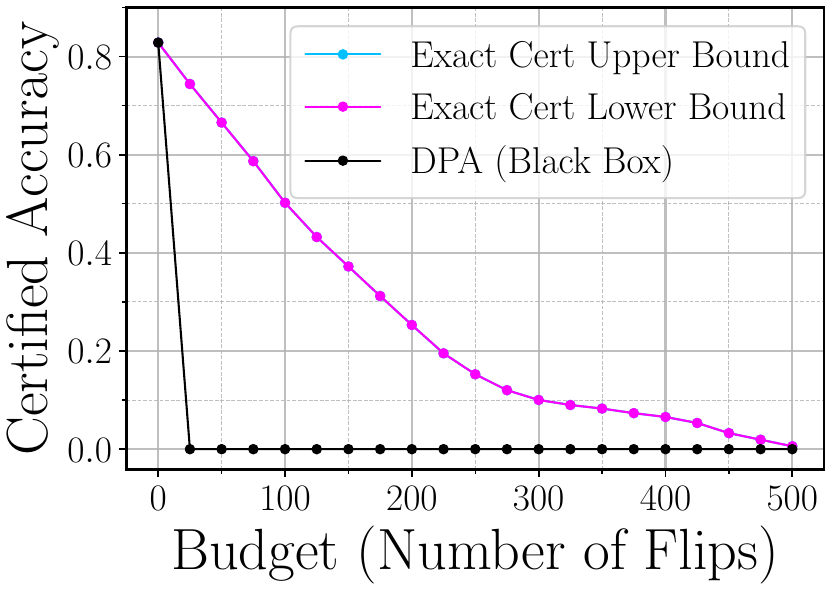}
        \caption{Partitions = 10}
    \end{subfigure}
    \begin{subfigure}[b]{0.3\textwidth}
        \centering
        \includegraphics[width=0.95\linewidth]{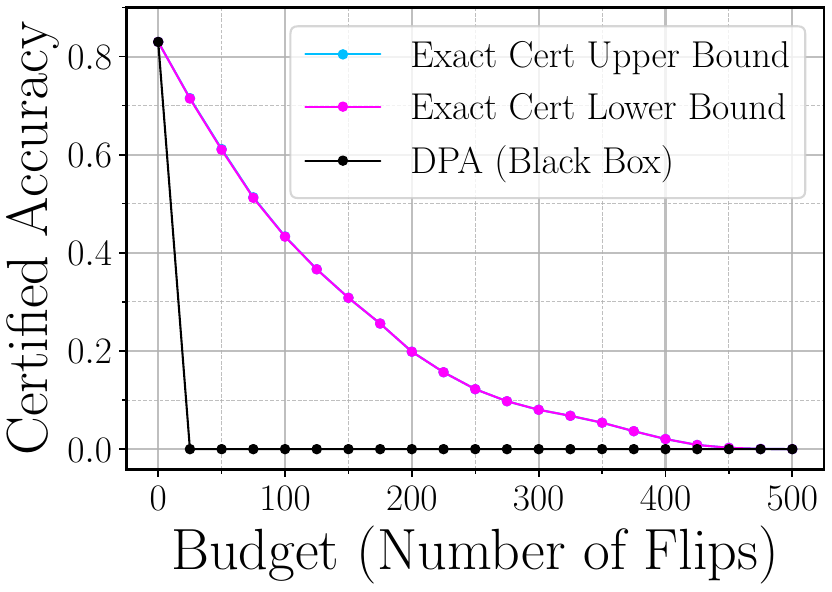}
        \caption{Partitions = 50}
    \end{subfigure}
    \begin{subfigure}[b]{0.3\textwidth}
        \centering
        \includegraphics[width=0.95\linewidth]{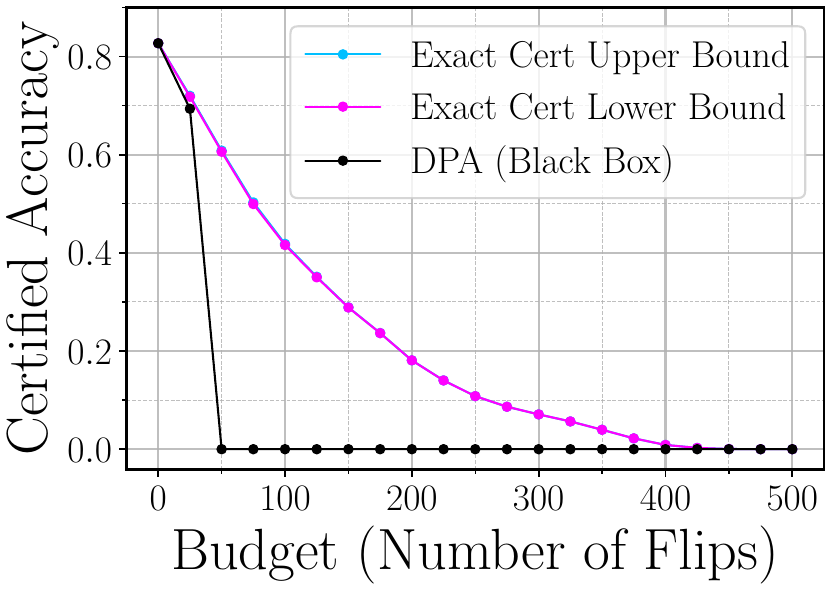}
        \caption{Partitions = 100}
    \end{subfigure}
    \vspace{0.5cm}    \begin{subfigure}[b]{0.3\textwidth}
        \centering
        \includegraphics[width=0.95\linewidth]{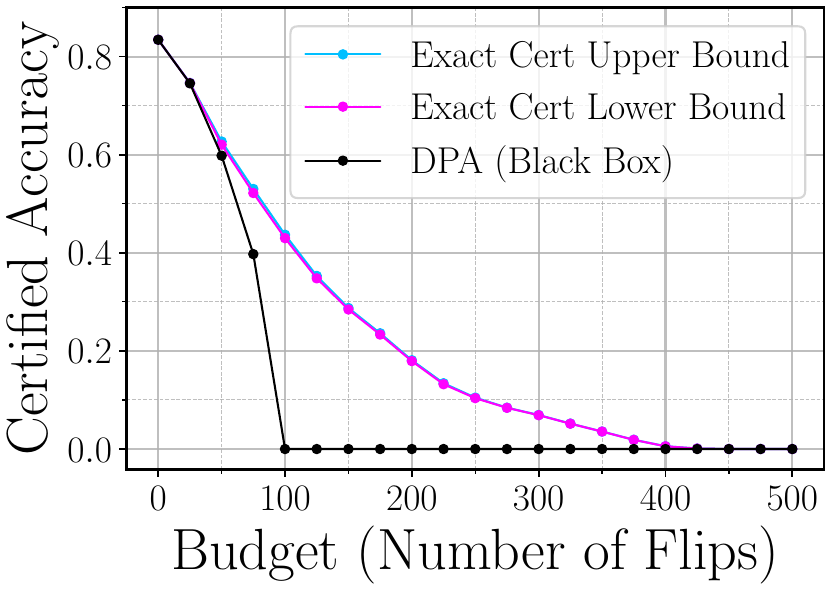}
        \caption{Partitions = 200}
    \end{subfigure}
    \begin{subfigure}[b]{0.3\textwidth}
        \centering
        \includegraphics[width=0.95\linewidth]{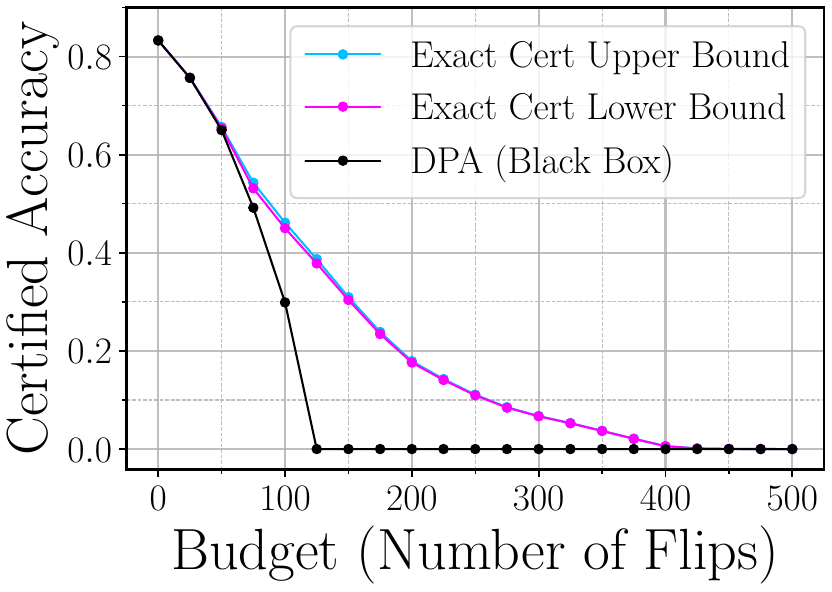}
        \caption{Partitions = 250}
    \end{subfigure}
    \begin{subfigure}[b]{0.3\textwidth}
        \centering
        \includegraphics[width=0.95\linewidth]{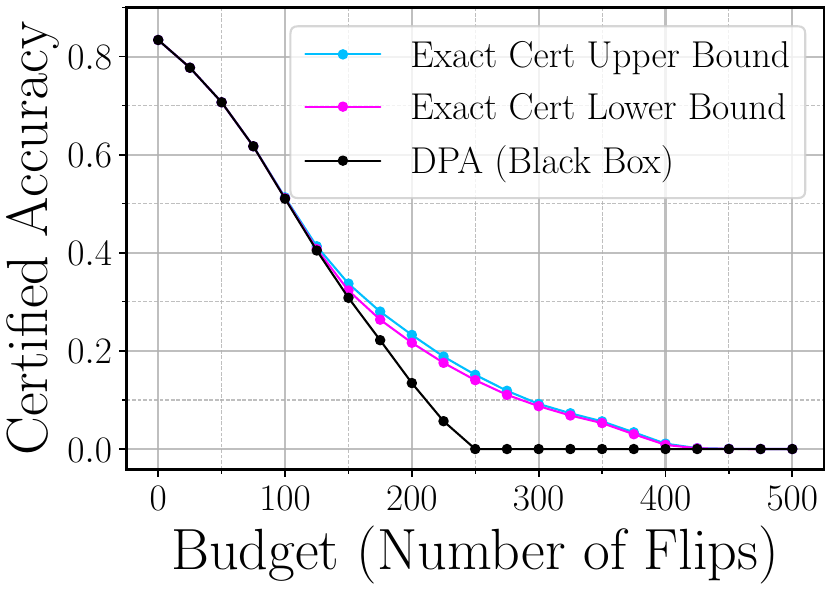}
        \caption{Partitions = 500}
    \end{subfigure}
    \caption{CIFAR-10 kernel SVM results for different numbers of partitions.}
    \label{fig:cifar_kernel_svm_grid}
\end{figure}
\FloatBarrier

\textbf{Smoothed linear classifier}

\label{ss:results_rs}

Similar to kernel regression and kernel SVM, substantial improvement is observed in certified accuracy on white-box infusion, as shown in \Cref{fig:cifar_rs_grid}.

\begin{figure}[h]
    \centering
    \begin{subfigure}[b]{0.3\textwidth}
        \centering
        \includegraphics[width=0.95\linewidth]{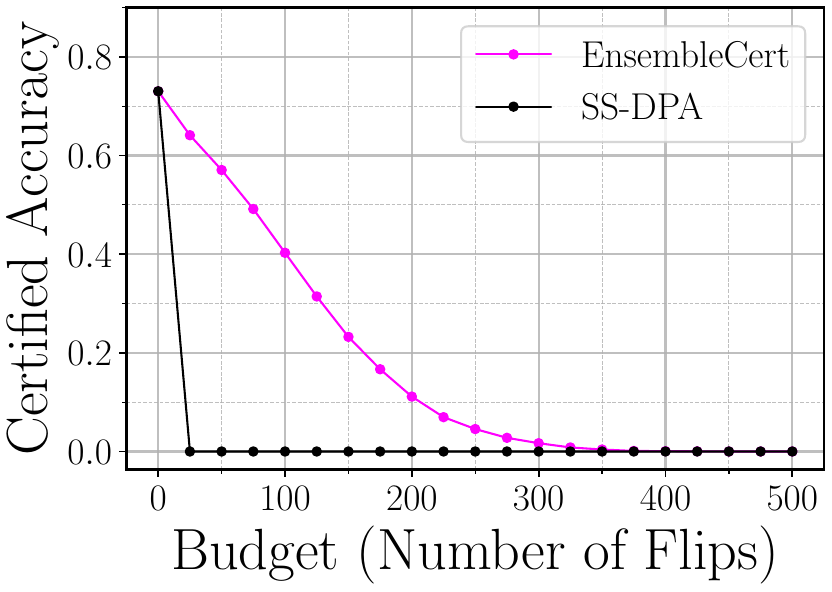}
        \caption{Partitions = 10, \(q = 0.0001\)}
    \end{subfigure}
    \begin{subfigure}[b]{0.3\textwidth}
        \centering
        \includegraphics[width=0.95\linewidth]{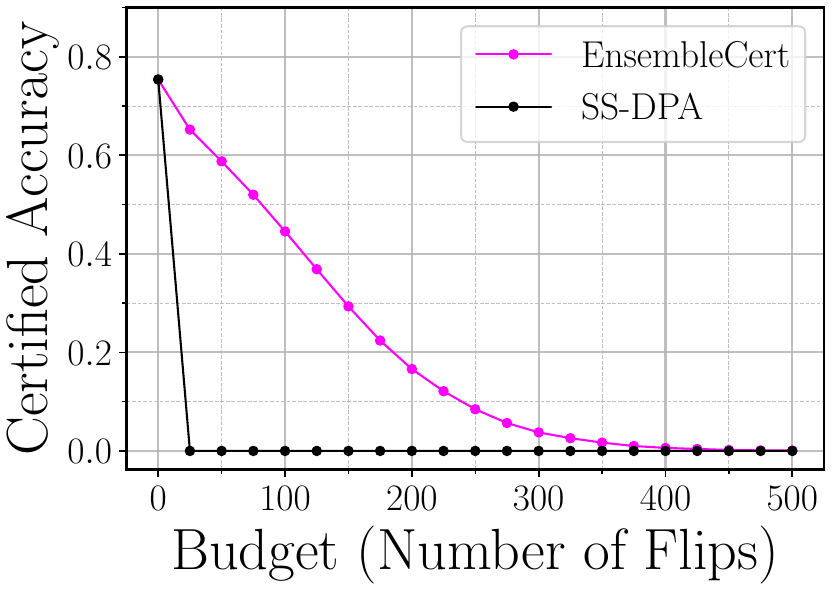}
        \caption{Partitions = 25, \(q = 0.0001\)}
    \end{subfigure}
    \begin{subfigure}[b]{0.3\textwidth}
        \centering
        \includegraphics[width=0.95\linewidth]{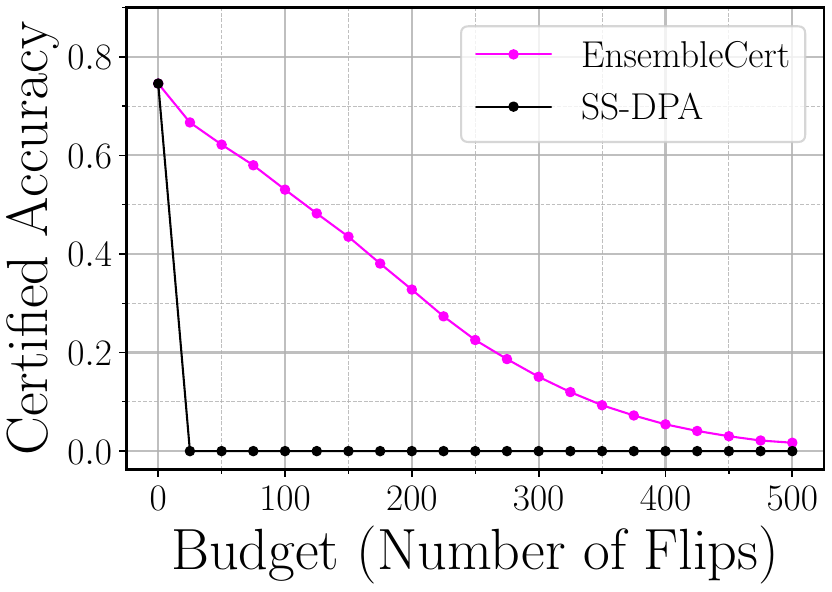}
        \caption{Partitions = 50, \(q = 0.0001\)}
    \end{subfigure}

    \vspace{0.5cm}
    \begin{subfigure}[b]{0.3\textwidth}
        \centering
        \includegraphics[width=0.95\linewidth]{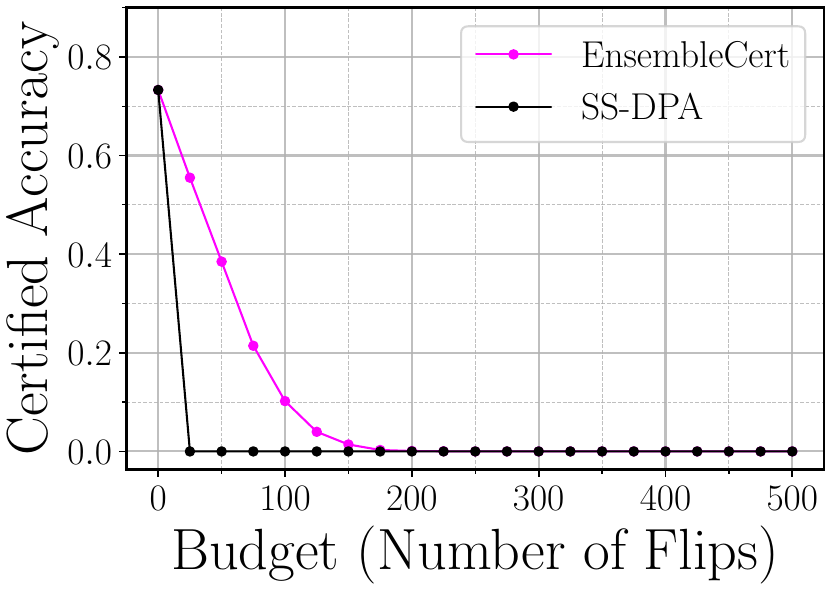}
        \caption{Partitions = 10, \(q = 0.0125\)}
    \end{subfigure}
    \begin{subfigure}[b]{0.3\textwidth}
        \centering
        \includegraphics[width=0.95\linewidth]{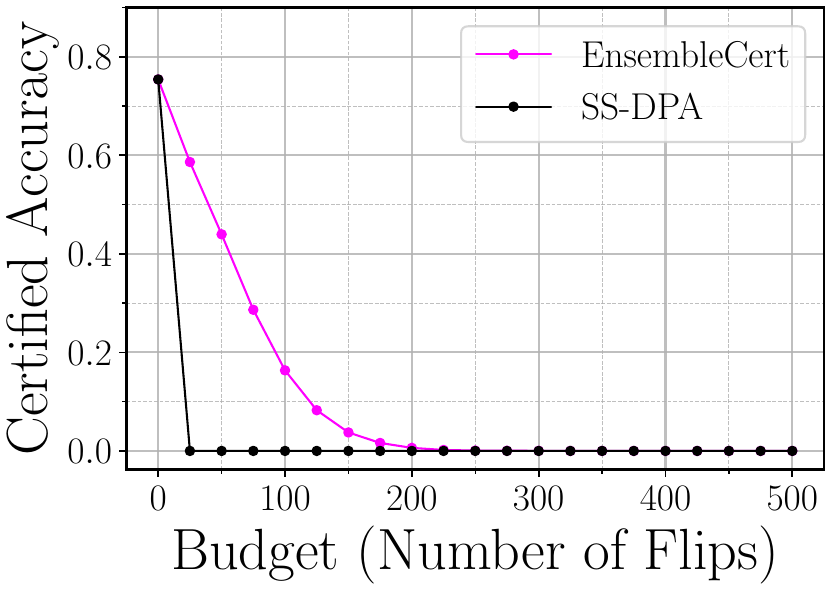}
        \caption{Partitions = 25, \(q = 0.0125\)}
    \end{subfigure}
    \begin{subfigure}[b]{0.3\textwidth}
        \centering
        \includegraphics[width=0.95\linewidth]{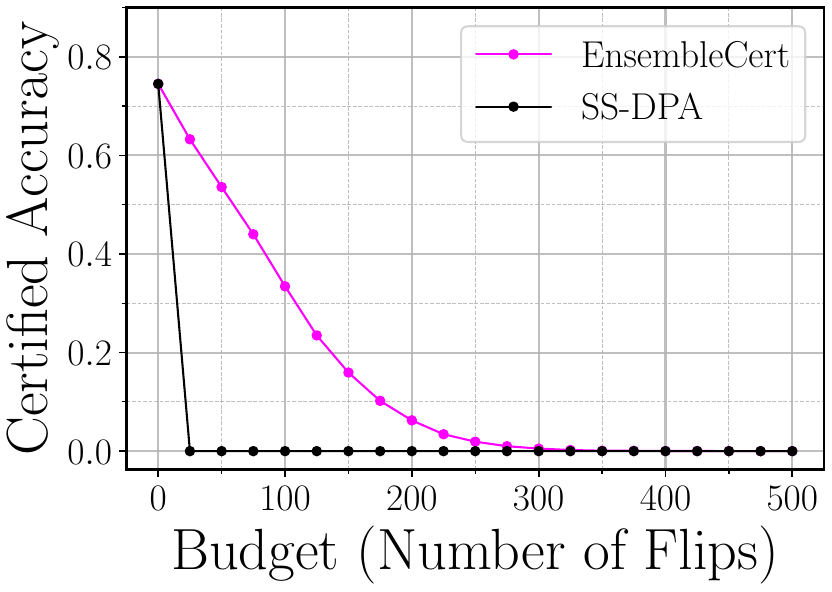}
        \caption{Partitions = 50, \(q = 0.0125\)}
    \end{subfigure}

    \caption{CIFAR-10 Smoothed linear regression as base-classifier}
    \label{fig:cifar_rs_grid}
\end{figure}
\FloatBarrier

\newpage
\subsection{MNIST}

\label{ss:results_mnist}

\textbf{Kernel Regression}

We analyze the low and high regularization parameter $\lambda$ on MNIST. Similar to CIFAR-10, we observe that for low values of $\lambda$, the base classifier by itself is not robust and is closer to the worst-case black box assumption described in \Cref{ss:worst_case_scenario}. Consequently, we do not see a significant improvement in utilizing white-box knowledge of the base classifiers. This is illustrated in \Cref{fig:mnist_kernel_reg_grid_lambda_0.0001} using low $\lambda=0.0001$. In contrast, using a high degree of regularization changes the trend. In this case, we see a substantial improvement in the certified accuracy on white-box infusion. The results on using a high lambda can be seen using high  $\lambda=0.1$ in \Cref{fig:mnist_kernel_reg_grid_lambda_0.1}.

\begin{figure}[h]
    \centering
    \begin{subfigure}[b]{0.3\textwidth}
        \centering
        \includegraphics[width=0.95\linewidth]{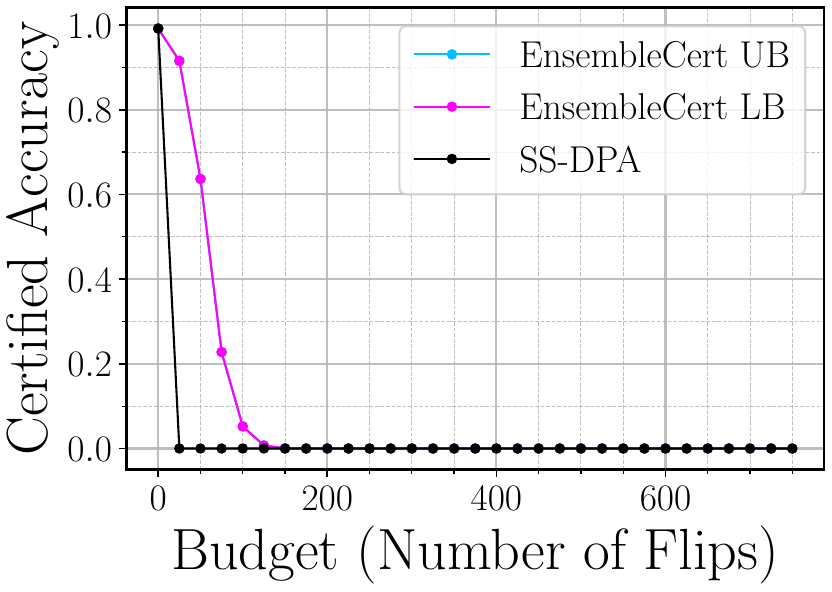}
        \caption{Partitions = 12}
    \end{subfigure}
    \begin{subfigure}[b]{0.3\textwidth}
        \centering
        \includegraphics[width=0.95\linewidth]{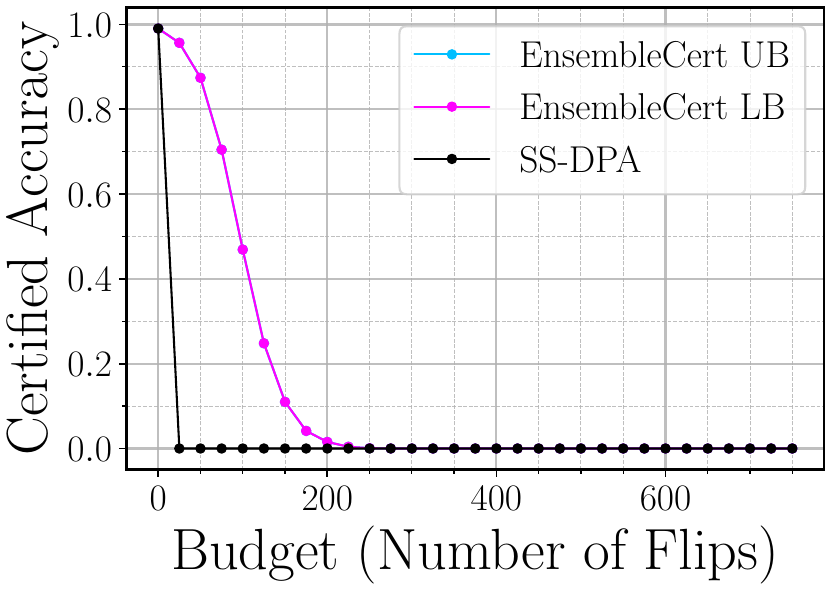}
        \caption{Partitions = 30}
    \end{subfigure}
    \begin{subfigure}[b]{0.3\textwidth}
        \centering
        \includegraphics[width=0.95\linewidth]{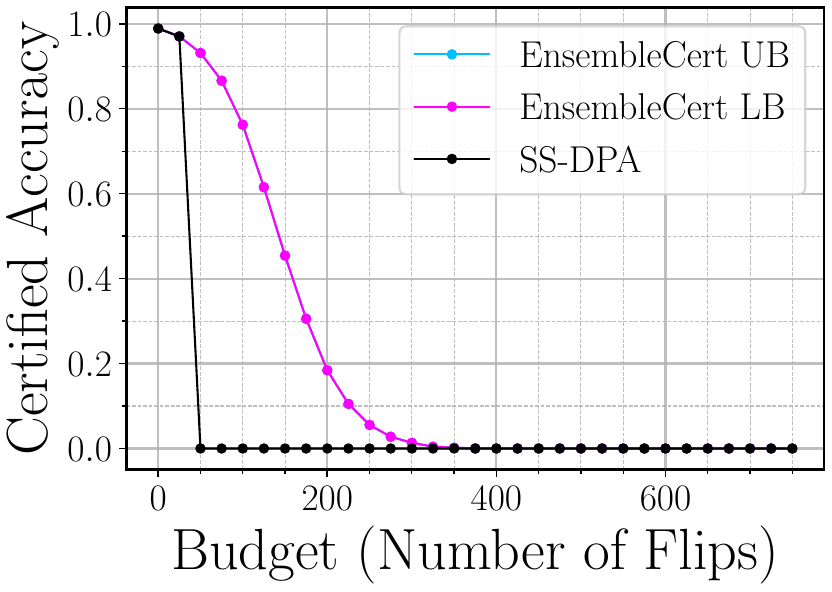}
        \caption{Partitions = 60}
    \end{subfigure}

    \begin{subfigure}[b]{0.3\textwidth}
        \centering
        \includegraphics[width=0.95\linewidth]{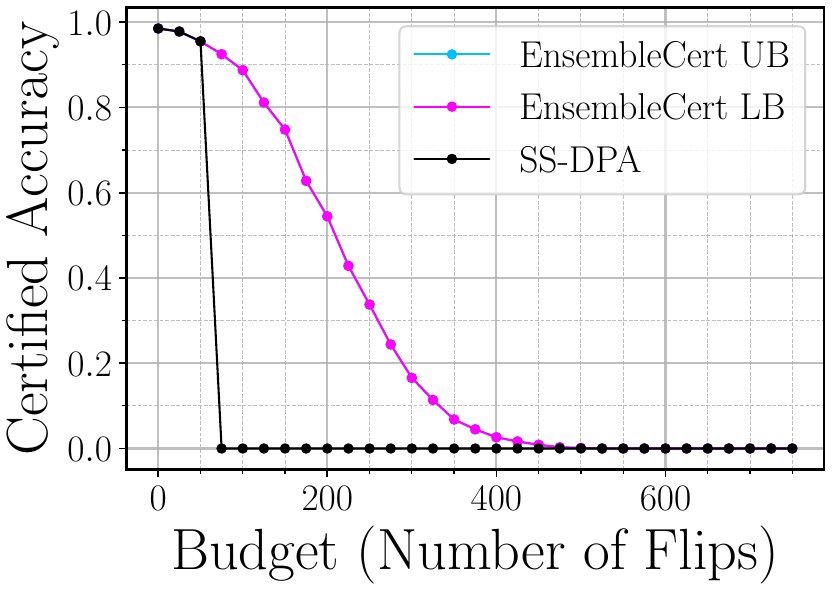}
        \caption{Partitions = 120}
    \end{subfigure}
    \begin{subfigure}[b]{0.3\textwidth}
        \centering
        \includegraphics[width=0.95\linewidth]{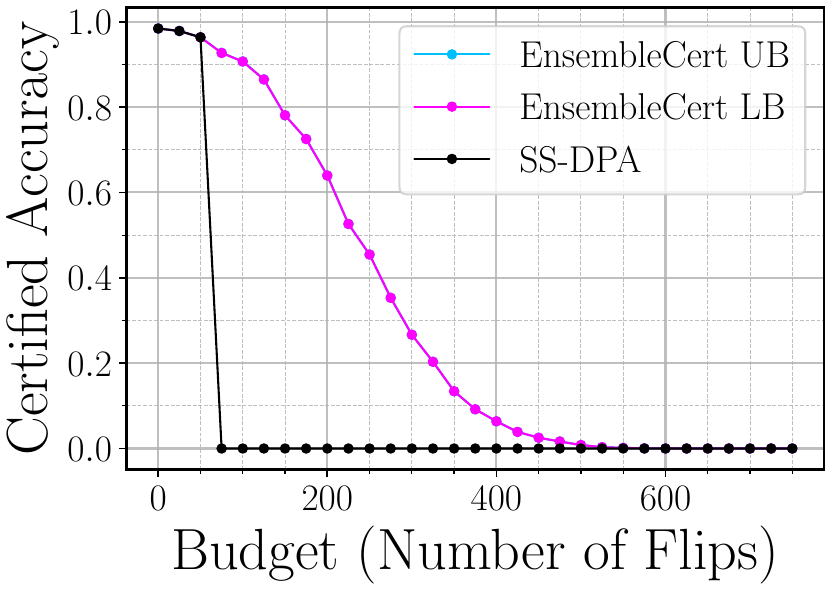}
        \caption{Partitions = 150}
    \end{subfigure}
    \begin{subfigure}[b]{0.3\textwidth}
        \centering
        \includegraphics[width=0.95\linewidth]{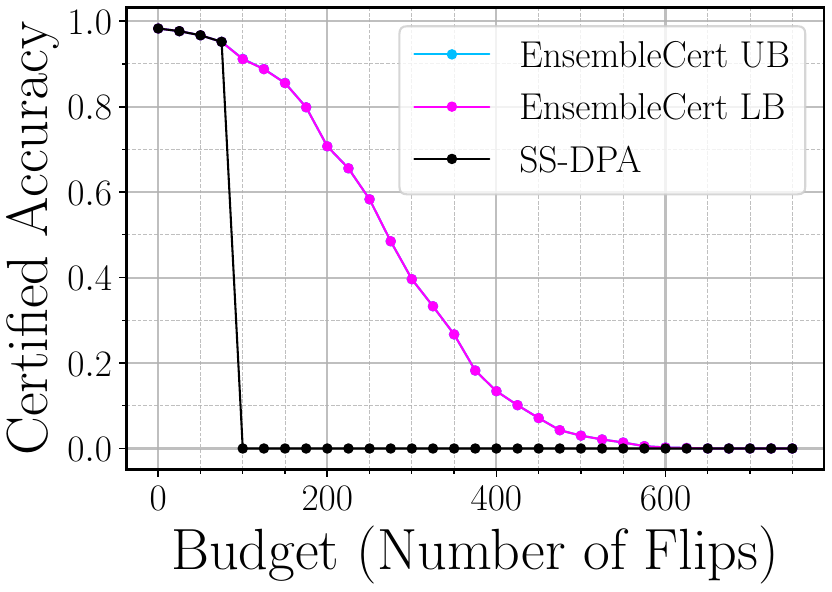}
        \caption{Partitions = 200}
    \end{subfigure}

    \begin{subfigure}[b]{0.3\textwidth}
        \centering
        \includegraphics[width=0.95\linewidth]{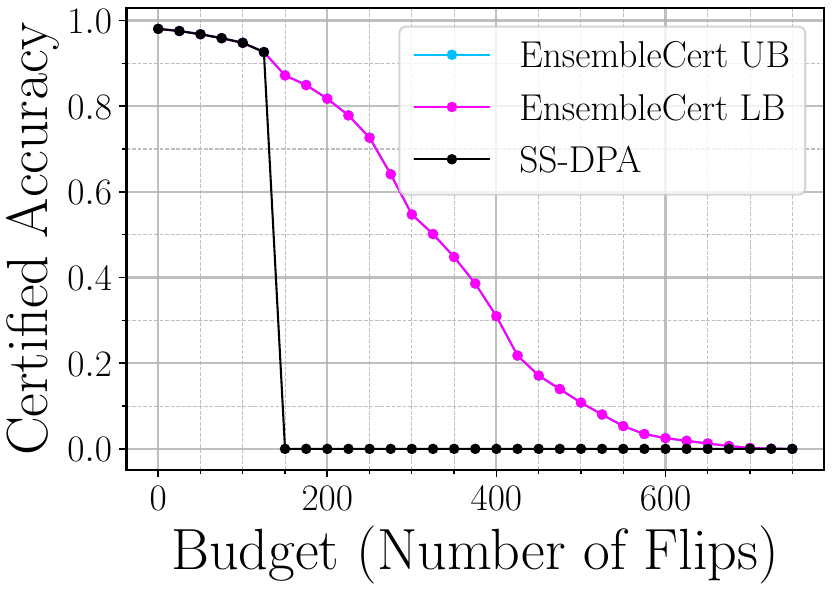}
        \caption{Partitions = 300}
    \end{subfigure}
    \begin{subfigure}[b]{0.3\textwidth}
        \centering
        \includegraphics[width=0.95\linewidth]{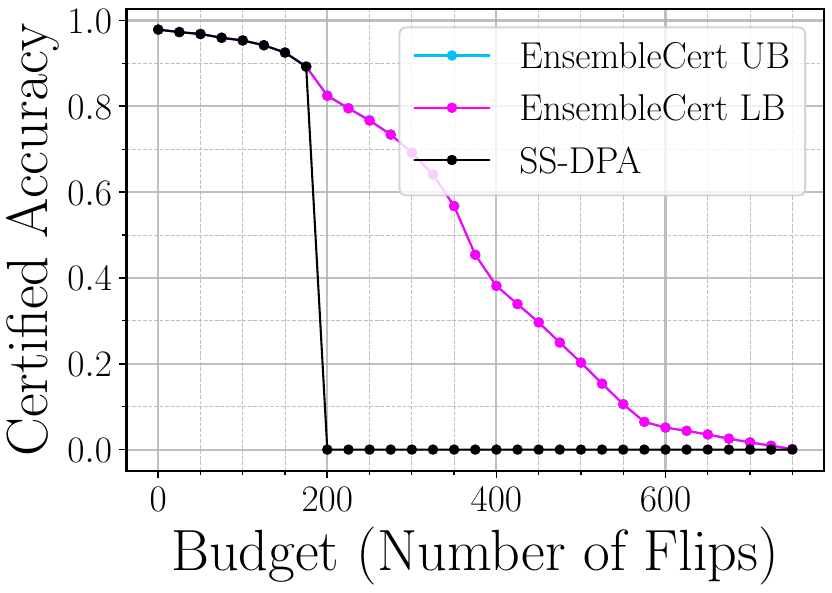}
        \caption{Partitions = 400}
    \end{subfigure}
    \begin{subfigure}[b]{0.3\textwidth}
        \centering
        \includegraphics[width=0.95\linewidth]{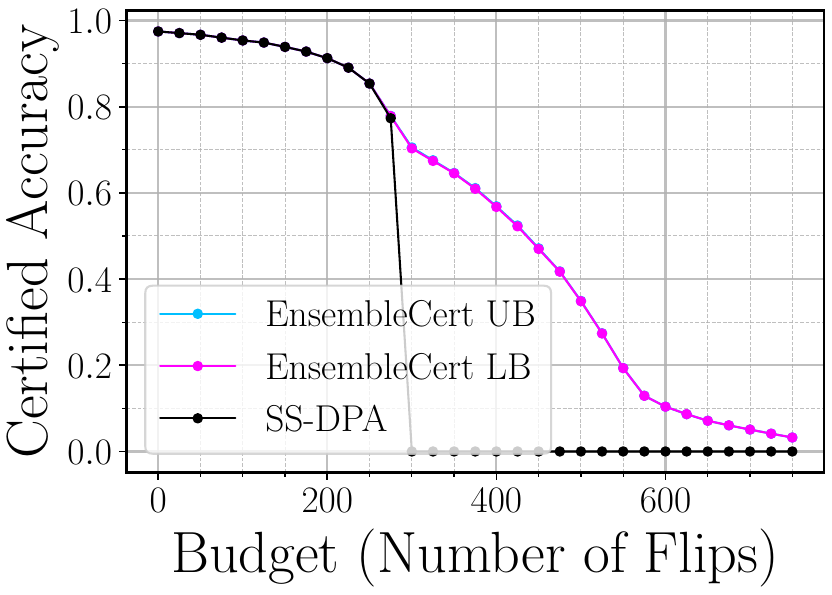}
        \caption{Partitions = 600}
    \end{subfigure}

    \caption{MNIST multi-class kernel regression results for different numbers of partitions (\(\lambda = 0.0001\)).}
    \label{fig:mnist_kernel_reg_grid_lambda_0.0001}
\end{figure}

\begin{figure}[h]
    \centering
    \begin{subfigure}[b]{0.3\textwidth}
        \centering
        \includegraphics[width=0.95\linewidth]{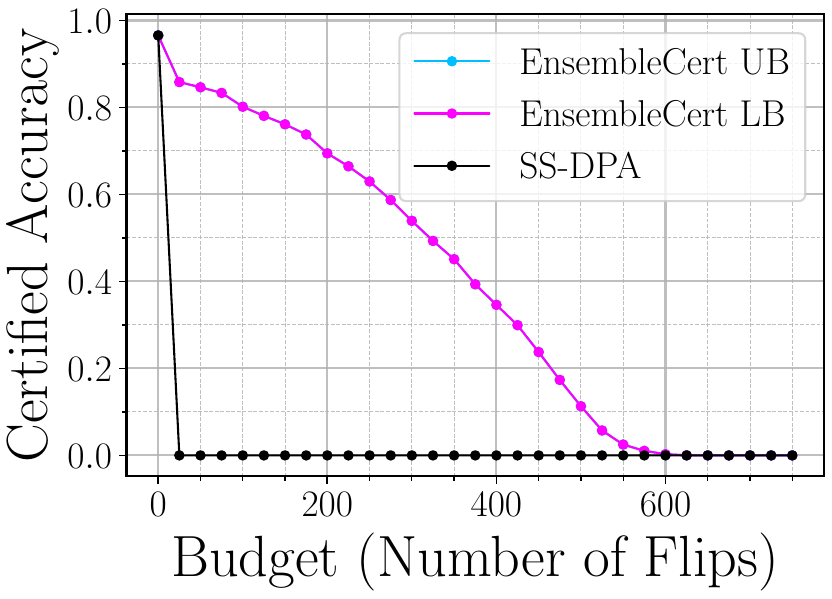}
        \caption{Partitions = 12}
    \end{subfigure}
    \begin{subfigure}[b]{0.3\textwidth}
        \centering
        \includegraphics[width=0.95\linewidth]{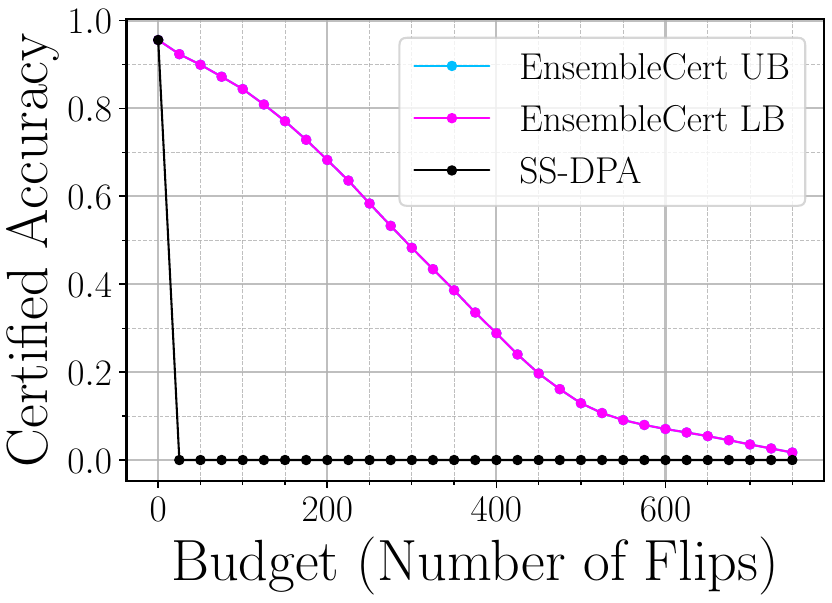}
        \caption{Partitions = 30}
    \end{subfigure}
    \begin{subfigure}[b]{0.3\textwidth}
        \centering
        \includegraphics[width=0.95\linewidth]{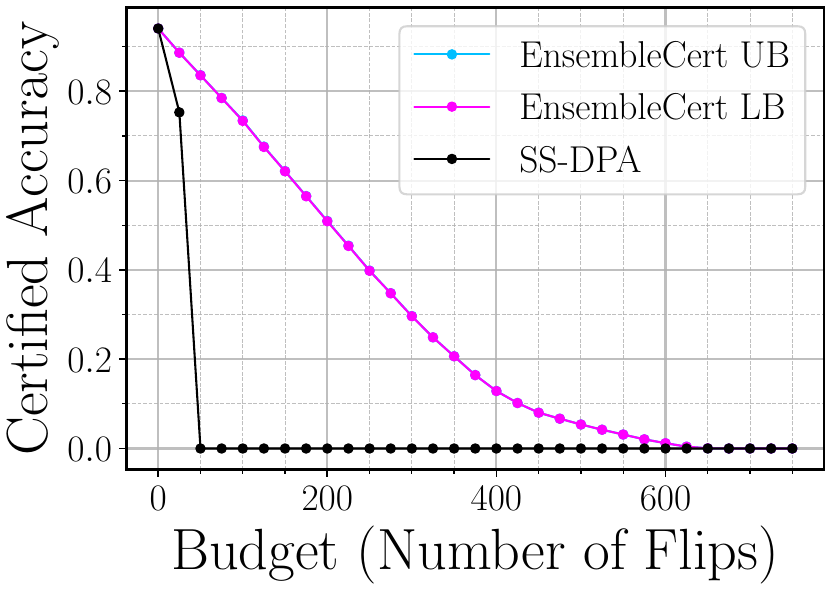}
        \caption{Partitions = 60}
    \end{subfigure}

    \begin{subfigure}[b]{0.3\textwidth}
        \centering
        \includegraphics[width=0.95\linewidth]{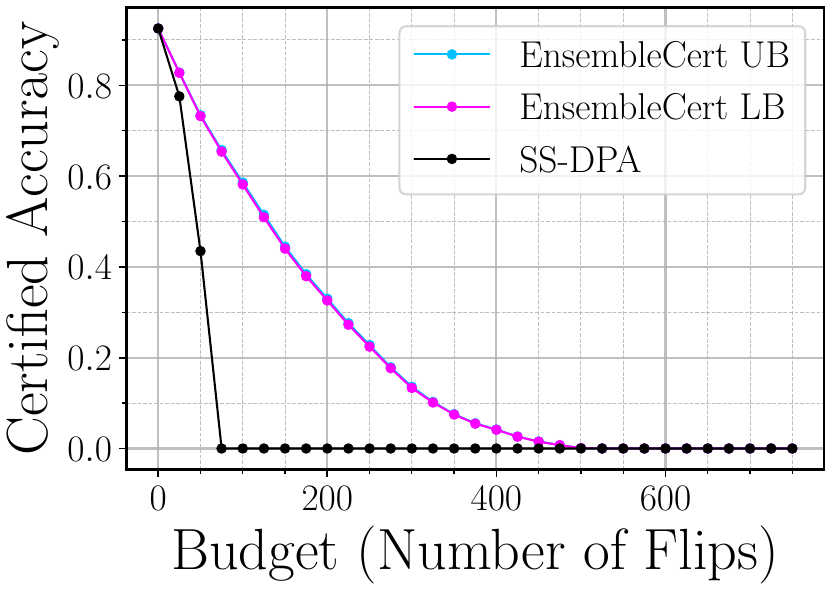}
        \caption{Partitions = 120}
    \end{subfigure}
    \begin{subfigure}[b]{0.3\textwidth}
        \centering
        \includegraphics[width=0.95\linewidth]{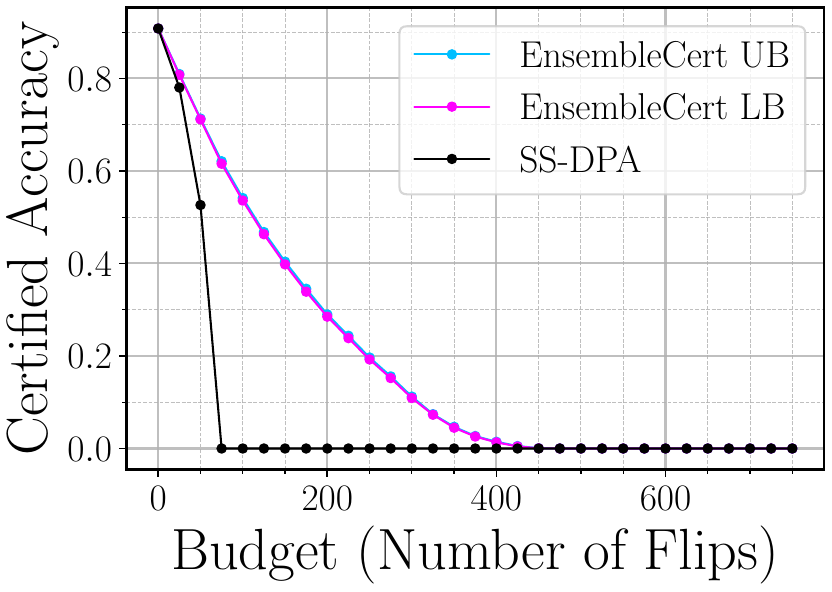}
        \caption{Partitions = 150}
    \end{subfigure}
    \begin{subfigure}[b]{0.3\textwidth}
        \centering
        \includegraphics[width=0.95\linewidth]{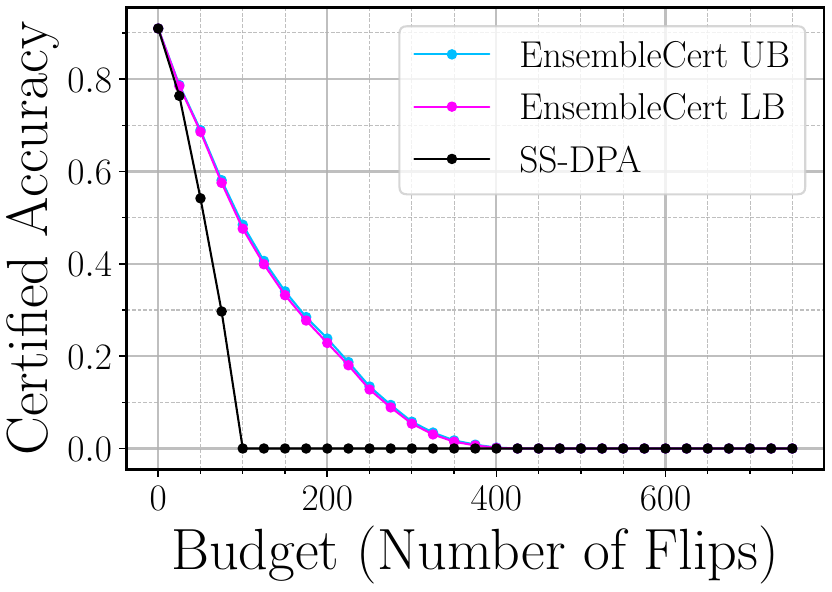}
        \caption{Partitions = 200}
    \end{subfigure}

    \begin{subfigure}[b]{0.3\textwidth}
        \centering
        \includegraphics[width=0.95\linewidth]{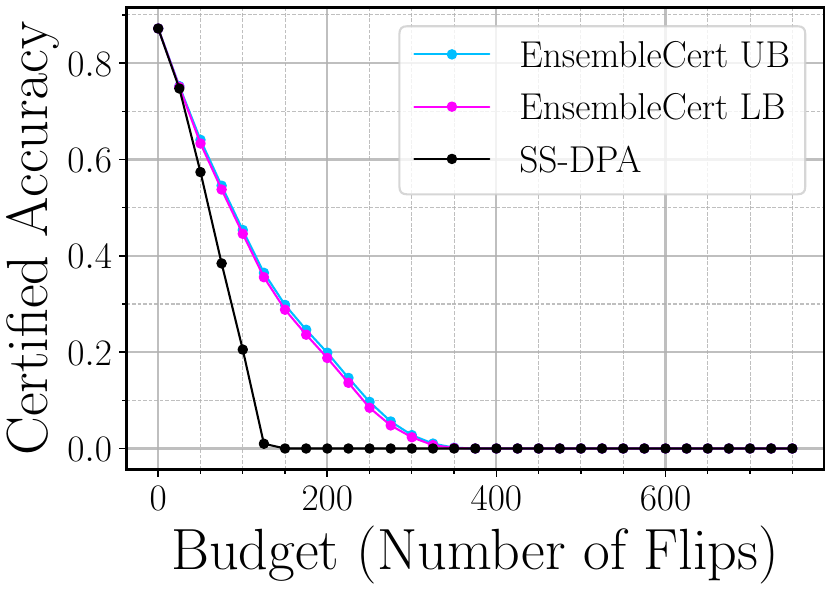}
        \caption{Partitions = 300}
    \end{subfigure}
    \begin{subfigure}[b]{0.3\textwidth}
        \centering
        \includegraphics[width=0.95\linewidth]{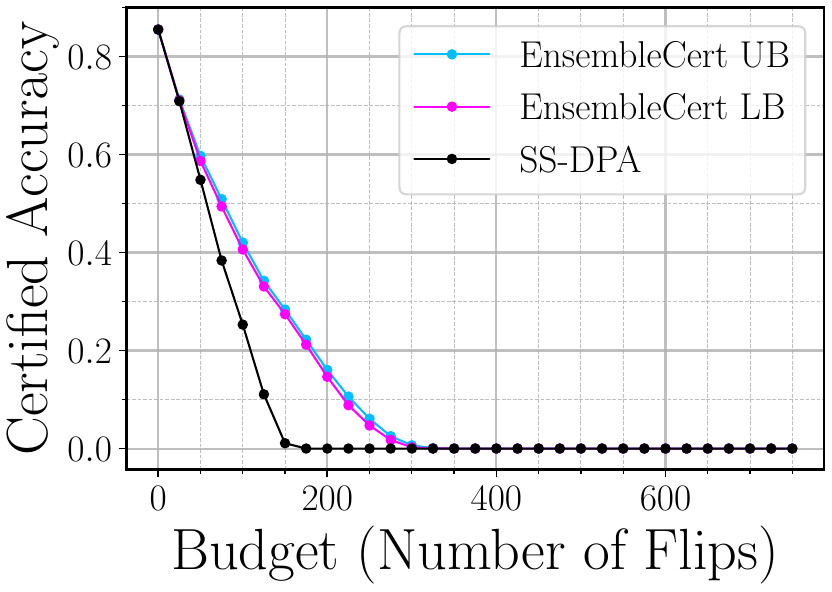}
        \caption{Partitions = 400}
    \end{subfigure}
    \begin{subfigure}[b]{0.3\textwidth}
        \centering
        \includegraphics[width=0.95\linewidth]{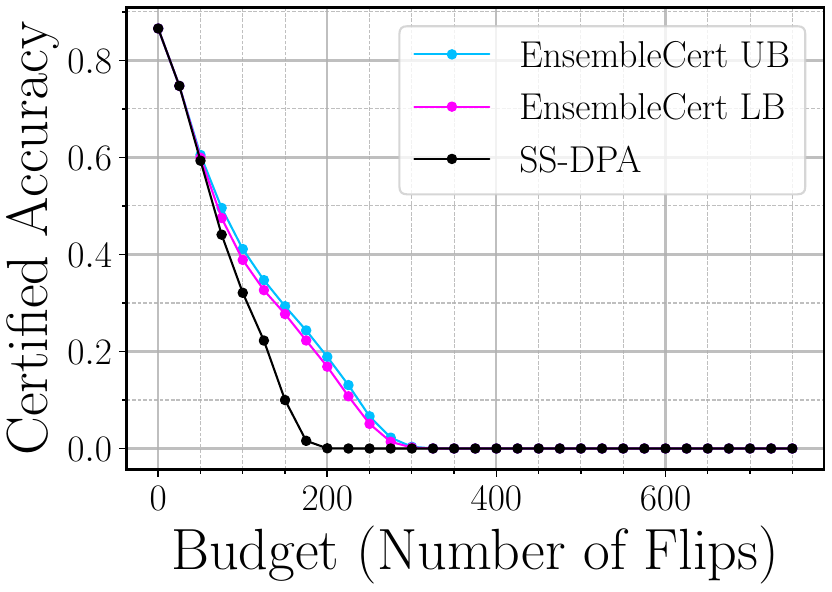}
        \caption{Partitions = 600}
    \end{subfigure}

    \caption{MNIST multi-class kernel regression results for different numbers of partitions (\(\lambda = 0.1\)).}
    \label{fig:mnist_kernel_reg_grid_lambda_0.1}
\end{figure}
\FloatBarrier

\clearpage
\textbf{Kernel SVM}

Results showing substantial improvement in the certified accuracy on white-box infusion for kernel SVM are observed in \Cref{fig:mnist_kernel_svm_grid}.

\begin{figure}[h]
    \centering
    \begin{subfigure}[b]{0.3\textwidth}
        \centering
        \includegraphics[width=0.95\linewidth]{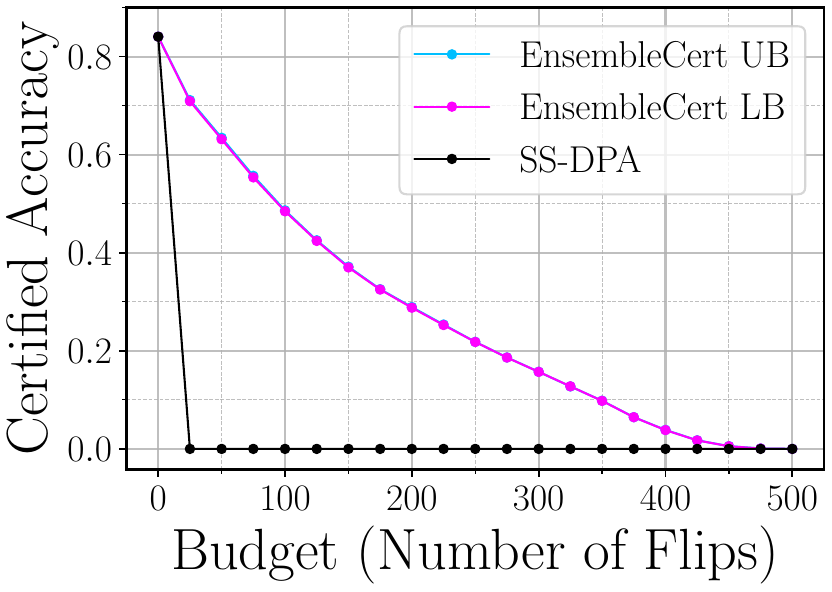}
        \caption{Partitions = 12}
    \end{subfigure}
    \begin{subfigure}[b]{0.3\textwidth}
        \centering
        \includegraphics[width=0.95\linewidth]{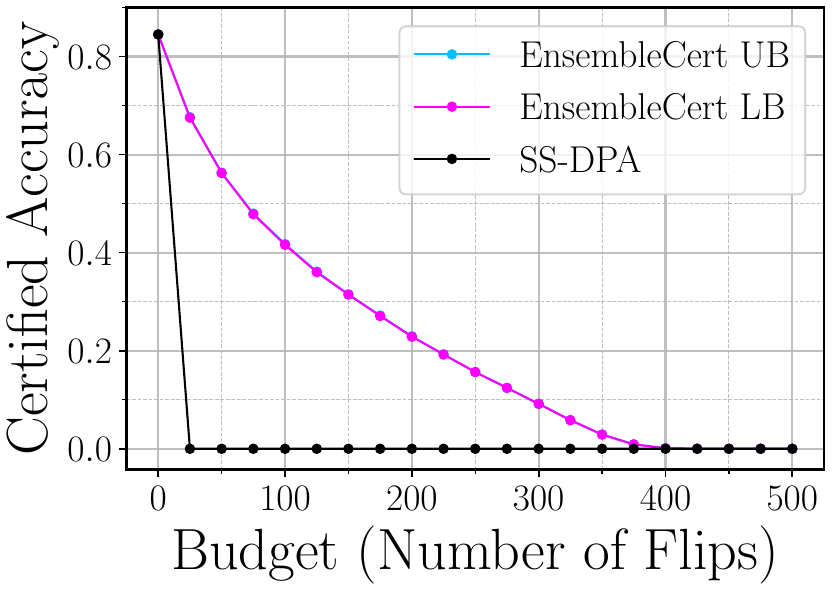}
        \caption{Partitions = 30}
    \end{subfigure}
    \begin{subfigure}[b]{0.3\textwidth}
        \centering
        \includegraphics[width=0.95\linewidth]{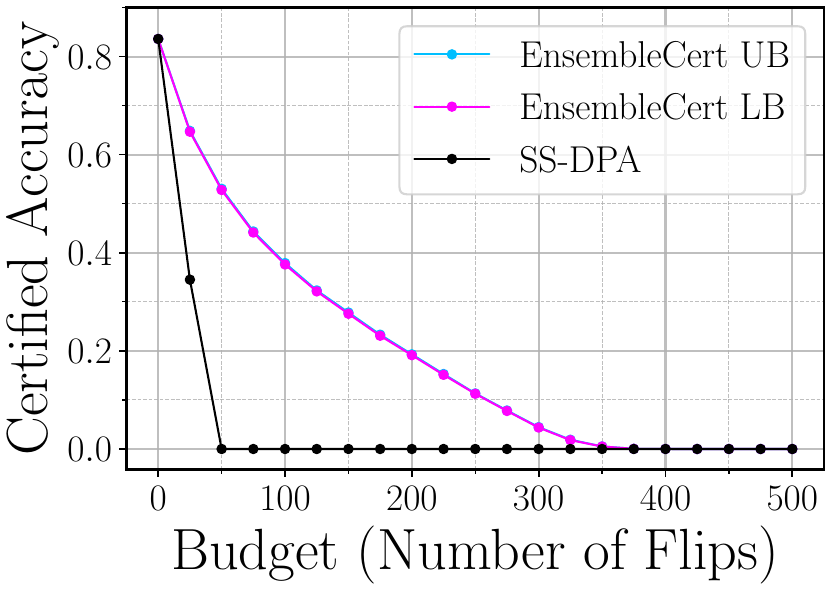}
        \caption{Partitions = 60}
    \end{subfigure}

    \begin{subfigure}[b]{0.3\textwidth}
        \centering
        \includegraphics[width=0.95\linewidth]{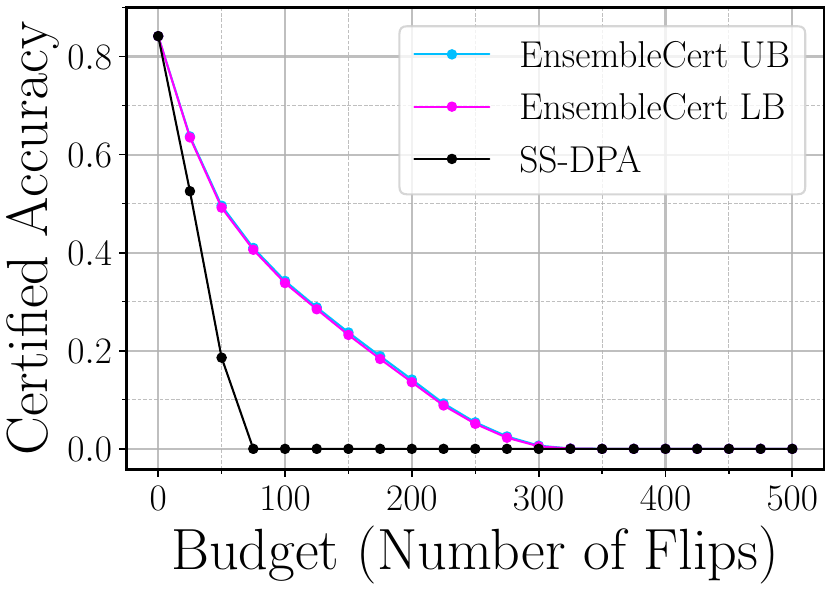}
        \caption{Partitions = 120}
    \end{subfigure}
    \begin{subfigure}[b]{0.3\textwidth}
        \centering
        \includegraphics[width=0.95\linewidth]{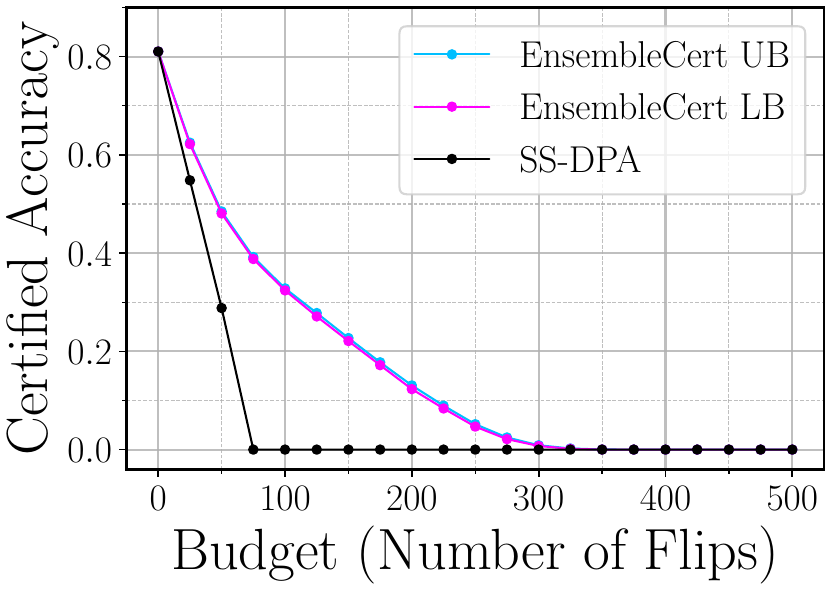}
        \caption{Partitions = 150}
    \end{subfigure}
    \begin{subfigure}[b]{0.3\textwidth}
        \centering
        \includegraphics[width=0.95\linewidth]{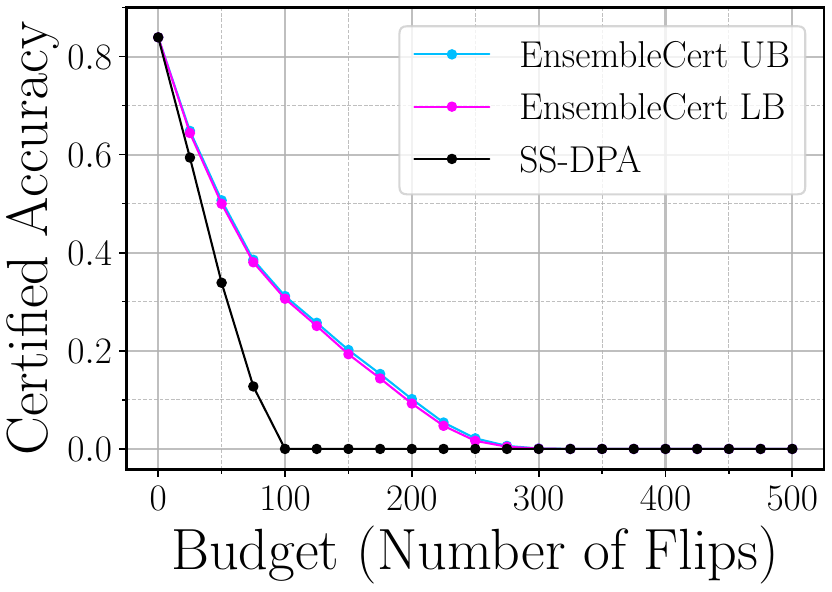}
        \caption{Partitions = 200}
    \end{subfigure}

    \begin{subfigure}[b]{0.3\textwidth}
        \centering
        \includegraphics[width=0.95\linewidth]{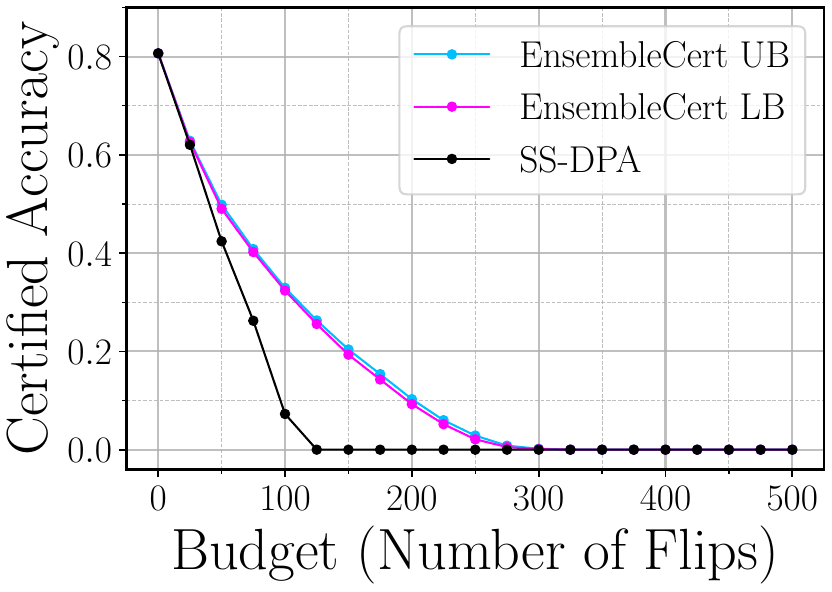}
        \caption{Partitions = 300}
    \end{subfigure}
    \begin{subfigure}[b]{0.3\textwidth}
        \centering
        \includegraphics[width=0.95\linewidth]{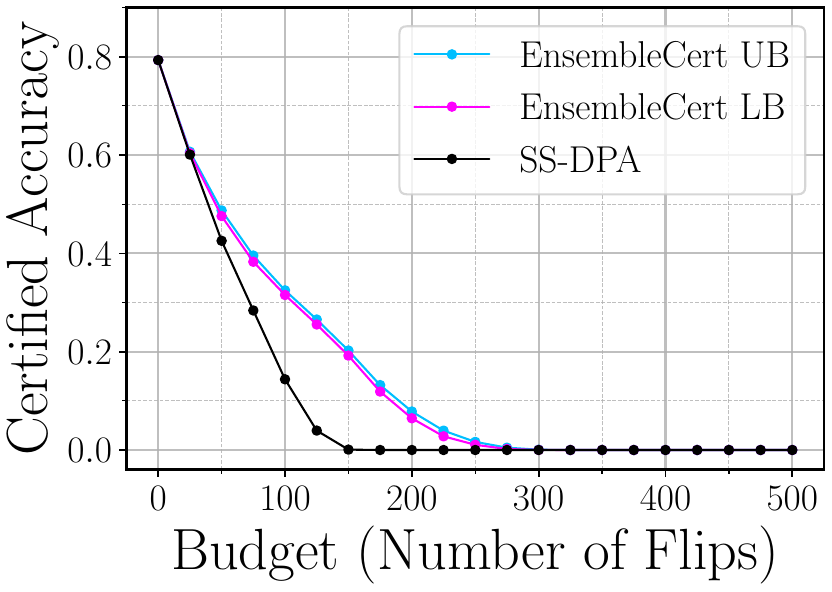}
        \caption{Partitions = 400}
    \end{subfigure}
    \begin{subfigure}[b]{0.3\textwidth}
        \centering
        \includegraphics[width=0.95\linewidth]{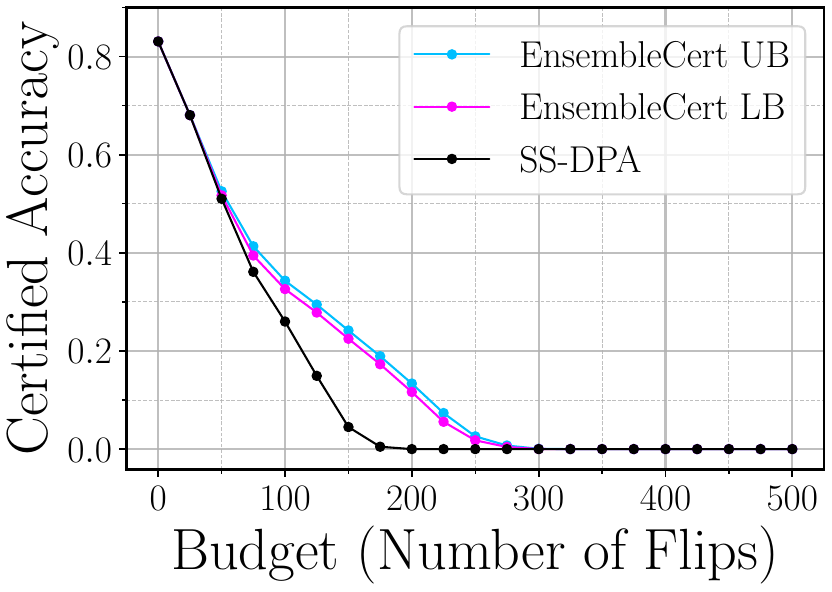}
        \caption{Partitions = 600}
    \end{subfigure}

    \caption{MNIST multi-class kernel SVM results for different numbers of partitions.}
    \label{fig:mnist_kernel_svm_grid}
\end{figure}
\FloatBarrier

\end{document}